\documentclass[twocolumn,letterpaper]{IEEEtran}
\usepackage[hidelinks,breaklinks]{hyperref}

\usepackage{algorithm}
\usepackage{algorithmic}
\usepackage{colortbl}
%
\usepackage{newfloat}
\usepackage{listings}

\usepackage[utf8]{inputenc} 
\usepackage[T1]{fontenc}    

\usepackage{url}            
\usepackage{booktabs}       
\usepackage{amsfonts}       
\usepackage{nicefrac}       
\usepackage{microtype}      
\usepackage{xcolor}         

\usepackage{amssymb}

\usepackage{mathtools}
\usepackage{amsthm}
\usepackage{comment}
\usepackage{multirow} 

\usepackage{microtype}
\usepackage{graphicx}
\usepackage{caption,subcaption}
\usepackage{booktabs} 
\usepackage{booktabs}
\RequirePackage{algorithm}
\RequirePackage{algorithmic}
\usepackage{threeparttable}


\newtheorem{theorem}{Theorem}
\newtheorem{lemma}{Lemma}
\theoremstyle{remark}

\newtheorem{Assumption}{Assumption}

\theoremstyle{remark}

\usepackage[capitalize,noabbrev]{cleveref}

\begin{document}

\title{DP-FedPGN: Finding Global Flat Minima for Differentially Private Federated Learning via Penalizing Gradient Norm}

\author{Junkang Liu, Yuxuan Tian, Fanhua Shang, \textit{Senior Member, IEEE},  Hongying Liu, \textit{Senior Member, IEEE}, \\ Yuanyuan Liu, Junchao Zhou, Daorui~Ding
    \thanks{\IEEEcompsocthanksitem Fanhua~Shang, Junkang~Liu, and Daorui~Ding, Junchao Zhou are with the School of Computer Science and Technology, College of Intelligence and Computing, Tianjin University, Tianjin, 300350, China. E-mail: fhshang@tju.edu.cn.\protect
        \IEEEcompsocthanksitem Hongying Liu is with the Medical College, Tianjin University, Tianjin, 300072, China. E-mail: hyliu2009@tju.edu.cn.\protect
        \IEEEcompsocthanksitem  Yuanyuan~Liu are with the School of Artificial Intelligence, Xidian University, China. E-mail: yyliu@xidian.edu.cn.}
}


\maketitle

\begin{abstract}
To prevent inference attacks in Federated Learning (FL) and reduce the leakage of sensitive information, Client-level Differentially Private Federated Learning (CL-DPFL) is widely used. However, current CL-DPFL methods usually result in sharper loss landscapes, which leads to a decrease in model generalization after differential privacy protection. By using Sharpness Aware Minimization (SAM), the current popular federated learning methods are to find a local flat minimum value to alleviate this problem. However, the local flatness may not reflect the global flatness in CL-DPFL. Therefore, to address this issue and seek global flat minima of models, we propose a new CL-DPFL algorithm, DP-FedPGN, in which we introduce a global gradient norm penalty to the local loss to find the global flat minimum. Moreover, by using our global gradient norm penalty, we not only find a flatter global minimum but also reduce the locally updated norm, which means that we further reduce the error of gradient clipping. From a theoretical perspective, we analyze how DP-FedPGN mitigates the performance degradation caused by DP. Meanwhile, the proposed DP-FedPGN algorithm eliminates the impact of data heterogeneity and achieves fast convergence. We also use R\'enyi DP to provide strict privacy guarantees and provide sensitivity analysis for local updates. Finally, we conduct effectiveness tests on both ResNet and Transformer models, and achieve significant improvements in six visual and natural language processing tasks compared to existing state-of-the-art algorithms. 
The code is available at 
\url{https://github.com/junkangLiu0/DP-FedPGN}
\end{abstract}

\begin{IEEEkeywords}
Federated learning, Differential privacy, Flat minima, Federated learning generalization, Data heterogeneity
\end{IEEEkeywords}

\section{Introduction}
\begin{figure}[tb]
	\centering
	\begin{minipage}[b]{0.155\textwidth}
		\centering
		\subcaptionbox{DP-FedAvg}{\includegraphics[width=\textwidth]{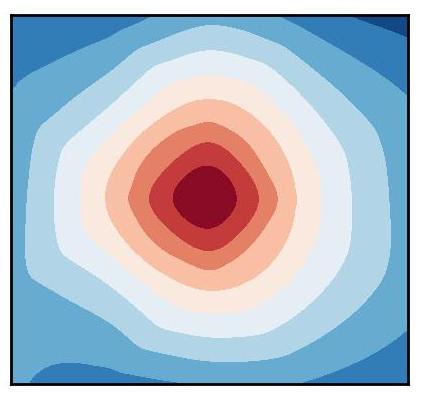}}
	\end{minipage}
	\begin{minipage}[b]{0.155\textwidth}
		\centering
		\subcaptionbox{DP-FedSAM}{\includegraphics[width=\textwidth]{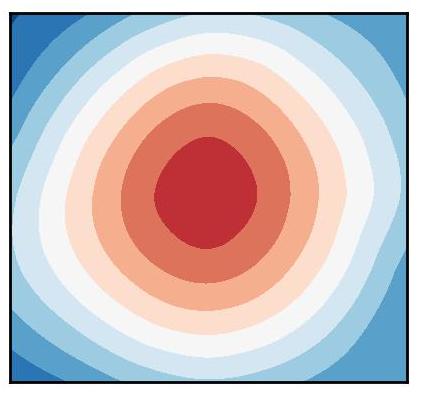}}
	\end{minipage}
	\begin{minipage}[b]{0.155\textwidth}
		\centering
		\subcaptionbox{DP-FedPGN}{\includegraphics[width=\textwidth]{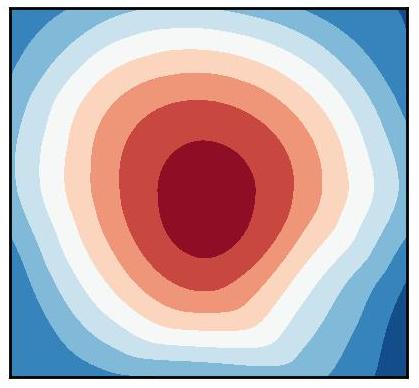}}
	\end{minipage}
\begin{minipage}[b]{0.165\textwidth}
  \centering
  \subcaptionbox{Low heterogeneity}{%
    \raisebox{2mm}{\includegraphics[width=\textwidth]{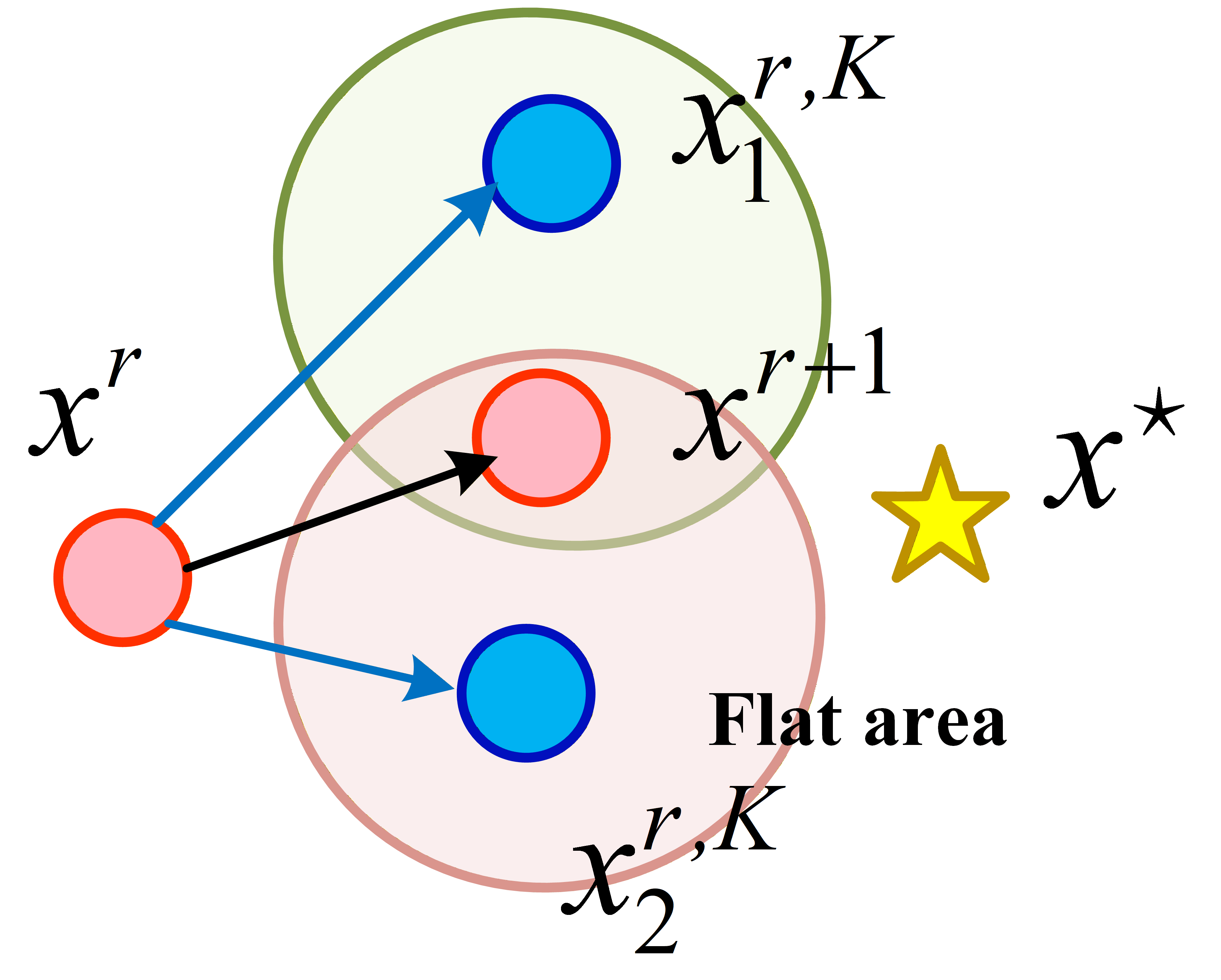}}%
  }
\end{minipage}
	\begin{minipage}[b]{0.15\textwidth}
		\centering
		\subcaptionbox{high heterogeneity}{\includegraphics[width=\textwidth]{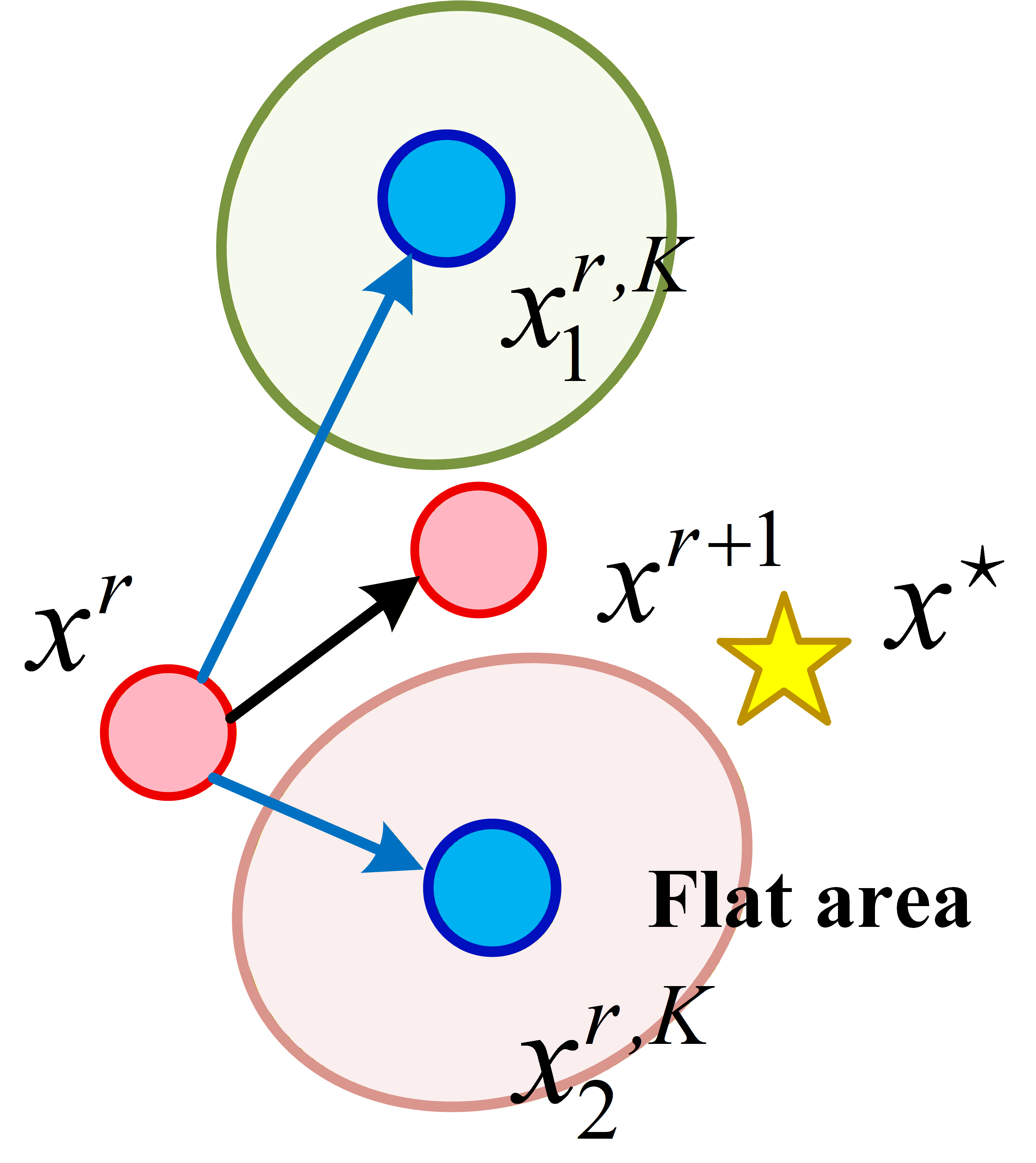}}
	\end{minipage}
    \begin{minipage}[b]{0.16\textwidth}
    \centering
    \subcaptionbox{DP-FedPGN}{%
    \raisebox{3mm}{\includegraphics[width=\textwidth]{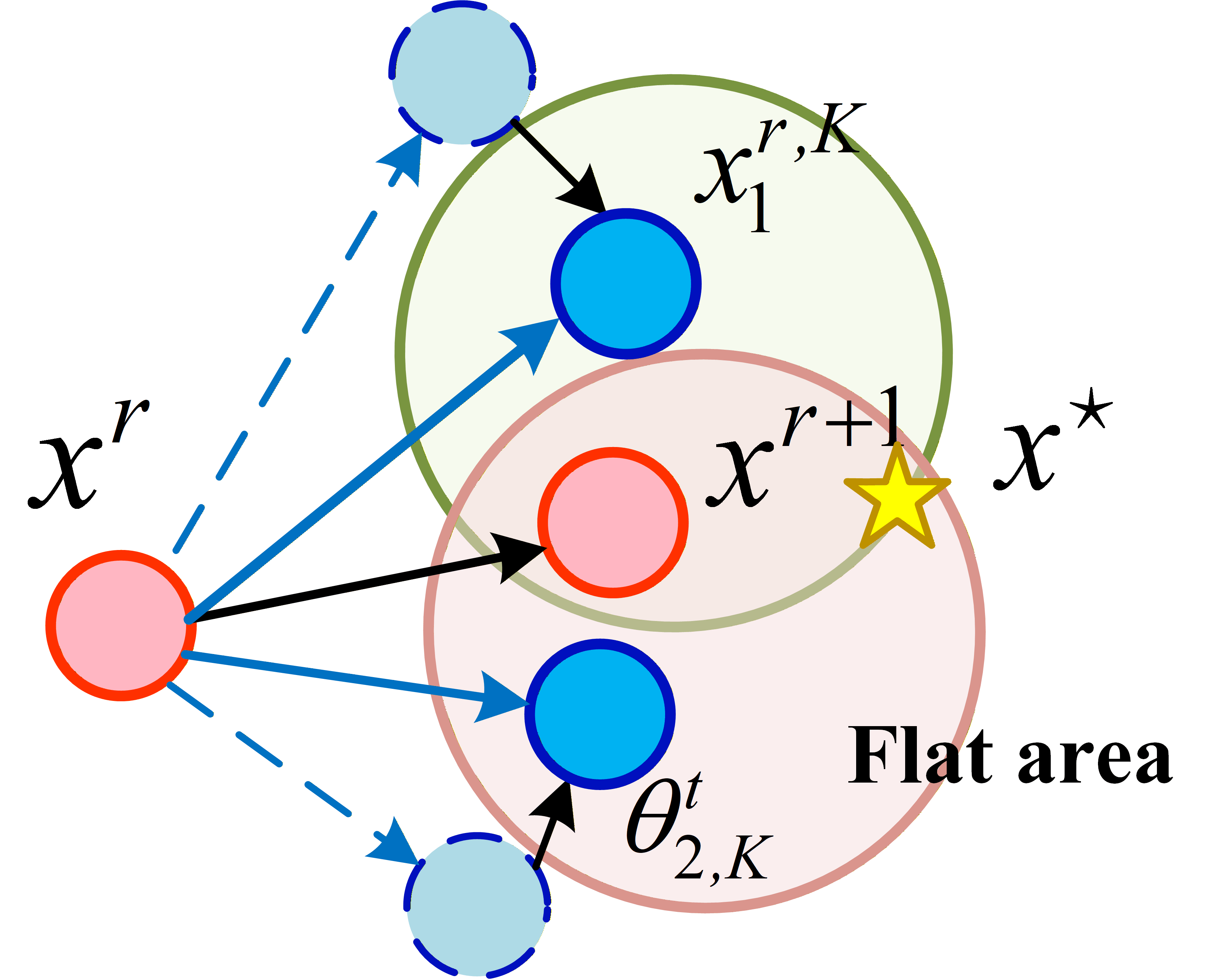}}%
    }
    \end{minipage}
		\caption{The global loss surface for DP-FedAvg \cite{McMahan2018learning}, DP-FedSAM \cite{DP_FedSAM}, and the proposed DP-FedPGN method with ResNet-18 on CIFAR100 in the case of Dirichlet-0.1.}
		\label{figure 1}
\end{figure}
Federated learning (FL) \cite{mcmahan2017communication} enables the training of collaborative models among distributed clients without sharing raw data. Although FL algorithms aim to protect privacy, privacy leakage remains a critical concern \cite{kairouz2021advances,10645291,11219205,LI2026108038,10566979}, as adversaries can infer sensitive client information through model updates. Client-level Differentially Private Federated learning (CL-DPFL) \cite{cheng2022differentially,geyer2017differentially,hu2022federated, KairouzL2021the, liu2024fedbcgd,liuimproving,wu2025llm,wu2025prompt,10.1145/3746027.3755226}, has emerged as a practical standard to safeguard the participation of individual clients, which ensures that adversaries cannot discern whether a client user contributed to training by observing the aggregated model updates\cite{yang2024learning,lin2025semantically,huang2022feature}. Although differential privacy at the client level provides strong privacy guarantees, it can lead to significant performance degradation of existing FL methods.

This phenomenon stems from two critical mechanisms: (1) Gradient clipping bounds the norm of local updates to control sensitivity. This operation inevitably discards valuable information beyond the threshold, particularly harming models trained on non-IID data, where updates exhibit high variability \cite{cheng2022differentially}. (2) The Gaussian noise scaled by the clipping norm introduces variance across clients, which exacerbates model inconsistency in heterogeneous data settings and damages the accuracy of the global model \cite{zhang2022understanding}.
\begin{figure*}[tb]
	\centering
	\begin{minipage}[t]{0.23\textwidth}
		\centering
		\subcaptionbox{DP-FedSAM-Global}{\includegraphics[width=\textwidth]{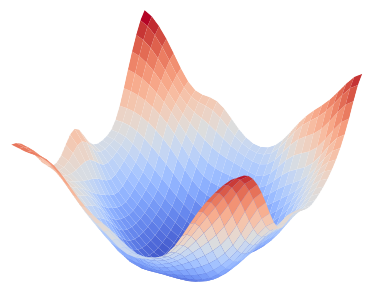}}
	\end{minipage}
	\begin{minipage}[t]{0.23\textwidth}
		\centering
		\subcaptionbox{DP-FedSAM-Local}{\includegraphics[width=\textwidth]{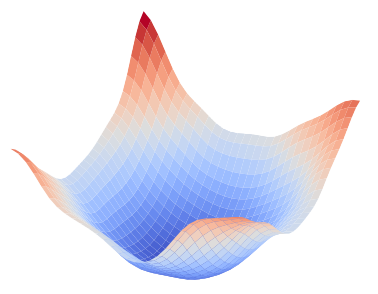}}
	\end{minipage}
	\begin{minipage}[t]{0.23\textwidth}
		\centering
		\subcaptionbox{DP-FedPGN-Global}{\includegraphics[width=\textwidth]{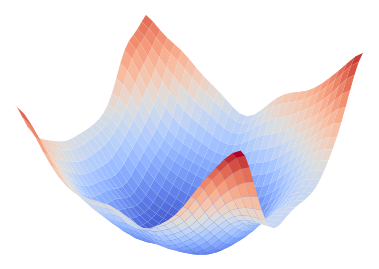}}
	\end{minipage}
	\begin{minipage}[t]{0.23\textwidth}
		\centering
		\subcaptionbox{DP-FedPGN-Local}{\includegraphics[width=\textwidth]{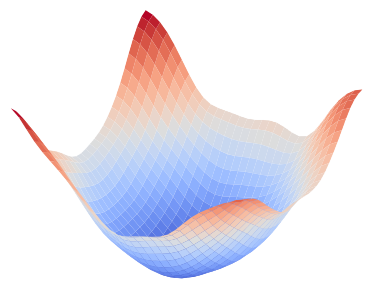}}
	\end{minipage}
		\caption{Loss landscapes of global and local models trained by DP-FedSAM and DP-FedPGN with ResNet-18 on CIFAR100 in the case of Dirichlet-0.1. DP-FedPGN global model is flatter than DP-FedSAM global model.}
		\label{figure 2}
\end{figure*}
A lot of work has been done to address these two issues, and Fed-SMP \cite{Rui2022Federated} mitigates the impact of noise on model updates through sparse updates. DP-FedAvg-BLUR \cite{cheng2022differentially} reduces the threshold for local model variation by adding bounded local update regularization to the loss function. However, these methods have not fundamentally solved the problem of severe performance degradation in CL-DPFL models.

Our extensive experiments demonstrate that the composite effect of gradient clipping and Gaussian noise disrupts the geometry of the loss landscape, trapping the model in sharp minima in Figure \ref{figure 1}, which is the fundamental reason for the severe performance degradation of CL-DPFL models. Numerous studies \cite{foret2020sharpness,izmailov2018averaging} have shown that sharp minima can reduce the model's generalization ability and robustness. The latest method, DP-FedSAM \cite{DP_FedSAM}, attempts to flatten local loss landscapes by locally using  Sharpness-Aware Minimization (SAM) \cite{kwon2021asam}, thereby mitigating the model's reduced generalization caused by DP noise. However, DP-FedSAM mainly focuses on reducing local sharpness without aligning local and global flatness objectives. We find through extensive experiments that in DPFL, the loss landscape flatness of local models may not reflect the loss landscape flatness of the global model after server aggregation in DP-FedSAM, resulting in the aggregated model being located in sharp areas (see Figure \ref{figure 1} (b) and \ref{figure 2} (a, b)).


To address this issue, 
we impose an additional penalty on the gradient norm of the loss function of the global model. The motivation for penalizing the gradient norm of the global loss function is to encourage the optimizer to find the minima located in a relatively flat neighborhood region, as such flat minima have been shown to achieve better model generalization than sharp minima \cite{foret2020sharpness,izmailov2018averaging}. Figure \ref{figure 4} provides a simple example that intuitively illustrates the relationship between gradient norm and minimum flatness.
Previous literature \cite{zhao2022penalizing} has shown that the model is flatter at the minima where the gradient norm is smaller. However, in federated learning, accessing the gradient of the global model is impractical, so we use the global variation of the model to estimate the gradient of the global model. We refer to this algorithm as differentially private federated learning via global penalizing gradient norm (DP-FedPGN). Through extensive experiments, it has been shown that DP-FedPGN can not only find a flatter global minimum than DP-FedSAM and DP-FedAvg
in Figure \ref{figure 1}(b,c), \ref{figure 2}, and \ref{figure 3}, but also lower the threshold for local model variation in Figure \ref{figure 6}.

In addition, Laplacian smoothing (LS) has been widely proven to reduce the noise of gradients, such as in stochastic gradient descent (SGD) \cite{osher2022laplacian}. Laplacian smoothing can also reduce the Gaussian noise introduced by DP \cite{wang2020dp,DP-Fed-LS}. Moreover, our experiment reveals that Laplacian smoothing can also make the model's minima flatter in Figure \ref{figure 8}. Therefore, we add Laplacian smoothing to DP-FedPGN and proposed the DP-FedPGN-LS algorithm.
\begin{figure}[tb]
    \centering
    \includegraphics[scale=0.25]{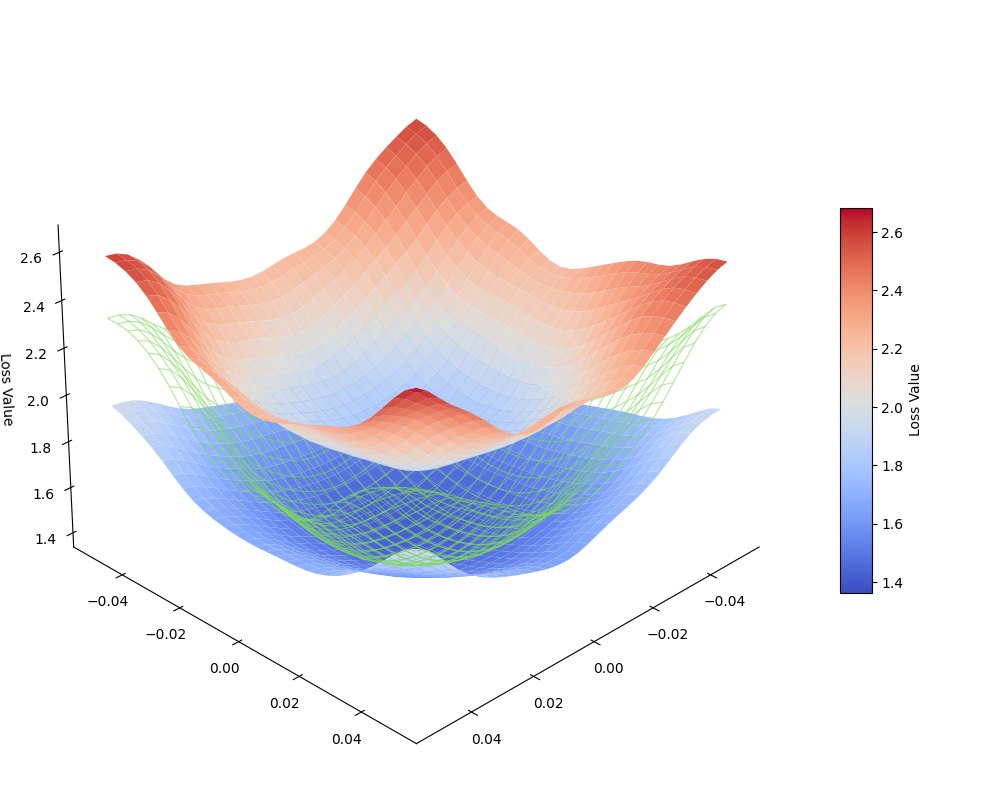}
    \caption{Comparison of loss landscapes of DP-FedAvg (up), DP-FedSAM (middle), and DP-FedPGN (down) on CIFAR10 with ResNet-18 in the case of Dirichlet-0.1.}
    \label{figure 3}
\end{figure}

\subsection{Motivation}
The existing DPFL approaches, including DP-FedAvg \cite{McMahan2018learning}, Fed-SMP \cite{Rui2022Federated}, DP-FedAvg with BLUR and LUS \cite{cheng2022differentially}, and DP-FedSAM \cite{DP_FedSAM}, all suffer from a critical limitation: exacerbated client-side model divergence and convergence to sharp minima in the global model. These issues primarily stem from the dual mechanisms of gradient truncation and stochastic noise injection. While DP-FedSAM attempts to address optimization challenges through local Sharpness-Aware Minimization (SAM), minimizing local sharpness in this way may not be effective in guiding the aggregated model to a global flat minimum in heterogeneous data distributions. Unlike existing works, we try to mitigate these problems by imposing a penalty term to reduce the gradient norm and flatten the global loss landscape. 
Based on the above research, we are the first to introduce the gradient norm penalty into the DPFL framework while considering global flatness so as to effectively mitigate the performance degradation in heterogeneous data settings. 

\subsection{Contributions}
In this paper, we propose a new DPFL algorithm and detailed theoretical analysis to simultaneously improve the generalization and optimization of DPFL, which can be summarized as follows:\\
\textbf{$\bullet$ } We find through experiments that in differentially private federated learning, models often fall into sharp minima. Simultaneously there is a pronounced discrepancy between the flatness of the local loss landscape and the global loss landscape. To address this issue, we propose the global penalty on gradient norm (GPGN) to effectively guide models toward global flat minima.\\
\textbf{$\bullet$ } We propose DP-FedPGN, a novel and effective algorithm that minimizes global sharpness by global gradient norm penalty. We also find that Laplacian smoothing can further flatten the loss landscape of the global model. This insight motivates the development of DP-FedPGN-LS. Theoretically, we demonstrate that our algorithms converge faster than DP-FedSAM and they are more robust to data heterogeneity.\\
\textbf{$\bullet$ } Empirically, we consider different data heterogeneity and neural network architectures in DPFL. We conduct comprehensive experiments of our algorithms on six visual and natural language processing benchmark datasets to show superior performance as well as the efficiency and ability to minimize global sharpness.

\section{ Related work}
\textbf{$\bullet$ CL-DPFL.} Prior research in client-level differentially private federated learning (CL-DPFL) has explored diverse strategies to balance privacy and model performance. DP-FedAvg \cite{McMahan2018learning}, as the pioneering framework, enforces privacy by clipping local updates to a fixed norm and injecting Gaussian noise. Fed-SMP \cite{Rui2022Federated} mitigates noise magnitude by sparsifying updates, retaining only the top-k parameters. Methods like Bounded Local Update Regularization (BLUR) and Local Update Sparsification (LUS) \cite{cheng2022differentially} employ bounded local update regularization to implicitly constrain update norms and dynamically prune updates based on magnitude thresholds.
Recently, DP-FedSAM \cite{DP_FedSAM} diverges from these paradigms by integrating Sharpness-Aware Minimization (SAM) into local training, explicitly optimizing for flat minima to enhance noise robustness. However, its local optimizations fail to fully mitigate the accuracy degradation caused by client-level DP noise.\\
\textbf{$\bullet$ Flat Minimization.} The pursuit of flat minima has emerged as a pivotal strategy to enhance generalization in deep learning, tracing back to Hochreiter and Schmidhuber's seminal observation \cite{hochreiter1997flat} that flat minima correspond to better network generalization. Subsequent works have investigated the relationship between flat minima and generalization in deep learning \cite{keskar2017on,neyshabur2017exploring,dinh2017sharp}. While SGD and its momentum variants naturally exhibit an implicit bias towards flat regions through their noise resilience \cite{virmaux2018lipschitz,xie2021a}, modern approaches explicitly optimize for flatness through distinct paradigms. Stochastic Weight Averaging (SWA) \cite{izmailov2018averaging} leverages SGD's inherent exploration by averaging weights from later training stages, effectively constructing solutions in wide basins through cyclical or constant learning rate sampling. Sharpness-Aware Minimization (SAM) \cite{foret2020sharpness} formalizes flatness seeking as a minimax optimization problem, simultaneously minimizing base loss and worst-case perturbation within a local neighborhood. The recently proposed Penalized Gradient Norm (PGN) method \cite{zhao2022penalizing} establishes direct theoretical connections between gradient norms and curvature, explicitly regularizing the Lipschitz constant of loss landscapes. Our work extends PGN to DPFL frameworks, achieving global flatness through gradient norm penalty.

\section{Federated Learning Settings}
\subsection{FL Problem Setup}
FL aims to optimize model parameters with  local clients, i.e., minimizing the following population risk:
\begin{equation}
F(\boldsymbol{\boldsymbol{x}})=\frac{1}{N} \sum_{i=1}^N\left(F_i(\boldsymbol{\boldsymbol{x}}):=\mathbb{E}_{\zeta_i\sim \mathcal{D}_i}\left[F_i\left(\boldsymbol{\boldsymbol{x}} ; \zeta_i\right)\right]\right).
	\label{eq 1}
\end{equation}
$F_i$ represents the loss function on client $i$. $\mathbb{E}_{\zeta_i \sim \mathcal{D}_i}[\cdot]$ denotes the conditional expectation with respect to the sample $\zeta_i$. $\zeta_i$ is drawn from  distribution $\mathcal{D}_i$ in client $i$.  $N$ is the number of clients.

\begin{figure}[tb]
    \centering
    \includegraphics[scale=0.6]{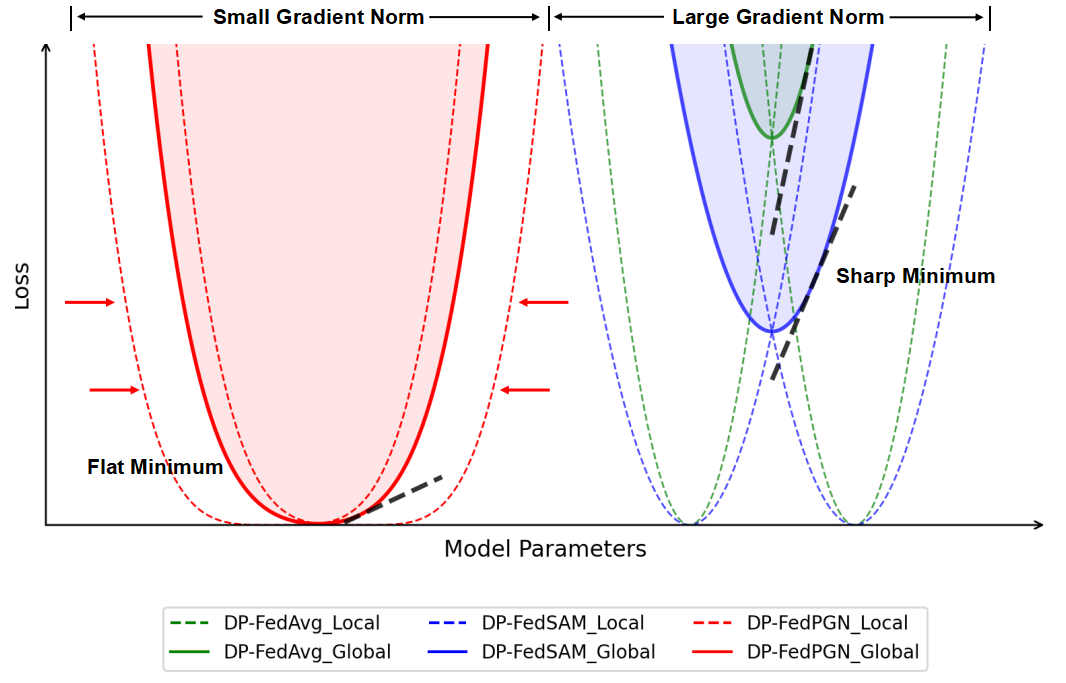}
    \caption{This example illustrates the relationship between the gradient norm of loss and the flatness of its landscape. The smaller the gradient norm, the flatter the loss landscape. Our DP-FedPGN has smaller gradient norm than DP-FedSAM and DP-FedAvg. Moreover, the flat minima of DP-FedPGN is closer than others under data heterogeneity settings.}
    \label{figure 4}
\end{figure}
\subsection{Differential Privacy}
Differential Privacy (DP) \cite{dwork2014algorithmic} can mitigate privacy
risks in federated learning by introducing noise into machine learning models. \\
\textbf{Definition 1 (Rényi Differential Privacy \cite{mironov2017renyi}).} \textit{Let $\alpha > 1$ and $\epsilon \geq 0$ be privacy parameters. A randomized mechanism $\mathcal{M}$ satisfies $\epsilon$-Rényi differential privacy of order $\alpha$, denoted as ($\alpha, \epsilon$)-RDP, if for all pairs of adjacent datasets $ D, D' \in \mathcal{D} $, the Rényi divergence satisfies:}
\begin{align}
	D_{\alpha} \left[ \mathcal{M}(D) \| \mathcal{M}(D') \right] := \frac{1}{\alpha - 1} \log \mathbb{E} \left[ \left( \frac{\mathcal{M}(D)}{\mathcal{M}(D')} \right)^{\alpha} \right] \leq \epsilon.
\end{align}

RDP can be used as a privacy definition, compactly and accurately representing guarantees on the tails of the privacy loss. Compared to ($\epsilon, \delta$)-differential privacy, Rényi differential privacy is a strictly stronger definition of privacy. The privacy quantification for each participant is conducted rigorously through the RDP framework, which provides tight bounds on individual privacy loss \cite{he2021tighter}.\\
\textbf{Definition 2 (Client-level DP \cite{McMahan2018learning}).}
\textit{A randomized algorithm $\mathcal{M}$ satisfies $(\epsilon,\delta)$-differential privacy if for any two adjacent datasets $ D, D' \in \mathcal{D} $ and for any subset of outputs $O \subseteq \mathcal{R}$ it holds that}
\begin{align}
	\Pr[\mathcal{M}(D) \in O] \leq e^{\epsilon} \Pr[\mathcal{M}(D') \in O] + \delta,
\end{align}
\textit{where the adjacency relation $U, U'$ is constructed by adding or removing records of any client.}\\
\textbf{Definition 3 ($l_{2}$ Sensitivity \cite{dwork2014algorithmic}).} \textit{Let $F$ be a function, the $L_2$-sensitivity of $F$ is defined as:}
\begin{align}
	S = \max_{\substack{D \simeq D'}} \lVert F(D) - F(D') \rVert_2,
\end{align}
\textit{where the maximization is taken over all pairs of adjacent datasets.}


\section{Proposed Algorithm: DP-FedPGN}

\definecolor{LightRed}{RGB}{255,182,193} 
\definecolor{LightBlue}{RGB}{173, 216, 230}

\begin{algorithm}[tb]
	\caption{\fcolorbox{LightBlue}{LightBlue}{DP-FedPGN}, \fcolorbox{LightRed}{LightRed}{DP-FedPGN-LS} Algorithm.}
    \label{algorithm 1}
	\begin{algorithmic}[1]
		\STATE {initial model $x^0$ and gradient estimate $g^0$, local learning rate $\eta$, global learning rate $\gamma$, momentum $\beta$}
		\FOR{$r = 0, \dots, R-1$}
		\FOR{each client $i \in \{1, \dots, S\}$ sampled from [$N$] in parallel}
		\STATE $x_i^{r,0} \gets \tilde{x}^{r}$
		\FOR{$k = 0, \dots, K-1$}
		\STATE $\delta_{i}^{r,k}=\rho \frac{\tilde{\boldsymbol{g}}^r}{\|\tilde{\boldsymbol{g}}^r\|}$ 
		\STATE $x_i^{r,k+1/2} \gets x_i^{r,k} + \delta_{i}^{r,k}$
		
		\STATE $g_i^{r,k} \gets \beta \nabla F_i(x_i^{r,k}+\delta_{i}^{r,k}; \xi_i^{r,k}) + (1 - \beta)\tilde{g}^r$
		\STATE $x_i^{r,k+1} \gets x_i^{r,k} - \eta g_i^{r,k}$
		\ENDFOR
		\STATE $\bar{\Delta}_i^t=x_i^{r, K}-\tilde{x}^{r}+(1-\beta) K \eta \tilde{g}^r$
		\STATE$\hat{\Delta}_i^t=\bar{\Delta}_i^t \cdot \min \left(1, \frac{C}{\left\|\bar{\Delta}_i^r\right\|_2}\right)$  $\triangleright$ Clipping \STATE$\tilde{\Delta}_i^t=\hat{\Delta}_i^t+\mathcal{N}\left(0, \sigma^2 C^2 \cdot \mathbf{I}_d /S\right) $ $\triangleright$ Adding Noise
		\STATE$ \Delta_i^t=\tilde{\Delta}_i^t-(1-\beta) K \eta \tilde{g}^r$
		\ENDFOR
		\STATE {\fcolorbox{LightBlue}{LightBlue}{$\tilde{g}^{r+1} \gets \frac{1}{\eta SK} \sum_{i=1}^S -\Delta_i^t$}} 
        \STATE {\fcolorbox{LightRed}{LightRed}{$\tilde{g}^{r+1} \gets A_{\sigma_{ls}}^{-1}(\frac{1}{\eta SK} \sum_{i=1}^S -\Delta_i^t$)}} $\triangleright$ Laplacian Smoothing
		\STATE $\tilde{x}^{r+1}\gets \tilde{x}^{r} - \gamma \tilde{g}^{r+1}$
		\ENDFOR
	\end{algorithmic}
\end{algorithm}

\subsection{Discussion}
To mitigate the performance degradation in DPFL due to sharp loss landscapes caused by differential privacy mechanisms, we propose augmenting the global objective with a gradient norm penalty term. This serves two key purposes: 1) Mitigating clipping-induced bias through gradient norm reduction; 2) Encouraging optimizers to converge to flat minima by reducing the global Lipschitz constant, which corresponds to smoother loss landscapes and better generalization.
By applying gradient norm penalty, the global objective $F(\boldsymbol{x})=\frac{1}{N} \sum_i F_i(\boldsymbol{x})$ is modified as:
\begin{equation}
	\tilde{F}(\boldsymbol{x})=\frac{1}{N} \sum_i F_i(\boldsymbol{x})+\lambda \cdot\left\|\nabla_{\boldsymbol{x}}\left(
	F(\boldsymbol{x})\right)\right\|.
\end{equation}
Here $\| \cdot \|$ simplistically denotes the $L^2$-norm and $\lambda$ is the penalty coefficient. The gradient norm penalty term $\lambda \cdot\left\|\nabla_{\boldsymbol{x}}F(\boldsymbol{x})\right\|$
explicitly smooths the global loss landscape, counteracting the sharp minima induced by DP noise. 
Then, we compute the gradient of the total loss:
\begin{equation}
\begin{aligned}
	& \nabla_x\tilde{F}(\boldsymbol{x})=\nabla\left[\frac{1}{N} \sum_i F_i(\boldsymbol{x})+\lambda \cdot\left\| \nabla_{\boldsymbol{x}} F(\boldsymbol{x})\right\|\right] \\
	& =\nabla_x F(\boldsymbol{x})+\lambda \cdot \nabla_{\boldsymbol{x}}\left\| F(\boldsymbol{x})\right\|.
\end{aligned}
\end{equation}
For the penalty term, it can be calculated by the chain rule:
\begin{equation}
\begin{aligned}
\nabla_{\boldsymbol{x}}\|F(\boldsymbol{x})\|=\nabla_{\boldsymbol{x}}^2 F(\boldsymbol{x}) \frac{\nabla_{\boldsymbol{x}} F(\boldsymbol{x})}{\left\|\nabla_{\boldsymbol{x}} F(\boldsymbol{x})\right\|}. \\
\end{aligned}
\end{equation}
Here, $\nabla_{\boldsymbol{x}}^2 F(\boldsymbol{x})$ is the Hessian matrix $\mathbf{H}$. Directly computing such a Hessian matrix becomes computationally prohibitive due to the high dimensionality of the weight space, which necessitates the use of appropriate approximation techniques for this calculation.
We employ a first-order Taylor expansion to approximate the Hessian-vector product:
\begin{equation}
\begin{aligned}
	& \mathbf{H} \frac{\nabla_{\boldsymbol{x}} F(\boldsymbol{x})}{\left\|\nabla_{\boldsymbol{x}} F(\boldsymbol{x})\right\|} \approx \frac{\nabla_{\boldsymbol{x}} F\left(\boldsymbol{x}+\rho \frac{\nabla_{\boldsymbol{x}} F(\boldsymbol{x})}{\left\|\nabla_{\boldsymbol{x}} F(\boldsymbol{x})\right\|}\right)-\nabla_{\boldsymbol{x}} F(\boldsymbol{x})}{\rho},
\end{aligned}
\end{equation}
where $ \rho $ is a small scalar.\\
Now, substituting the approximation into the gradient:
\begin{equation}
\begin{aligned}
	& \nabla_{\boldsymbol{x}}\tilde{F}(\boldsymbol{x})=\nabla_{\boldsymbol{x}} F(\boldsymbol{x})+\frac{\lambda}{\rho} \cdot\left(\nabla_{\boldsymbol{x}} F\left(\boldsymbol{x}+\rho \frac{\nabla_{\boldsymbol{x}} F(\boldsymbol{x})}{\left\|\nabla_{\boldsymbol{x}} F(\boldsymbol{x})\right\|}\right)-\nabla_{\boldsymbol{x}} F(\boldsymbol{x})\right) \\
	& =(1-\beta) \nabla_{\boldsymbol{x}} F(\boldsymbol{x})+\beta \nabla_{\boldsymbol{x}} F\left(\boldsymbol{x}+\rho \frac{\nabla_{\boldsymbol{x}} F(\boldsymbol{x})}{\left\|\nabla_{\boldsymbol{x}} F(\boldsymbol{x})\right\|}\right),
\end{aligned}
\end{equation}
where $\beta = \frac{\lambda}{\rho}$. As for the local client, there is:
\begin{equation}
\begin{aligned}
	& \nabla_{\boldsymbol{x}} \tilde{F}_i(\boldsymbol{x})=(1-\beta) \nabla_{\boldsymbol{x}} F_i(\boldsymbol{x})+\beta \nabla_{\boldsymbol{x}} F_i\left(\boldsymbol{x}+\rho \frac{\nabla_x F_i(\boldsymbol{x})}{\left\|\nabla_{\boldsymbol{x}} F_i(\boldsymbol{x})\right\|}\right)
\end{aligned}
\end{equation}
However, the inherent data heterogeneity induces divergence between local and global models, thereby resulting in 
$
\nabla_{\boldsymbol{x}} F_i(\boldsymbol{x}) \neq \nabla_{\boldsymbol{x}} F(\boldsymbol{x}). 
$
To mitigate client drift induced by data heterogeneity, we replace local gradient computation with global model variation to align local and global optimization direction: 
\begin{equation}
\begin{aligned}
	& \nabla_{\boldsymbol{x}} \tilde{F}(\boldsymbol{x})\approx(1-\beta) g+\beta \nabla_{\boldsymbol{x}} F_i\left(\boldsymbol{x}+\rho \frac{g}{\left\|g\right\|}\right),
\end{aligned}
\end{equation}
\begin{equation}
\begin{aligned}
	&g=\frac{1}{N K \eta} \sum_i \Delta_{i}\space,  \space\space \space  g\approx\nabla_{\boldsymbol{x}} F(\boldsymbol{x})\\
\end{aligned}
\end{equation}
where $\Delta_{i}$ denotes the local update of client $i$. 
\begin{table*}[tb]
	\centering
	\caption{Comparison between DP-FedPGN and DP-FedSAM.} 
	\label{table 1}
	\setlength{\tabcolsep}{0.5pt}	\begin{tabular}{llcccccc}
		\midrule[1pt]
		\centering
		& \textbf{Research work}   &  \textbf{Minimizing Target }  &  \textbf{Local Perturbation} &\textbf{Convergence Rate}& \textbf{Assumption} & \textbf{Computation Cost}\\
		\midrule[0.5pt]		
		&DP-FedSAM \cite{DP_FedSAM}& Local sharpness & $\rho \frac{\nabla F_i\left(x_{i}^{r,k}\right)}{\| \nabla F_i(x_{i}^{r,k} )\|}$ & $\mathcal{O}\left(\sqrt{\frac{L \Delta (\sigma_l^2+\sigma_g^2)}{S K R}}+\frac{L \Delta}{R}+\frac{\sigma^2 C^2}{S^2}\right)$&Data Heterogeneity&2$\times$\\
        
        \rowcolor{LightBlue}
		&\textbf{DP-FedPGN} (ours) & Global Sharpness & $\rho \frac{\tilde{g}^{r}}{\|\tilde{g}^{r}\|}$&  $\mathcal{O}\left(\sqrt{\frac{L \Delta \sigma_l^2}{S K R}}+\frac{L \Delta}{R}+\frac{\sigma^2 C^2}{S^2}\right)$&Without data heterogeneity&1$\times$\\
		\midrule[1.5pt]
	\end{tabular}
	\label{table com}
\end{table*}
\subsection{DP-FedPGN Method}
Our proposed method is displayed in Algorithm \ref{algorithm 1}. In each communication round $r \in\left\lbrace 0, 1, \dots, R-1\right\rbrace $, a subset of $S$ clients is chosen from the total $N$ participants with sampling ratio $q$ to amplify the privacy guarantee. At each local iteration $k \in \left\lbrace 0, 1, \dots, K-1\right\rbrace $, each client $i \in \left\lbrace 1, 2, \dots, S\right\rbrace $ updates the local model as:
\begin{equation}
x_i^{r,k+1} = x_i^{r,k} - \eta g_i^{r,k},
\end{equation}
\begin{equation}
g_i^{r,k} = \beta \nabla F_i(x_i^{r,k}+\delta_{i}^{r,k}; \xi_i^{r,k}) + (1 - \beta)g^r,
\delta_{i}^{r,k}=\rho\frac{\boldsymbol{g}^r}{\|\boldsymbol{g}^r\|},
\end{equation}
where $\beta$ is the momentum rate derived by adding a gradient norm penalty term to the loss function and $g^r$ is the $r$-th round global pseudo-gradient, replacing local gradients to ensure our algorithm is oriented towards global flatness. After that, we implement gradient clipping and add Gaussian noise to ensure client-level DP. Note that the momentum term originates from aggregated updates already processed with differential privacy. Therefore, this term is excluded in the current round to avoid amplifying errors from repeated DP operation:
\begin{equation}
	\hat{\Delta}_i^t=x^r-x_i^{r, K}+(1-\beta) K \eta g^r \\
\end{equation}
For the processed local updates $\hat{\Delta}_i^t$, we perform a clipping operation using a threshold $C$ and add Gaussian noise with a noise standard deviation $\sigma$:
\begin{equation}
\hat{\Delta}_i^t=\hat{\Delta}_i^t \cdot \min \left(1, \frac{C}{\left\|\hat{\Delta}_i^t\right\|_2}\right),
\end{equation}
\begin{equation}
\hat{\Delta}_i^t=\hat{\Delta}_i^t+\mathcal{N}\left(0, \sigma^2 C^2 \cdot \mathbf{I}_d / m\right) 
\end{equation}
Then, we reinstate the term and obtain the final local update:
\begin{equation}
	\Delta_i^t=\hat{\Delta}_i^t-(1-\beta) K \eta g^r\\
\end{equation}
Finally, we aggregate the changes of each client and calculate $g^{r+1}$, and $A_{\sigma_{ls}}^{-1}$ is the Laplacian smoothing matrix.

\section{Theoretical Analysis}

We analyze Convergence based on following assumptions: \\
\textbf{Assumption 1} \textit{(Smoothness).  $F_i$ is $L$-smooth for all $i \in$ $[N]$, 
\begin{equation}
\left\|\nabla F_i(\boldsymbol{x}_{1})-\nabla F_i(\boldsymbol{x}_{2})\right\| \leq L\|\boldsymbol{x}_{1}-\boldsymbol{x}_{2}\|
\end{equation}
for all $\boldsymbol{x}_{1}, \boldsymbol{x}_{2}$ in its domain and $i \in[N]$.}\\
\textbf{Assumption 2} \textit{(Bounded variance of data heterogeneity). The global variability of the local gradient of the loss function is bounded by $\sigma_g^2$, for all $i \in[N]$,} 
\begin{equation}
	\left\|\nabla F_i\left(\boldsymbol{x}\right)-\nabla F\left(\boldsymbol{x}\right)\right\|^2 \leq \sigma_g^2
\end{equation}
\textbf{Assumption 3}\textit{ (Bounded variance of stochastic gradient). The stochastic gradient $\nabla F_i\left(\boldsymbol{x}, \xi_i\right)$, computed by the $i$-th client of model parameter $\boldsymbol{x}$ using mini-batch $\xi_i$ is an unbiased estimator $\nabla F_i(\boldsymbol{x})$ with variance bounded by $\sigma^2$, i.e.,
\begin{equation}
	\mathbb{E}_{\xi_i}\left\|\nabla F_i\left(\boldsymbol{x}, \xi_i\right)-\nabla F_i(\boldsymbol{x})\right\|^2 \leq \sigma_l^2
\end{equation}
	$\forall i \in[N]$, where the expectation is over all local datasets.}\\
\textbf{Assumption 4} \textit{There exists a clipping constant $C$ independent of $i, r$, such that $\left\|\bar{\Delta}_i^r\right\| \leq C$.}

\begin{table*}[tb]
	\centering
	\setlength{\tabcolsep}{4pt}
	\caption{Comparison different data heterogeneity of the accuracy (\%) of each algorithm on  CIFAR10 and CIFAR100 in 300 communication rounds with $E=5$ and ResNet-10. Rounds denote the required rounds to achieve target accuracy.}
	\begin{tabular}{llcccccccc}
		\midrule[1.5pt]
		&\multicolumn{4}{c}{CIFAR10}
		&\multicolumn{4}{c}{CIFAR100}\\ 
		\cmidrule(lr){2-5} 
		\cmidrule(lr){6-9} 
		Method & \multicolumn{2}{c}{Dirichlet-0.1} &  \multicolumn{2}{c}{Dirichlet-0.6} &
		\multicolumn{2}{c}{Dirichlet-0.1} &  \multicolumn{2}{c}{Dirichlet-0.6} \\
		\cmidrule(lr){2-3} \cmidrule(lr){4-5} 
		\cmidrule(lr){6-7} \cmidrule(lr){8-9} 
		& Acc.(\%) & Rounds & Acc.(\%)& Rounds
		& Acc.(\%) & Rounds & Acc.(\%)& Rounds\\
		& 300R & 30\%  &300R & 30\%
		& 300R & 10\%  &300R & 10\%\\
		\midrule
        DP-FedAvg \cite{McMahan2018learning} &45.53 $\pm 0.11$  &40 (1$\times$) &56.47 $\pm 0.65$ &19 (1$\times$) &17.82 $\pm 0.38$ &72 (1$\times$) &21.82 $\pm 0.17$ &42 (1$\times$)  \\
        		
    Fed-SMP \cite{Rui2022Federated}  &45.19 $\pm0.53$ &40 (1$\times$) &55.28 $\pm0.24$ &18 (1.1$\times$) &16.91 $\pm0.72$ &84 (0.9$\times$) &21.15 $\pm0.09$ &44 (1$\times$) \\
    		
    DP-FedAvg-BLUR \cite{cheng2022differentially} &45.84 $\pm0.33$ &40 (1$\times$) &56.31 $\pm0.48$ &18 (1.1$\times$) &16.95 $\pm0.15$ &63 (1.1$\times$) &21.83 $\pm0.61$ &44 (1$\times$)  \\
    		
    DP-FedAvg-LS \cite{DP-Fed-LS}   &46.61 $\pm0.19$   &49 (0.8$\times$) &56.20 $\pm0.55$ &16 (1.2$\times$) &20.63 $\pm0.27$ &61 (1.2$\times$) &24.23 $\pm0.42$ &35 (1.2$\times$) \\
    		
    DP-FedSAM \cite{DP_FedSAM} &38.41 $\pm0.68$  &118 (0.3$\times$)  &45.90 $\pm0.03$ &30 (0.6$\times$)  &13.32 $\pm0.77$ &136 (0.5$\times$) &16.44 $\pm0.29$ &75 (0.6$\times$)  \\
    
    \rowcolor{LightBlue}
    \textbf{DP-FedPGN (ours)}   &58.46 $\pm0.35$ &\textbf{12} (3.3$\times$)  & \textbf{70.98 $\pm0.55$}  &\textbf{4} (4.8$\times$) &\textbf{23.78 $\pm0.18$}  &34 (2.1$\times$) &28.85 $\pm0.62$ &21 (2$\times$)  \\
    \rowcolor{LightRed}
    \textbf{DP-FedPGN-LS (ours)}   &\textbf{61.80 $\pm0.27$}  & \textbf{12} (3.3$\times$)   & 70.95 $\pm0.43$ &\textbf{4} (4.8$\times$) &23.42 $\pm0.76$ &\textbf{21} (3.4$\times$) &\textbf{29.58 $\pm0.09$} &\textbf{19} (2.2$\times$)  \\
    \midrule[1.5pt]
	\end{tabular}
	\label{table 1}
\end{table*}	
\begin{table*}[tb]
	\centering
	\setlength{\tabcolsep}{4pt}
	\caption{Comparison accuracy (\%)  on  CIFAR10 and CIFAR100 in 300 communication rounds with $E=5$ and ResNet-18.}
	\begin{tabular}{llcccccccc}
		\midrule[1.5pt]
		&\multicolumn{4}{c}{CIFAR10}
		&\multicolumn{4}{c}{CIFAR100}\\ 
		\cmidrule(lr){2-5} 
		\cmidrule(lr){6-9} 
		Method & \multicolumn{2}{c}{Dirichlet-0.1} &  \multicolumn{2}{c}{Dirichlet-0.6} &
		\multicolumn{2}{c}{Dirichlet-0.1} &  \multicolumn{2}{c}{Dirichlet-0.6} \\
		\cmidrule(lr){2-3} \cmidrule(lr){4-5} 
		\cmidrule(lr){6-7} \cmidrule(lr){8-9} 
		& Acc.(\%) & Rounds. & Acc.(\%)& Rounds
		& Acc.(\%) & Rounds & Acc.(\%)& Rounds\\
		& 300R & 30\%  &300R & 30\%
		& 300R & 10\%  &300R & 10\%\\
		\midrule
		DP-FedAvg  \cite{McMahan2018learning}  &38.69 $\pm0.55$ &108 (1$\times$) &53.86 $\pm0.12$ &36 (1$\times$) &14.89 $\pm0.33$ &117 (1$\times$) &20.09 $\pm0.78$ &66 (1$\times$)  \\
		
        Fed-AMP \cite{Rui2022Federated} &39.44 $\pm0.21$ &109 (1$\times$) &54.17 $\pm0.65$ &36 (1$\times$) &15.06 $\pm0.09$ &125 (0.9$\times$) &19.72 $\pm0.44$ &67 (1$\times$) \\
        
        DP-FedAvg-BLUR \cite{cheng2022differentially} &39.85 $\pm0.47$ &90 (1.2$\times$) &54.19 $\pm0.29$ &36 (1$\times$) &14.86 $\pm0.53$ &133 (0.9$\times$) &20.05 $\pm0.17$ &64 (1$\times$) \\
        
        DP-FedAvg-LS \cite{DP-Fed-LS} &39.33 $\pm0.62$ &109 (1$\times$) &54.09 $\pm0.38$ &34 (1.1$\times$) &14.95 $\pm0.71$ &125 (0.9$\times$) &20.04 $\pm0.06$ &68 (1$\times$) \\
        
        DP-FedSAM \cite{DP_FedSAM} &30.07 $\pm0.15$ &223 (0.5$\times$) &44.50 $\pm0.80$ &106 (0.3$\times$) &11.34 $\pm0.24$ &225 (0.5$\times$) &14.67 $\pm0.57$ &124 (0.5$\times$) \\
        
        \rowcolor{LightBlue}
        \textbf{DP-FedPGN (ours)} &53.37 $\pm0.43$ &54 (2$\times$) &69.39 $\pm0.19$ &17 (2.1$\times$) &22.27 $\pm0.66$ &47 (2.5$\times$) &30.46 $\pm0.31$ &26 (2.5$\times$) \\
        \rowcolor{LightRed}
        \textbf{DP-FedPGN-LS (ours)} &\textbf{55.63} $\pm0.07$ & \textbf{34} (3.2$\times$) & \textbf{70.72} $\pm0.50$ &\textbf{12} (3$\times$) &\textbf{23.32} $\pm0.35$ &\textbf{34} (3.4$\times$) &\textbf{30.58} $\pm0.22$ &\textbf{22} (3$\times$) \\
        \midrule[1.5pt]
	\end{tabular}
	\label{table 2}
\end{table*}

\subsection{Convergence Results}
\begin{theorem}[Convergence for non-convex functions]	
	Under Assumption 1, 3, and 4, if we take $g^0=0$,
	$$
	\begin{aligned}
		& \beta=\min \left\{1, \sqrt{\frac{S K L \Delta}{\sigma_l^2 R}}\right\}, \quad \gamma=\min \left\{\frac{1}{24 L}, \frac{\beta}{6 L}\right\}, \\
		& \eta K L \!\lesssim \! \min  \! \!\left\{ \!1, \frac{1}{\beta \gamma L R},\left(\frac{L\Delta}{G_0 \beta^3 R}\right)^{1/2} \! \! \! \!,  \! \frac{1}{(\beta N)^{1/2}}, \frac{1}{\left(\beta^3 N K\right)^{1/4}} \!\right\},
	\end{aligned}
	$$
	then DP-FedPGN converges as
	\begin{equation}
	\frac{1}{R} \sum_{r=0}^{R-1} \mathbb{E}\left[\left\|\nabla f\left(\tilde{x}^{r}\right)\right\|^2\right] \lesssim \mathcal{O}\left(\sqrt{\frac{L \Delta \sigma_l^2}{S K R}}+\frac{L \Delta}{R}+\frac{\sigma^2 C^2}{S^2}\right) .
	\end{equation}
	Here $G_0:=\frac{1}{N} \sum_{i=1}^N\left\|\nabla f_i\left(\tilde{x}^0\right)\right\|^2$.
\end{theorem}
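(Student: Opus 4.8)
The plan is to carry out a potential-function descent analysis at the level of the global iterates $\tilde{x}^r$, treating the momentum buffer $\tilde{g}^r$ as a biased but variance-reduced estimator of the perturbed global gradient, and to use Assumption~4 to remove the clipping nonlinearity before the delicate estimates begin. The first move is to unroll the inner loop: summing $x_i^{r,k+1}=x_i^{r,k}-\eta g_i^{r,k}$ over $k$ and substituting the momentum form of $g_i^{r,k}$ shows that $\bar{\Delta}_i^r=-\eta\beta\sum_{k=0}^{K-1}\nabla F_i(x_i^{r,k}+\delta_i^{r,k};\xi_i^{r,k})$, so the $(1-\beta)\tilde{g}^r$ part cancels exactly by construction. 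Since Assumption~4 gives $\|\bar{\Delta}_i^r\|\le C$, the clip is always inactive and $\hat{\Delta}_i^r=\bar{\Delta}_i^r$; re-inserting the subtracted momentum term and aggregating collapses the update to the clean recursion $\tilde{g}^{r+1}=(1-\beta)\tilde{g}^r+\beta\bar{G}^r+z^r$, where $\bar{G}^r=\frac{1}{SK}\sum_{i=1}^S\sum_{k=0}^{K-1}\nabla F_i(x_i^{r,k}+\delta_i^{r,k};\xi_i^{r,k})$ and $z^r$ is the aggregated Gaussian noise with $\mathbb{E}\|z^r\|^2=\mathcal{O}(\sigma^2 C^2 d/(\eta SK)^2)$. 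This representation is what makes the remaining steps tractable.

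Next I would apply the descent lemma: inserting $\tilde{x}^{r+1}=\tilde{x}^r-\gamma\tilde{g}^{r+1}$ into the $L$-smoothness inequality for $F$ and splitting $\langle\nabla F(\tilde{x}^r),\tilde{g}^{r+1}\rangle$ via the standard identity $2\langle a,b\rangle=\|a\|^2+\|b\|^2-\|a-b\|^2$ produces a favorable $-\tfrac{\gamma}{2}\|\nabla F(\tilde{x}^r)\|^2$ term, a momentum-error term $\|\tilde{g}^{r+1}-\nabla F(\tilde{x}^r)\|^2$, and the quadratic $\tfrac{L\gamma^2}{2}\|\tilde{g}^{r+1}\|^2$. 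The core of the proof is then a one-step recursion for $e^r:=\tilde{g}^r-\nabla F(\tilde{x}^r)$. Writing $e^{r+1}=(1-\beta)e^r+\beta\bigl(\bar{G}^r-\nabla F(\tilde{x}^r)\bigr)+\bigl(\nabla F(\tilde{x}^r)-\nabla F(\tilde{x}^{r+1})\bigr)+z^r$ and taking conditional expectations, the leading term contracts to $(1-\beta/2)\|e^r\|^2$ by Young's inequality, the stochastic part of $\bar{G}^r$ averages over the $SK$ independent minibatches to a variance $\lesssim\beta^2\sigma_l^2/(SK)$, the smoothness increment is absorbed as $L^2\gamma^2\|\tilde{g}^{r+1}\|^2$, and $z^r$ feeds the differential-privacy term.

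The main obstacle is controlling the bias $\bar{G}^r-\nabla F(\tilde{x}^r)$ without Assumption~2, since the stochastic gradients are evaluated at drifted, perturbed, client-dependent points. The inner shift contributes only $\mathcal{O}(L\rho)$ because $\|\delta_i^{r,k}\|=\rho$ and $F_i$ is $L$-smooth, while the client drift must be bounded by a secondary induction using only smoothness and the magnitude of $\tilde{g}^r$, yielding $\mathbb{E}\|x_i^{r,k}-\tilde{x}^r\|^2\lesssim\eta^2K^2(\|\nabla F(\tilde{x}^r)\|^2+\|e^r\|^2+\sigma_l^2+L^2\rho^2)$, which the smallness budget on $\eta K L$ keeps strictly lower order. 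Crucially, because the clients are sampled uniformly and the perturbation uses the shared global direction $\tilde{g}^r/\|\tilde{g}^r\|$, the per-client heterogeneity enters only through the sampling expectation $\mathbb{E}[\tfrac1S\sum_i\nabla F_i]=\nabla F$ and through the initial gradient average $G_0$, rather than as a standing $\sigma_g^2$ penalty; propagating this via the $G_0$-dependent constraint $\eta K L\lesssim(L\Delta/(G_0\beta^3 R))^{1/2}$ is exactly what eliminates the heterogeneity term that appears for DP-FedSAM.

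Finally I would combine the two inequalities into a potential $\Phi^r=\mathbb{E}[F(\tilde{x}^r)]+\tfrac{c\gamma}{\beta}\mathbb{E}\|e^r\|^2$ with a constant $c$ chosen so the contraction of the error recursion cancels the momentum-error term in the descent lemma, telescope over $r=0,\dots,R-1$ using $F(\tilde{x}^0)-F^\star\le\Delta$ and $e^0$ controlled by $g^0=0$ and $G_0$, and substitute the prescribed $\beta$ and $\gamma$. The choice $\beta=\min\{1,\sqrt{SKL\Delta/(\sigma_l^2R)}\}$ is precisely the value balancing the momentum variance $\beta\sigma_l^2/(SK)$ against the optimization residual $L\Delta/(\beta R)$, which yields the $\sqrt{L\Delta\sigma_l^2/(SKR)}$ rate; the deterministic residual gives $L\Delta/R$; and the aggregated noise $z^r$ produces $\sigma^2C^2/S^2$ once the $(\eta SK)^{-2}$ scaling is absorbed by $\gamma$ and the step-size constraints, completing the bound.
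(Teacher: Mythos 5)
Your proposal follows essentially the same route as the paper's proof: use Assumption~4 to deactivate the clip and reduce the aggregation to a momentum recursion, apply the descent lemma at the server iterate, derive a $(1-\Theta(\beta))$-contracting recursion for the momentum error with per-round variance $\beta^2\sigma_l^2/(SK)$, control client drift by unrolling the per-client gradient norms back to the initial point so that heterogeneity enters only through $G_0$ and the constraint $\eta K L \lesssim (L\Delta/(G_0\beta^3 R))^{1/2}$ (which is exactly how the paper dispenses with Assumption~2), and balance $\beta$ to get $\sqrt{L\Delta\sigma_l^2/(SKR)}$ plus the $\sigma^2C^2/S^2$ noise floor. The one substantive divergence is your handling of the perturbation: you pay an $\mathcal{O}(L\rho)$ bias for evaluating gradients at $x_i^{r,k}+\delta_i^{r,k}$, which would leave a non-vanishing $\mathcal{O}(\beta^2 L^2\rho^2)$ contribution to the error recursion that does not appear in the stated bound, whereas the paper sidesteps this by measuring convergence at the correspondingly perturbed global point $\tilde{x}^{r}=x^{r}+\rho\,\tilde{g}^r/\|\tilde{g}^r\|$, so that the shared perturbation cancels in $\tilde{x}_i^{r,k}-\tilde{x}^{r}=x_i^{r,k}-x^{r}$ and no $\rho$-dependent term survives.
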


To begin with, we first investigate the convergence performance of DP-FedPGN.  See its proof in Appendix.  In Table \ref{table com}, the convergence speed of DP-FedPGN is faster than that of DP-FedSAM, and we do not need data heterogeneity Assumption (Assumption 2). And in terms of computational complexity, we also require less overhead than DP-FedSAM.
\subsection{Sensitivity Analysis}

To achieve client-level privacy protection, we derive the sensitivity of the aggregation process at first after clipping the local updates.

\begin{lemma}
  The sensitivity of client-level DP in DP-FedPGN can be expressed as $C / S$.
  
Proof. Given two adjacent batches $\mathcal{W}^t$ and $\tilde{\mathcal{W}}^{t}$, that $\tilde{\mathcal{W}}^{t}$ has one more or less client, we have
\begin{equation}
\left\|\frac{1}{S} \sum_{i \in \mathcal{W}^t} \Delta_i^t-\frac{1}{S} \sum_{j \in \tilde{\mathcal{W}}^{t}} \Delta_j^t\right\|_2=\frac{1}{S}\left\|\tilde{\Delta}_{j^{\prime}}^t\right\|_2 \leq \frac{C}{S},
\end{equation}
where $\tilde{\Delta}_{j^{\prime}}^t$ is the local update of one more or less client.  
\end{lemma}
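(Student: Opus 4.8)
The plan is to apply the $\ell_2$-sensitivity definition (Definition~3) directly to the map that is actually privatized in DP-FedPGN, namely the averaged clipped updates $\mathcal{A}(\mathcal{W}) := \frac{1}{S}\sum_{i \in \mathcal{W}} \hat{\Delta}_i^t$, where $\hat{\Delta}_i^t$ is the per-client update right after the clipping step of Algorithm~\ref{algorithm 1}. Under the client-level adjacency of Definition~2, the two neighbouring batches $\mathcal{W}^t$ and $\tilde{\mathcal{W}}^t$ differ by exactly one client record (one client added or removed), so I would form the difference $\mathcal{A}(\mathcal{W}^t) - \mathcal{A}(\tilde{\mathcal{W}}^t)$ and observe that every client common to both batches contributes the identical term $\frac{1}{S}\hat{\Delta}_i^t$ and hence cancels.

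Before that cancellation is clean, I would first dispose of the data-independent offset $(1-\beta)K\eta\tilde{g}^r$ that appears inside $\Delta_i^t$: since $\tilde{g}^r$ depends only on the already-released global state from the previous round, adding or subtracting it is post-processing, and the Gaussian mechanism's noise is calibrated to the clipped quantity $\hat{\Delta}_i^t$ alone. This reduces the sensitivity computation to the sum of clipped updates, after which the shared clients cancel and only the update $\hat{\Delta}_{j'}^t$ of the single differing client survives, giving $\|\mathcal{A}(\mathcal{W}^t) - \mathcal{A}(\tilde{\mathcal{W}}^t)\|_2 = \frac{1}{S}\|\hat{\Delta}_{j'}^t\|_2$.

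The final step invokes the clipping bound: the operation $\hat{\Delta}_i^t = \bar{\Delta}_i^t \cdot \min(1, C/\|\bar{\Delta}_i^t\|_2)$ (equivalently Assumption~4) forces $\|\hat{\Delta}_{j'}^t\|_2 \leq C$ for every client, so $\|\mathcal{A}(\mathcal{W}^t) - \mathcal{A}(\tilde{\mathcal{W}}^t)\|_2 \leq C/S$; taking the maximum over adjacent pairs and noting tightness (attained when the differing client's clipped update has norm exactly $C$) yields sensitivity $C/S$.

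I expect the work to be mostly bookkeeping rather than a genuine obstacle. The two points that need care are the adjacency convention — one must use the add/remove model, since the replacement model would leave two surviving updates and give $2C/S$ — and the justification that the common additive term $(1-\beta)K\eta\tilde{g}^r$ is immaterial to the $\ell_2$-sensitivity, so that the cancellation of shared clients is exact and leaves a single clipped update of norm at most $C$.
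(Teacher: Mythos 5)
Your proposal is correct and follows essentially the same route as the paper: cancel the contributions of clients common to both adjacent batches, so that only the single differing client's clipped update survives, and bound its norm by $C$ via the clipping step to obtain sensitivity $C/S$. Your additional remarks --- that the data-independent offset $(1-\beta)K\eta\tilde{g}^{r}$ is immaterial because the noise is calibrated to the clipped quantity, and that the add/remove adjacency model is what yields $C/S$ rather than $2C/S$ --- are sound refinements of the same argument rather than a different approach.
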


\subsection{Privacy Analysis}
After adding Gaussian noise, we calculate the cumulative privacy budget along with training as follows.
\begin{theorem} After $R$ communication rounds, the accumulative privacy budget is calculated by:
\begin{equation}\label{epsilon}
\epsilon=\bar{\epsilon}+\frac{(\alpha-1) \log \left(1-\frac{1}{\alpha}\right)-\log (\alpha)-\log (\delta)}{\alpha-1},
\end{equation}
where
\begin{equation}
\bar{\epsilon}=\frac{R}{\alpha-1} \ln \mathbb{E}_{z \sim \mu_0(z)}\left[\left(1-q+\frac{q \mu_1(z)}{\mu_0(z)}\right)^\alpha\right],
\end{equation}
and $q$ is sample rate for client selection, $\mu_0(z)$ and $\mu_1(z)$ denote the Gaussian probability density function (PDF) of $\mathcal{N}(0, \sigma)$ and the mixture of two Gaussian distributions $q \mathcal{N}(1, \sigma)+(1-q) \mathcal{N}(0, \sigma)$, respectively, $\sigma$ is the noise STD, $\alpha$ is a selectable variable.
\end{theorem}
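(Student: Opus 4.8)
The plan is to carry out the standard three-stage Rényi differential privacy (RDP) accounting argument: (i) identify each communication round as an instance of the subsampled Gaussian mechanism and bound its per-round RDP; (ii) compose the per-round guarantees additively over the $R$ rounds; and (iii) convert the resulting RDP guarantee into an $(\epsilon,\delta)$-DP guarantee. Throughout I would rely on Lemma~1, which fixes the $L_2$-sensitivity of the aggregation at $C/S$, and on Definition~1 for the RDP divergence.

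First I would reduce a single round to a subsampled Gaussian mechanism. In round $r$ the server releases $\frac{1}{S}\sum_i \tilde{\Delta}_i^t$, where each clipped update has norm at most $C$ and is perturbed by $\mathcal{N}(0,\sigma^2C^2\mathbf{I}_d/S)$; aggregating $S$ independent such terms and dividing by $S$ yields effective noise of standard deviation $\sigma C/S$ against a sensitivity of $C/S$ (Lemma~1), so the effective noise multiplier is exactly $\sigma$ and is independent of $C$ and $S$. Because only $S$ of the $N$ clients are sampled with rate $q$, the two adjacent-output distributions, after normalizing by the sensitivity, are $\mu_0=\mathcal{N}(0,\sigma)$ and the mixture $(1-q)\mathcal{N}(0,\sigma)+q\mathcal{N}(1,\sigma)$. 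Invoking the known RDP bound for the subsampled Gaussian mechanism then gives the per-round guarantee
\begin{equation}
\epsilon_{\mathrm{round}}(\alpha)=\frac{1}{\alpha-1}\ln \mathbb{E}_{z\sim\mu_0(z)}\left[\left(1-q+\frac{q\mu_1(z)}{\mu_0(z)}\right)^{\alpha}\right].
\end{equation}

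Next I would use the additive composition property of RDP: if each of the $R$ rounds is $(\alpha,\epsilon_{\mathrm{round}}(\alpha))$-RDP, then the whole sequence of releases is $(\alpha,R\,\epsilon_{\mathrm{round}}(\alpha))$-RDP, which is exactly $\bar{\epsilon}$ in the statement. Finally I would apply the tight RDP-to-$(\epsilon,\delta)$ conversion: an $(\alpha,\bar{\epsilon})$-RDP mechanism is $(\epsilon,\delta)$-DP whenever $\delta=\frac{(\alpha-1)^{\alpha-1}}{\alpha^{\alpha}}e^{(\alpha-1)(\bar{\epsilon}-\epsilon)}$; solving this relation for $\epsilon$ and simplifying via $\frac{(\alpha-1)^{\alpha-1}}{\alpha^{\alpha}}=\frac{1}{\alpha}(1-\frac{1}{\alpha})^{\alpha-1}$ reproduces the additive correction term $\frac{(\alpha-1)\log(1-1/\alpha)-\log(\alpha)-\log(\delta)}{\alpha-1}$, completing the claimed identity.

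The main obstacle is the per-round bound in the first step: privacy amplification by subsampling for the Gaussian mechanism is not elementary, since one must bound the Rényi moment $\mathbb{E}_{z\sim\mu_0}[(\mu_1(z)/\mu_0(z))^{\alpha}]$ of a Gaussian-against-Gaussian-mixture likelihood ratio and, in principle, take the worst case over both the ``add a client'' and ``remove a client'' adjacency directions. I would appeal to the established closed-form/numerically-computable bounds of Mironov--Talwar--Zhang (and the moments-accountant lineage) rather than re-derive them, so that the remaining composition and conversion steps reduce to bookkeeping. A secondary point to verify carefully is that the momentum re-insertion and removal operations (the $(1-\beta)K\eta g^r$ terms) do not inflate sensitivity, since $g^r$ is a deterministic function of already-released quantities and therefore contributes nothing to the privacy cost of the current round.
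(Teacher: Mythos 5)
Your proposal is correct and follows exactly the route the paper implicitly relies on; in fact the paper states this theorem without any proof (the appendix only contains the convergence analysis), so it is in effect invoking the same three ingredients you name: the per-round R\'enyi bound for the subsampled Gaussian mechanism, additive RDP composition over $R$ rounds, and the tightened RDP-to-$(\epsilon,\delta)$ conversion. Your algebra for the conversion term checks out, since $(\alpha-1)\log(\alpha-1)-\alpha\log\alpha=(\alpha-1)\log\left(1-\tfrac{1}{\alpha}\right)-\log\alpha$, and your observations that the effective noise multiplier is exactly $\sigma$ (sensitivity $C/S$ from Lemma~1 against aggregated noise of standard deviation $\sigma C/S$) and that the momentum re-insertion term is post-processing of already-privatized quantities are both right and are points the paper only gestures at informally. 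Two fine points worth flagging if you wrote this up: the stated $\bar{\epsilon}$ records only one of the two adjacency directions (the $\mathbb{E}_{\mu_0}$ moment), so one should cite the result showing this direction dominates for the subsampled Gaussian rather than silently dropping the other; and the paper's own description of $\mu_1$ as the mixture is internally inconsistent with the expression $1-q+q\mu_1(z)/\mu_0(z)$, whereas your reading ($\mu_1=\mathcal{N}(1,\sigma)$, with the mixture being the second adjacent distribution) is the correct standard one.
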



\begin{figure*}[tb]
\centering
\begin{minipage}[c]{0.245\textwidth}
\centering
\subcaptionbox{ResNet-10, CIFAR10, $\alpha\!=\!0.1$}{\includegraphics[width=\textwidth]{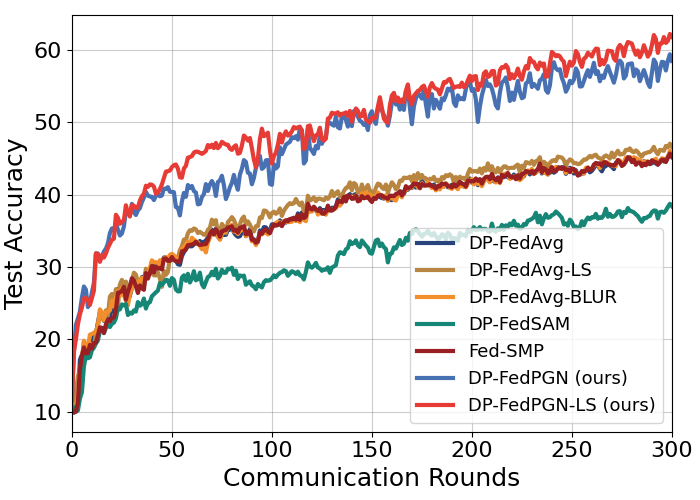}}
\end{minipage}
\begin{minipage}[c]{0.245\textwidth}
\centering
\subcaptionbox{ResNet-10, CIFAR100, $\alpha\!=\!0.1$}{\includegraphics[width=\textwidth]{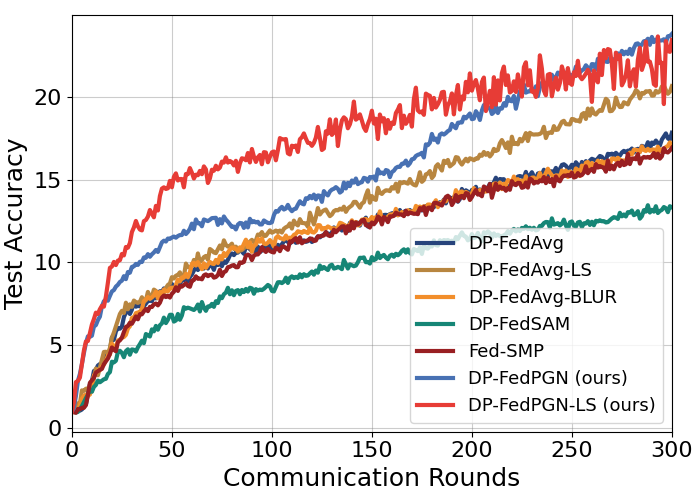}}
\end{minipage}
\begin{minipage}[c]{0.245\textwidth}
\centering
\subcaptionbox{ResNet-10, CIFAR10, $\alpha\!=\!0.6$}{\includegraphics[width=\textwidth]{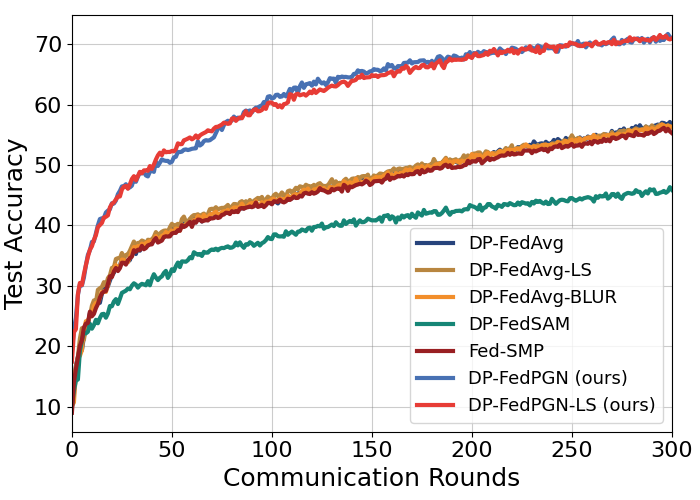}}
\end{minipage}
\begin{minipage}[c]{0.245\textwidth}
\centering
\subcaptionbox{ResNet-10, CIFAR100, $\alpha\!=\!0.6$}{\includegraphics[width=\textwidth]{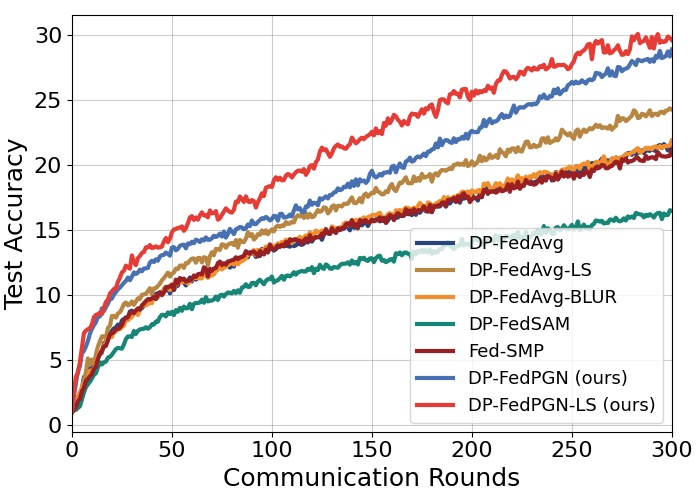}}
\end{minipage}
\centering
\begin{minipage}[c]{0.245\textwidth}
    \centering
    \subcaptionbox{ResNet-18, CIFAR10, $\alpha\!=\!0.1$}{\includegraphics[width=\textwidth]{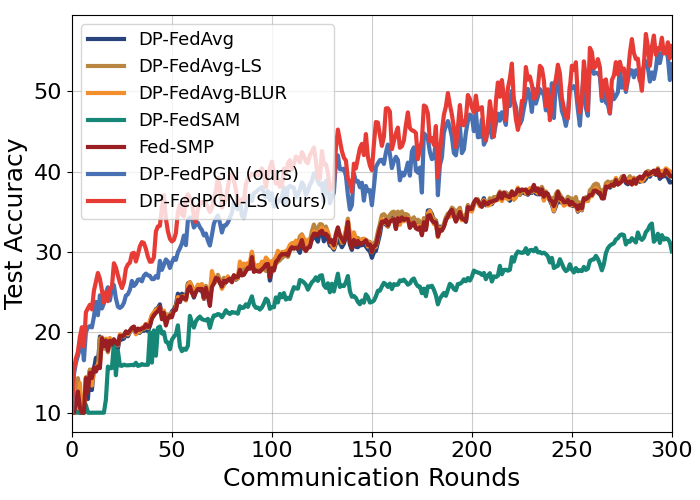}}
\end{minipage}
\begin{minipage}[c]{0.245\textwidth}
    \centering
    \subcaptionbox{ResNet-18, CIFAR100, $\alpha\!=\!0.1$}{\includegraphics[width=\textwidth]{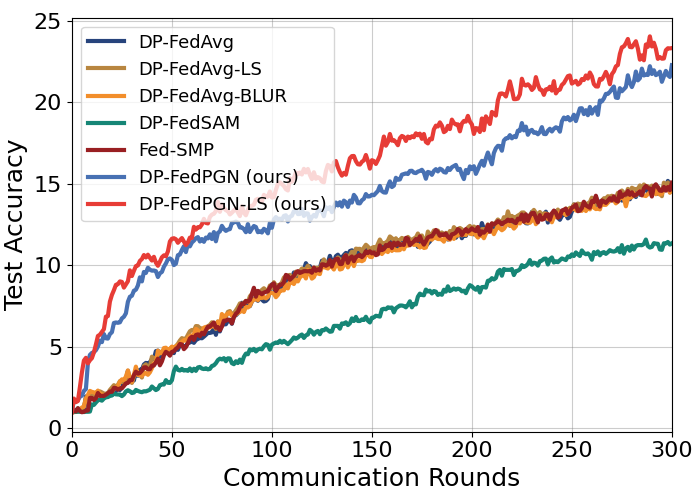}}
\end{minipage}
\begin{minipage}[c]{0.245\textwidth}
    \centering
    \subcaptionbox{ResNet-18, CIFAR10, $\alpha\!=\!0.6$}{\includegraphics[width=\textwidth]{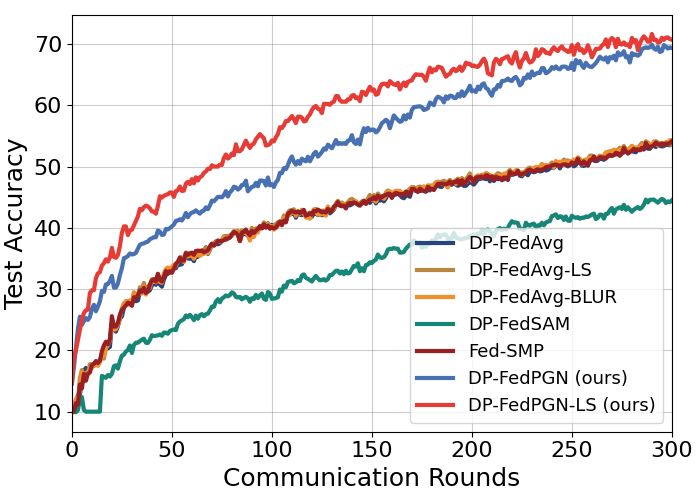}}
\end{minipage}
\begin{minipage}[c]{0.245\textwidth}
    \centering
    \subcaptionbox{ResNet-18, CIFAR100, $\alpha\!=\!0.6$}{\includegraphics[width=\textwidth]{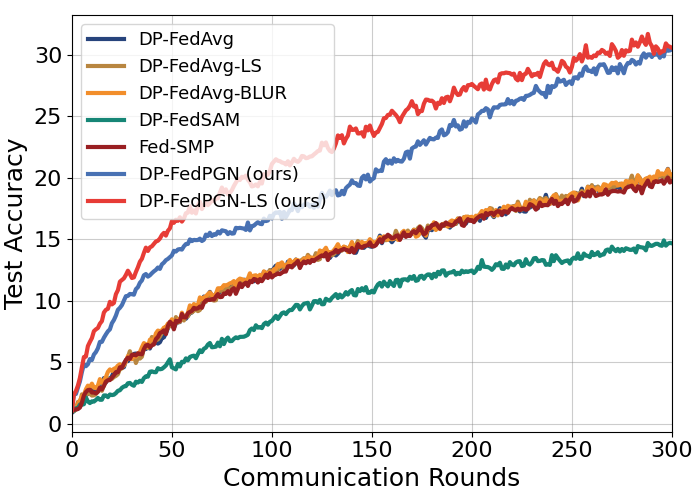}}
\end{minipage}
    \caption{Convergence plots  on CIFAR10 and CIFAR100 (Dirichlet-0.1 and Dirichlet-0.6) with ResNet-18 and ResNet-10.}
    \label{figure 5}
\end{figure*}

 \section{Experiments}

 \begin{table*}[tb]
    \setlength{\tabcolsep}{3pt}
    \centering
    \caption{Performance comparison under different privacy budgets $\epsilon$ on CIFAR10 and CIFAR100 with ResNet-18.}
    \begin{tabular}{m{1.5cm} >{\raggedright}m{3.5cm}  >{\centering\arraybackslash}p{2cm} 
>{\centering\arraybackslash}p{2cm} 
>{\centering\arraybackslash}p{2cm} 
>{\centering\arraybackslash}p{2cm}
>{\centering\arraybackslash}p{2cm}} 
        \toprule
        \multirow{2}*{Task}
        & \multirow{2}*{Method} & \multicolumn{5}{c}{ Averaged test accuracy (\%)} \\
        \cmidrule(lr){3-7}
        & & $\epsilon = 4$ & $\epsilon = 6$ & $\epsilon = 8$ & $\epsilon = 10$& $\epsilon = \infty$ \\
        \midrule
        \multirow{7}{*}{CIFAR10} 
        & DP-FedAvg \cite{McMahan2018learning}  & 29.43 $\pm0.35$ & 38.05 $\pm0.62$ & 42.38 $\pm0.17$ & 46.91 $\pm0.44$ &80.87 $\pm0.73$\\
        & Fed-SMP \cite{Rui2022Federated}  & 29.57 $\pm0.09$ & 38.27 $\pm0.78$ & 42.73 $\pm0.53$ & 47.21 $\pm0.21$ &79.58 $\pm0.15$\\
        & DP-FedAvg-BLUR \cite{cheng2022differentially}  & 29.30 $\pm0.66$ & 38.11 $\pm0.33$ & 42.80 $\pm0.08$ & 46.59 $\pm0.72$ &80.51 $\pm0.68$\\
        & DP-FedAvg-LS  \cite{DP-Fed-LS}   & 29.78 $\pm0.25$ & 38.69 $\pm0.15$ & 42.83 $\pm0.49$ & 47.44 $\pm0.57$ &82.93 $\pm0.24$\\
        & DP-FedSAM  \cite{DP_FedSAM} & 21.77 $\pm0.40$ & 28.99 $\pm0.03$ & 32.32 $\pm0.65$ & 38.24 $\pm0.29$ &77.45 $\pm0.61$\\
        \rowcolor{LightBlue}
        & \textbf{DP-FedPGN (ours)}   & 37.41 $\pm0.12$ & 46.02 $\pm0.80$ & 53.29 $\pm0.36$ & 61.09 $\pm0.07$ &85.10 $\pm0.19$\\
        \rowcolor{LightRed}
        & \textbf{DP-FedPGN-LS (ours)}  & \textbf{42.89} $\pm0.55$ & \textbf{51.99} $\pm0.22$ & \textbf{60.09} $\pm0.41$ & \textbf{65.00} $\pm0.18$ &\textbf{87.53} $\pm0.33$\\
        \midrule
        \multirow{7}{*}{CIFAR100}
        & DP-FedAvg \cite{McMahan2018learning} & 5.60 $\pm0.63$  & 11.00 $\pm0.34$ & 13.48 $\pm0.77$ & 16.09 $\pm0.05$ &45.13 $\pm0.57$\\
        & Fed-SMP \cite{Rui2022Federated} & 5.64 $\pm0.28$  & 10.55 $\pm0.50$ & 13.79 $\pm0.13$ & 16.23 $\pm0.67$ &44.60 $\pm0.42$\\
        & DP-FedAvg-BLUR \cite{cheng2022differentially} & 6.21 $\pm0.45$  & 11.24 $\pm0.09$ & 13.94 $\pm0.71$ & 16.03 $\pm0.39$ &44.70 $\pm0.09$\\
        & DP-FedAvg-LS  \cite{DP-Fed-LS}   & 5.44 $\pm0.16$  & 10.99 $\pm0.58$ & 13.79 $\pm0.26$ & 16.08 $\pm0.73$ &47.73 $\pm0.77$\\
        & DP-FedSAM \cite{DP_FedSAM} & 3.39 $\pm0.80$  & 6.83 $\pm0.19$  & 10.00 $\pm0.47$ & 12.23 $\pm0.32$ &47.07 $\pm0.38$\\
        \rowcolor{LightBlue}
        & \textbf{DP-FedPGN (ours)}  & 11.65 $\pm0.54$ & 15.65 $\pm0.61$ & 18.96 $\pm0.25$ & 23.71 $\pm0.54$ &54.14 $\pm0.22$\\
        \rowcolor{LightRed}
        & \textbf{DP-FedPGN-LS (ours)}   & \textbf{13.41} $\pm0.37$ & \textbf{19.06} $\pm0.10$ & \textbf{22.19} $\pm0.66$ & \textbf{26.60} $\pm0.14$ &\textbf{59.74} $\pm0.65$\\
        \midrule[1.5pt]
    \end{tabular}
    \label{table 3}
\end{table*}

\begin{table*}[tb]
	\centering
	\setlength{\tabcolsep}{12pt}
	\caption{Comparison  accuracy (\%) of each algorithm on ViT-Base and Roberta-Base in 1,00 communication rounds.}
	\begin{tabular}{llcccccccc}
		\midrule[1.5pt]
		&\multicolumn{2}{c}{ViT-Babse}
		&\multicolumn{4}{c}{Roberta-Base}\\ 
		\cmidrule(lr){2-3} 
		\cmidrule(lr){4-7} 
		\cmidrule(lr){2-3} \cmidrule(lr){4-7} 
		& Acc.(\%) & Acc.(\%) & Acc.(\%)& Acc.(\%)
		& Acc.(\%) & Acc.(\%) \\
		& CIFAR10 & CIFAR100 &SST-2 & QQP
		& MRPC &QNLI  \\
		\midrule		DP-FedAvg \cite{McMahan2018learning}  &83.09$\pm0.32$ &59.39$\pm0.24$ &93.44$\pm0.33$ &86.81$\pm0.25$ & 85.82$\pm0.24$ & 83.34$\pm0.18$\\
		
		Fed-SMP \cite{Rui2022Federated}   &83.56$\pm0.45$ &59.27$\pm0.32$ &93.54$\pm0.25$ &86.32$\pm0.13$&86.27$\pm0.11$  & 83.12$\pm0.32$   \\
		
		DP-FedAvg-BLUR \cite{cheng2022differentially} &83.62$\pm0.22$ &59.18 $\pm0.19$&93.66$\pm0.14$ &86.94$\pm0.33$&86.43$\pm0.34$  &83.46$\pm0.18$   \\
		
		DP-FedAvg-LS \cite{DP-Fed-LS}   &85.12$\pm0.52$&62.16$\pm0.17$&94.12$\pm0.17$ &88.31$\pm0.12$&87.22$\pm0.15$  &85.16$\pm0.11$ \\
		
		DP-FedSAM \cite{DP_FedSAM} &81.35$\pm0.24$ &57.22$\pm0.28$ &93.88$\pm0.29$&87.11$\pm0.32$&86.17$\pm0.16$  &83.16$\pm0.21$\\
		
		\rowcolor{LightBlue}
		\textbf{DP-FedPGN (ours)}   &86.33$\pm0.64$&65.23$\pm0.32$ &94.35$\pm0.16$ &89.73$\pm0.55$&89.12$\pm0.15$  &87.22$\pm0.31$ \\
        \rowcolor{LightRed}
		\textbf{DP-FedPGN-LS (ours)}   &\textbf{88.23}$\pm0.35$ & \textbf{67.32}$\pm0.12$   & \textbf{94.66$\pm0.19$} &\textbf{90.65$\pm0.23$} &\textbf{89.22$\pm0.19$ } &\textbf{90.66$\pm0.16$}   \\
		\midrule[1.5pt]
	\end{tabular}
	\label{table 4}
\end{table*}

\subsection{Experimental Settings}

\textbf{{Datasets.}} We evaluate our algorithms on the CIFAR10 \cite{krizhevsky2009learning}, CIFAR100 \cite{krizhevsky2009learning} datasets. We also tested the NLP dataset, such as SST-2 \cite{socher2013recursive}, QQP \cite{socher2013recursive}, MRPC \cite{dolan2005automatically}, and QNLI \cite{rajpurkar2018know}. For a non-IID data setup, we simulate the data heterogeneity by sampling the label ratios from a Dirichlet distribution \cite{hsu2019measuring}. $\operatorname{Dir}(\alpha)$ with parameters $\alpha=0.1$ and $\alpha=0.6$. Dirichlet-0.1 implies very high data heterogeneity and Dirichlet-0.6 implies low data heterogeneity.

\textbf{{Models.}} We use standard models including  GNResNet-10 (Group Norm ResNet-10), GNResNet-18 (Group Norm ResNet-18) \cite{he2016deep}, Vision Transformer (ViT-Base) \cite{dosovitskiy2020image}, and Roberta-Base.  

\textbf{{Baselines.}} We compare DP-FedPGN with many 
SOTA DPFL baselines, including DP-FedAvg \cite{McMahan2018learning}, DP-FedAvg-BLUR \cite{cheng2022differentially}, Fed-SMP \cite{Rui2022Federated}, DP-FedAvg-LS \cite{DP-Fed-LS}, DP-FedSAM \cite{DP_FedSAM}.

\textbf{Hyper-parameter Settings.}
The number of clients is 500. batch size $B\!=\!50$, local epoch $E\!=\!5$, $K \!=\! 50$  and client participating $ 10\%$. We set the search range on the grid of the client learning rate by $\eta \!\in\!\{10^{-3}, 3\times 10^{-3},...,10^{-1}, 3\times 10^{-1}\}$. The decay of the learning rate per round is $0.998$, total $R=300$. 
Specifically, we set $\rho\!=\!0.2$, $\beta=0.3$, $\gamma=\eta K$ for DP-FedPGN and DP-FedPGN-LS. We set the Laplacian smoothing coefficient $\sigma_{ls}$ of DP-FedPGN-LS to 0.01 and we set the Laplacian smoothing coefficient $\sigma_{ls}$ of DP-FedPGN-LS in $\{10^{-2}, 3\times 10^{-2},...,10^{-1}, 3\times 10^{-1}, 5\times 10^{-1}\}$. We set $\rho\!\in\!\{10^{-2}, 3\times 10^{-2},...,10^{-1}, 3\times 10^{-1}, 5\times 10^{-1}\}$ for DP-FedSAM. For privacy parameters, noise multiplier $\sigma$ is set to 0.8 and the privacy failure probability $\delta=\frac{1}{N}$. The clipping threshold $C$ adopts median clipping. We run each experiment 5 trials and report the average testing accuracy in each experiment.

\subsection{Results on Convolutional Neural Network}
   
\textbf{Performance with compared baselines.} From Table \ref{table 1}, Table \ref{table 2} (Figure \ref{figure 6}), we have the following observations:
 It is seen that our proposed algorithms outperform other baselines under symmetric noise, both on accuracy and generalization perspectives. This means that we significantly improve the performance and generate a better trade-off between performance and privacy in DPFL. For instance, the averaged testing accuracy is $58.24\%$ in DP-FedPGN and $61.80\%$ in DP-FedPGN-LS on CIFAR10 with GNResNet-10, $\alpha=0.1$, which are better than other baselines. Meanwhile, the test accuracy is $ 23.78\%$ in DP-FedPGN and $23.42\%$ in DP-FedPGN-LS on CIFAR100 with GNResNet-10, $\alpha=0.1$.  DP-FedPGN-LS only needs $21$  communication rounds to $10\%$ accuracy (3.4$\times$ acceleration), which is better than DP-FedPGN ($34$ communication rounds) and other baselines. That means our algorithms significantly mitigate the performance degradation issue caused by DP. 

\textbf{Performance under different privacy budgets $\epsilon$.} Table \ref{table 3}
shows the test accuracies for the different levels of privacy guarantees. Note that $\epsilon$ is not a hyperparameter but can be obtained by the privacy design in each round, such as Eq. (\ref{epsilon}). Our methods consistently outperform the previous SOTA methods under various privacy budgets $\epsilon$. Specifically, DP-FedPGN and DP-FedPGN-LS significantly improve the accuracy of DP-FedAvg and DP-FedAvg-LS under the same $\epsilon$, respectively. Furthermore, the test accuracy improves as the privacy budget $\epsilon$ increases, suggesting that a better balance is necessary between training performance and privacy. There is a better trade-off when achieving better accuracy under the same $\epsilon$ or a smaller $\epsilon$ under approximately the same accuracy. 

 \begin{figure}[tb]
    \centering
    \includegraphics[scale=0.3]{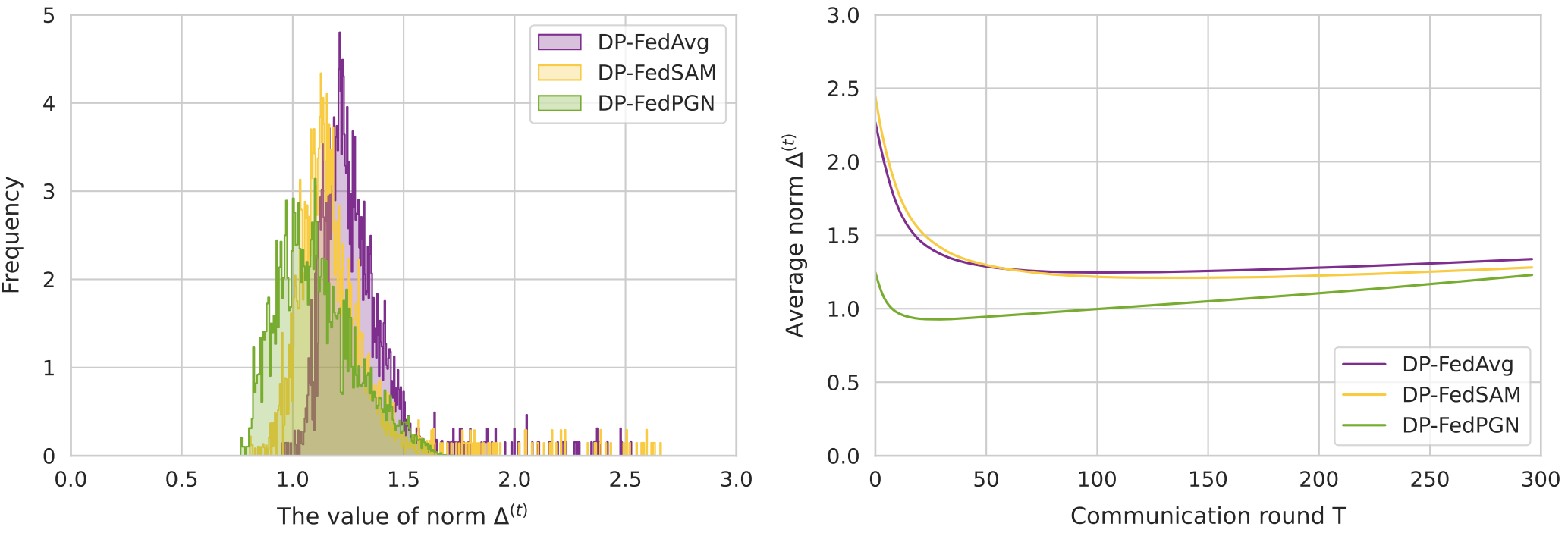}
    \caption{Comparison of norm distribution and average norm of local updates for DP-FedAvg, DP-FedSAM, DP-FedPGN.}
    \label{figure 6}
\end{figure}
        \begin{figure*}[tb]
        \centering
        \begin{minipage}[b]{0.245\textwidth}
            \centering
            \subcaptionbox{Impact of $E$}{\includegraphics[width=\textwidth]{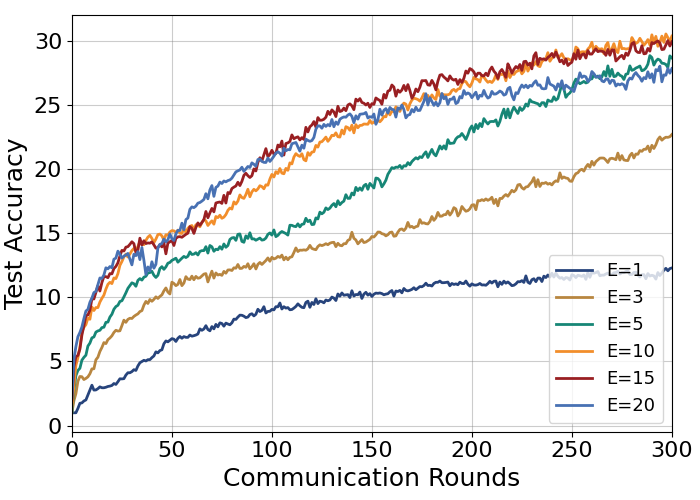}}
        \end{minipage}
        \begin{minipage}[b]{0.245\textwidth}
            \centering
            \subcaptionbox{Impact of $\rho$}{\includegraphics[width=\textwidth]{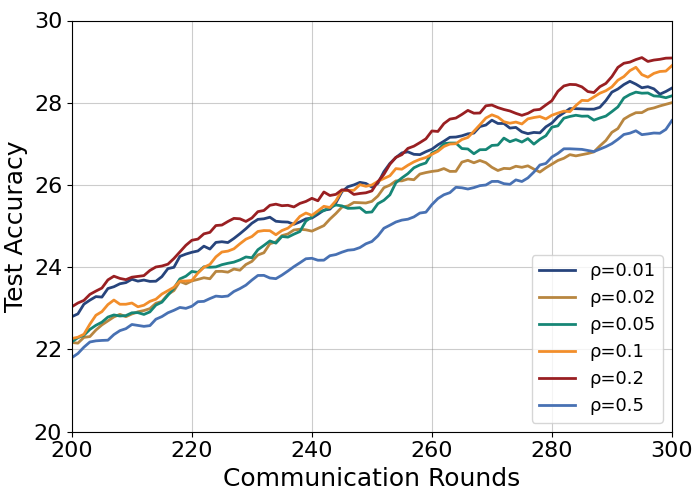}}
        \end{minipage}
         \begin{minipage}[b]{0.245\textwidth}
            \centering
            \subcaptionbox{Impact of $\sigma_{ls}$}
        {\includegraphics[width=\textwidth]{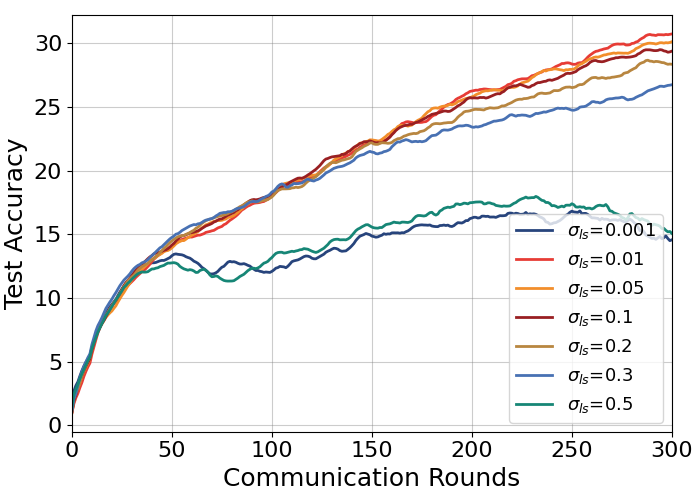}}
        \end{minipage}
        \begin{minipage}[b]{0.245\textwidth}
            \centering
            \subcaptionbox{Impact of $\beta$}{\includegraphics[width=\textwidth]{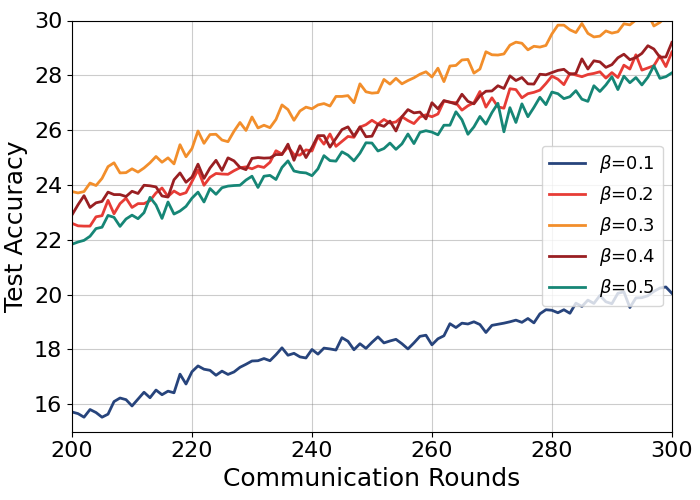}}
        \end{minipage}
            \caption{Convergence plots for DP-FedPGN with different $E$, $\rho$, $\sigma_{ls}$ and $\gamma$ on CIFAR100 with ResNet-10 and Dirichlet-0.6.}
            \label{figure 7}
        \end{figure*}
\subsection{Results on ViT-Base and Roberta-Base}
To verify the superiority of our algorithm in training large models, we used ViT-Base training on CIFAR10 and CIFAR100. We use the pre-trained model with officially provided weights for pre-training on ImageNet-22k. From the experimental results in Table  \ref{table 4}, we can observe that the DP-FedPGN algorithm can achieve the best results We trained using a pre-trained large language model, Roberta-Base to fine-tune the datasets, such as SST-2, QQP, MRPC, and QNLI, and our method achieved the best results on each dataset in Table  \ref{table 4}.
For the training of these models, we used a learning rate of 0.1, a learning rate decay of $0.98$ per round, a batch size of 16, and 100 communication rounds of training in Dirichlet-0.1. This can verify that the DP-FedPGN model can achieve excellent acceleration effects on both the Vision Transformer model and big datasets.

\begin{figure}[tb]
	\centering
	\begin{minipage}[t]{0.23\textwidth}
		\centering
		\subcaptionbox{DP-FedPGN}{\includegraphics[width=\textwidth]{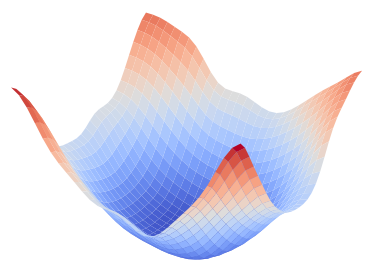}}
	\end{minipage}
	\begin{minipage}[t]{0.23\textwidth}
		\centering
		\subcaptionbox{DP-FedPGN-LS}{\includegraphics[width=\textwidth]{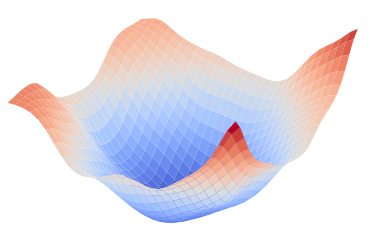}}
	\end{minipage}
		\caption{Loss landscapes of DP-FedPGN and DP-FedPGN-LS on CIFAR100 with ResNet-10 and Dirichlet-0.1.}
		\label{figure 8}
\end{figure}       
	
\subsection{Discussion for PGN and LS in DPFL}
In this subsection, we empirically discuss how PGN mitigates the negative impacts of DP from the aspects of the
local update norm, the visualization of the landscape and contour of the loss.

\textbf{The norm of local update.} 
To validate the theoretical results for mitigating the adverse impacts of the norm of local updates, we conduct experiments on DP-FedPGN, DP-FedSAM and DP-FedAvg with clipping as shown in Figure \ref{figure 6}. We show the norm $\Delta_i^t$ distribution and average norm $\bar{\Delta}^t$ of local updates before clipping during the communication rounds. In contrast to DP-FedAvg and DP-FedSAM, most of the norm of DP-FedPGN is distributed at the smaller value in our scheme in Figure \ref{figure 6} (a), which means that the clipping operation drops less information. Meanwhile, the on-average norm $\bar{\Delta}^t$ is smaller than others as shown in Figure \ref{figure 6} (b).

\textbf{Laplacian smoothing.}
We can observe from Figure \ref{figure 8} that Laplacian Smoothing can make the aggregated global model flatter.

\subsection{Ablation Study}

\textbf{Perturbation weight $\rho$.}  Perturbation weight $\rho$ has an impact on performance as the added perturbation is accumulated when the communication round $R$ increases. To select a proper value for our algorithms, we conduct some experiments on various perturbation radius from the set $\{0.01,0.02,0.05,0.1,0.2,0.5\}$ in Figure \ref{figure 7} (b). As $\rho=0.2$, we achieve better convergence and performance.

\textbf{Impact of $\beta$.} 
We test DP-FedPGN with $\beta$ taken values in the set $\{0.1,0.2,0.3,0.4,0.5\}$, training ResNet-10 on CIFAR100 datasets with Dirichlet-0.6 split on 500 clients, $ 10\% $ participating setting. The test accuracies and the convergence plots are provided in Figure \ref{figure 7} (d). As shown in  Figure \ref{figure 7} (d),  we find that performance is best when setting $\beta$ to about 0.3.

\textbf{Impact of Local iteration steps $K$: }  Large local iteration steps $K=E*10$ can help the convergence in DPFL. 
In Figure \ref{figure 7} (b), our algorithm can accelerate the convergence in Theorem 1 as a larger $K$ is adopted, that is, use a larger epoch value. However, the adverse impact of clipping on training increases as $K$ is too large, e.g., $E=20$. Thus, we choose $E=5$ to balance it.

\textbf{Laplacian smoothing coefficient $\sigma_{ls}$: }We observed through experiments that too small $\sigma_{ls}$ cannot achieve the smoothing effect, while too large $\sigma_{ls}$ causes significant loss of details in  model. Therefore, we found that choosing $\sigma_{ls}=0.01$ has the best effect.


\section{Conclusion}

In this paper, we focus on the issue of sharp landscape loss caused by gradient cropping and noise in DPFL. We first conducted extensive experiments and found that the loss landscape of DPFL is very sharp, and rethought the limitations of DP-FedSAM, whose global model may not necessarily be flat. Therefore, we proposed a novel DP-FedPGN algorithm from the perspective of global gradient norm penalty. We analyzed in detail how DP-FedPGN achieves global flatness and mitigates the adverse effects of DP, and demonstrated its stricter convergence speed. Moreover, we also found that Laplacian smoothing can make the global model flatter and have better generalization performance. Finally, the experimental results of visual and natural language processing tasks validated the superiority of our method.       
		

\bibliographystyle{IEEEtran}
\bibliography{sample-base}

\begin{thebibliography}{10}
\providecommand{\url}[1]{#1}
\csname url@samestyle\endcsname
\providecommand{\newblock}{\relax}
\providecommand{\bibinfo}[2]{#2}
\providecommand{\BIBentrySTDinterwordspacing}{\spaceskip=0pt\relax}
\providecommand{\BIBentryALTinterwordstretchfactor}{4}
\providecommand{\BIBentryALTinterwordspacing}{\spaceskip=\fontdimen2\font plus
\BIBentryALTinterwordstretchfactor\fontdimen3\font minus
  \fontdimen4\font\relax}
\providecommand{\BIBforeignlanguage}[2]{{%
\expandafter\ifx\csname l@#1\endcsname\relax
\typeout{** WARNING: IEEEtran.bst: No hyphenation pattern has been}%
\typeout{** loaded for the language `#1'. Using the pattern for}%
\typeout{** the default language instead.}%
\else
\language=\csname l@#1\endcsname
\fi
#2}}
\providecommand{\BIBdecl}{\relax}
\BIBdecl

\bibitem{McMahan2018learning}
H.~B. McMahan, D.~Ramage, K.~Talwar, and L.~Zhang, ``Learning differentially
  private recurrent language models,'' in \emph{Proc. International Conference
  on Learning Representations (ICLR)}, Apr. 2018.

\bibitem{DP_FedSAM}
Y.~Shi, Y.~Liu, K.~Wei, L.~Shen, X.~Wang, and D.~Tao, ``Make landscape flatter
  in differentially private federated learning,'' in \emph{2023 IEEE/CVF
  Conference on Computer Vision and Pattern Recognition (CVPR)}, 2023, pp.
  24\,552--24\,562.

\bibitem{mcmahan2017communication}
B.~McMahan, E.~Moore, D.~Ramage, S.~Hampson, and B.~A. y~Arcas,
  ``Communication-efficient learning of deep networks from decentralized
  data,'' in \emph{Artificial intelligence and statistics}.\hskip 1em plus
  0.5em minus 0.4em\relax PMLR, 2017, pp. 1273--1282.

\bibitem{kairouz2021advances}
P.~Kairouz, H.~B. McMahan, B.~Avent, A.~Bellet, M.~Bennis, A.~N. Bhagoji,
  K.~Bonawitz, Z.~Charles, G.~Cormode, R.~Cummings \emph{et~al.}, ``Advances
  and open problems in federated learning,'' \emph{Foundations and
  trends{\textregistered} in machine learning}, vol.~14, no. 1--2, pp. 1--210,
  2021.

\bibitem{10645291}
Z.~Li, Z.~Chen, X.~Wei, S.~Gao, H.~Yue, Z.~Xu, and T.~Q. Quek, ``Exploiting
  complex network-based clustering for personalization-enhanced hierarchical
  federated edge learning,'' \emph{IEEE Transactions on Mobile Computing},
  vol.~23, no.~12, pp. 14\,852--14\,870, 2024.

\bibitem{11219205}
H.~Wang, B.~Li, P.~Chen, L.~Wu, Z.~Li, and T.~Q. Quek, ``Lbkd: Rethinking
  federated backdoors for low-altitude economy via llms and bidirectional
  knowledge distillation,'' \emph{IEEE Transactions on Network Science and
  Engineering}, pp. 1--18, 2025.

\bibitem{LI2026108038}
Z.~Li, B.~Li, K.~Zhang, B.~Wei, H.~Liu, Z.~Chen, X.~Xie, and T.~Q. Quek,
  ``Heterogeneity-aware high-efficiency federated learning with hybrid
  synchronous-asynchronous splitting strategy,'' \emph{Neural Networks}, vol.
  193, p. 108038, 2026.

\bibitem{10566979}
X.~Wei, D.~Jiao, S.~Tong, Z.~Li, C.~Ren, and H.~Yue, ``Cfpa: Cognitive
  federated partial adaptation for effective personalization,'' in \emph{2023
  19th International Conference on Mobility, Sensing and Networking (MSN)},
  2023, pp. 553--561.

\bibitem{cheng2022differentially}
A.~Cheng, P.~Wang, X.~S. Zhang, and J.~Cheng, ``Differentially private
  federated learning with local regularization and sparsification,'' in
  \emph{Proceedings of the IEEE/CVF Conference on Computer Vision and Pattern
  Recognition}, 2022.

\bibitem{geyer2017differentially}
R.~C. Geyer, T.~Klein, and M.~Nabi, ``Differentially private federated
  learning: A client level perspective,'' \emph{arXiv preprint
  arXiv:1712.07557}, 2017.

\bibitem{hu2022federated}
R.~Hu, Y.~Gong, and Y.~Guo, ``Federated learning with sparsified model
  perturbation: Improving accuracy under client-level differential privacy,''
  \emph{arXiv preprint arXiv:2202.07178}, 2022.

\bibitem{KairouzL2021the}
P.~Kairouz, Z.~Liu, and T.~Steinke, ``The distributed discrete gaussian
  mechanism for federated learning with secure aggregation,'' in \emph{Proc.
  International Conference on Machine Learning (ICML)}, Jul. 2021, pp.
  5201--5212.

\bibitem{liu2024fedbcgd}
J.~Liu, F.~Shang, Y.~Liu, H.~Liu, Y.~Li, and Y.~Gong, ``Fedbcgd:
  Communication-efficient accelerated block coordinate gradient descent for
  federated learning,'' in \emph{Proceedings of the 32nd ACM International
  Conference on Multimedia}, 2024, pp. 2955--2963.

\bibitem{liuimproving}
J.~Liu, Y.~Liu, F.~Shang, H.~Liu, J.~Liu, and W.~Feng, ``Improving
  generalization in federated learning with highly heterogeneous data via
  momentum-based stochastic controlled weight averaging,'' in
  \emph{Forty-second International Conference on Machine Learning}.

\bibitem{wu2025llm}
W.~Wu, Z.~Chen, X.~Qiu, S.~Song, X.~Huang, F.~Ma, and J.~Xiao, ``Llm-enhanced
  multimodal fusion for cross-domain sequential recommendation,'' \emph{arXiv
  preprint arXiv:2506.17966}, 2025.

\bibitem{wu2025prompt}
W.~Wu, X.~Qiu, S.~Song, Z.~Chen, X.~Huang, F.~Ma, and J.~Xiao, ``Prompt
  categories cluster for weakly supervised semantic segmentation,'' in
  \emph{Proceedings of the Computer Vision and Pattern Recognition Conference},
  2025, pp. 3198--3207.

\bibitem{10.1145/3746027.3755226}
\BIBentryALTinterwordspacing
J.~Liu, F.~Shang, Y.~Tian, H.~Liu, and Y.~Liu, ``Consistency of local and
  global flatness for federated learning,'' in \emph{Proceedings of the 33rd
  ACM International Conference on Multimedia}, ser. MM '25.\hskip 1em plus
  0.5em minus 0.4em\relax New York, NY, USA: Association for Computing
  Machinery, 2025, p. 3875–3883. [Online]. Available:
  \url{https://doi.org/10.1145/3746027.3755226}
\BIBentrySTDinterwordspacing

\bibitem{yang2024learning}
S.~Yang, J.~Yang, M.~Zhou, Z.~Huang, W.-S. Zheng, X.~Yang, and J.~Ren,
  ``Learning from human educational wisdom: A student-centered knowledge
  distillation method,'' \emph{IEEE Transactions on Pattern Analysis and
  Machine Intelligence}, vol.~46, no.~6, pp. 4188--4205, 2024.

\bibitem{lin2025semantically}
G.~Lin, S.~Yang, W.-S. Zheng, Z.~Li, and Z.~Huang, ``A semantically guided and
  focused network for occluded person re-identification,'' \emph{IEEE
  Transactions on Information Forensics and Security}, vol.~20, pp. 9716--9731,
  2025.

\bibitem{huang2022feature}
Z.~Huang, S.~Yang, M.~Zhou, Z.~Li, Z.~Gong, and Y.~Chen, ``Feature map
  distillation of thin nets for low-resolution object recognition,'' \emph{IEEE
  Transactions on Image Processing}, vol.~31, pp. 1364--1379, 2022.

\bibitem{zhang2022understanding}
X.~Zhang, X.~Chen, M.~Hong, Z.~S. Wu, and J.~Yi, ``Understanding clipping for
  federated learning: Convergence and client-level differential privacy,'' in
  \emph{International Conference on Machine Learning, ICML 2022}, 2022.

\bibitem{Rui2022Federated}
R.~Hu, Y.~Gong, and Y.~Guo, ``Federated learning with sparsified model
  perturbation: Improving accuracy under client-level differential privacy,''
  \emph{CoRR}, 2022.

\bibitem{foret2020sharpness}
P.~Foret, A.~Kleiner, H.~Mobahi, and B.~Neyshabur, ``Sharpness-aware
  minimization for efficiently improving generalization,'' \emph{arXiv preprint
  arXiv:2010.01412}, 2020.

\bibitem{izmailov2018averaging}
P.~Izmailov, D.~Podoprikhin, T.~Garipov, D.~Vetrov, and A.~G. Wilson,
  ``Averaging weights leads to wider optima and better generalization,''
  \emph{arXiv preprint arXiv:1803.05407}, 2018.

\bibitem{kwon2021asam}
J.~Kwon, J.~Kim, H.~Park, and I.~K. Choi, ``Asam: Adaptive sharpness-aware
  minimization for scale-invariant learning of deep neural networks,'' in
  \emph{International Conference on Machine Learning}.\hskip 1em plus 0.5em
  minus 0.4em\relax PMLR, 2021, pp. 5905--5914.

\bibitem{zhao2022penalizing}
Y.~Zhao, H.~Zhang, and X.~Hu, ``Penalizing gradient norm for efficiently
  improving generalization in deep learning,'' in \emph{International
  conference on machine learning}.\hskip 1em plus 0.5em minus 0.4em\relax PMLR,
  2022, pp. 26\,982--26\,992.

\bibitem{osher2022laplacian}
S.~Osher, B.~Wang, P.~Yin, X.~Luo, F.~Barekat, M.~Pham, and A.~Lin, ``Laplacian
  smoothing gradient descent,'' \emph{Research in the Mathematical Sciences},
  vol.~9, no.~3, p.~55, 2022.

\bibitem{wang2020dp}
B.~Wang, Q.~Gu, M.~Boedihardjo, L.~Wang, F.~Barekat, and S.~J. Osher,
  ``Dp-lssgd: A stochastic optimization method to lift the utility in
  privacy-preserving erm,'' in \emph{Mathematical and Scientific Machine
  Learning}.\hskip 1em plus 0.5em minus 0.4em\relax PMLR, 2020, pp. 328--351.

\bibitem{DP-Fed-LS}
Z.~Liang, B.~Wang, Q.~Gu, S.~Osher, and Y.~Yao, ``Differentially private
  federated learning with laplacian smoothing,'' \emph{Applied and
  Computational Harmonic Analysis}, p. 101660, 2024.

\bibitem{hochreiter1997flat}
S.~Hochreiter and J.~Schmidhuber, ``Flat minima,'' \emph{Neural computation},
  vol.~9, no.~1, pp. 1--42, 1997.

\bibitem{keskar2017on}
N.~S. Keskar, D.~Mudigere, J.~Nocedal, M.~Smelyanskiy, and P.~T.~P. Tang, ``On
  large-batch training for deep learning: Generalization gap and sharp
  minima,'' in \emph{International Conference on Learning Representations},
  2017.

\bibitem{neyshabur2017exploring}
B.~Neyshabur, S.~Bhojanapalli, D.~McAllester, and N.~Srebro, ``Exploring
  generalization in deep learning,'' \emph{Advances in neural information
  processing systems}, vol.~30, 2017.

\bibitem{dinh2017sharp}
L.~Dinh, R.~Pascanu, S.~Bengio, and Y.~Bengio, ``Sharp minima can generalize
  for deep nets,'' in \emph{International Conference on Machine
  Learning}.\hskip 1em plus 0.5em minus 0.4em\relax PMLR, 2017, pp. 1019--1028.

\bibitem{virmaux2018lipschitz}
A.~Virmaux and K.~Scaman, ``Lipschitz regularity of deep neural networks:
  analysis and efficient estimation,'' \emph{Advances in Neural Information
  Processing Systems}, vol.~31, 2018.

\bibitem{xie2021a}
Z.~Xie, I.~Sato, and M.~Sugiyama, ``A diffusion theory for deep learning
  dynamics: Stochastic gradient descent exponentially favors flat minima,'' in
  \emph{International Conference on Learning Representations}, 2021.

\bibitem{dwork2014algorithmic}
C.~Dwork, A.~Roth \emph{et~al.}, ``The algorithmic foundations of differential
  privacy,'' \emph{Foundations and Trends{\textregistered} in Theoretical
  Computer Science}, vol.~9, no. 3--4, pp. 211--407, 2014.

\bibitem{mironov2017renyi}
I.~Mironov, ``R{\'e}nyi differential privacy,'' in \emph{2017 IEEE 30th
  computer security foundations symposium (CSF)}.\hskip 1em plus 0.5em minus
  0.4em\relax IEEE, 2017, pp. 263--275.

\bibitem{he2021tighter}
F.~He, B.~Wang, and D.~Tao, ``Tighter generalization bounds for iterative
  differentially private learning algorithms,'' in \emph{Uncertainty in
  Artificial Intelligence}.\hskip 1em plus 0.5em minus 0.4em\relax PMLR, 2021,
  pp. 802--812.

\bibitem{krizhevsky2009learning}
A.~Krizhevsky, G.~Hinton \emph{et~al.}, ``Learning multiple layers of features
  from tiny images,'' 2009.

\bibitem{socher2013recursive}
R.~Socher, A.~Perelygin, J.~Wu, J.~Chuang, C.~D. Manning, A.~Y. Ng, and
  C.~Potts, ``Recursive deep models for semantic compositionality over a
  sentiment treebank,'' in \emph{Proceedings of the 2013 conference on
  empirical methods in natural language processing}, 2013, pp. 1631--1642.

\bibitem{dolan2005automatically}
B.~Dolan and C.~Brockett, ``Automatically constructing a corpus of sentential
  paraphrases,'' in \emph{Third international workshop on paraphrasing
  (IWP2005)}, 2005.

\bibitem{rajpurkar2018know}
P.~Rajpurkar, R.~Jia, and P.~Liang, ``Know what you don't know: Unanswerable
  questions for squad,'' \emph{arXiv preprint arXiv:1806.03822}, 2018.

\bibitem{hsu2019measuring}
T.-M.~H. Hsu, H.~Qi, and M.~Brown, ``Measuring the effects of non-identical
  data distribution for federated visual classification,'' \emph{arXiv preprint
  arXiv:1909.06335}, 2019.

\bibitem{he2016deep}
K.~He, X.~Zhang, S.~Ren, and J.~Sun, ``Deep residual learning for image
  recognition,'' in \emph{Proceedings of the IEEE conference on computer vision
  and pattern recognition}, 2016, pp. 770--778.

\bibitem{dosovitskiy2020image}
A.~Dosovitskiy, L.~Beyer, A.~Kolesnikov, D.~Weissenborn, X.~Zhai,
  T.~Unterthiner, M.~Dehghani, M.~Minderer, G.~Heigold, S.~Gelly \emph{et~al.},
  ``An image is worth 16x16 words: Transformers for image recognition at
  scale,'' \emph{arXiv preprint arXiv:2010.11929}, 2020.

\end{thebibliography}

\newpage
\onecolumn
\appendix
\section{Assumption}

We analyze generalization based on  following assumptions: 
\begin{Assumption}
\label{asp:smooth} \textit{(Smoothness).  $F_i$ is $L$-smooth for all $i \in$ $[N]$, 
	\begin{equation}
	\left\|\nabla F_i(\boldsymbol{x}_{1})-\nabla F_i(\boldsymbol{x}_{2})\right\| \leq L\|\boldsymbol{x}_{1}-\boldsymbol{x}_{2}\|
	\end{equation}
	for all $\boldsymbol{x}_{1}, \boldsymbol{x}_{2}$ in its domain and $i \in[N]$.}
\end{Assumption}
\begin{Assumption}\label{asp:data_var}  \textit{(Bounded variance of data heterogeneity). The global variability of the local gradient of the loss function is bounded by $\sigma_g^2$ for all $i \in[N]$, 
	\begin{equation}
	\left\|\nabla F_i\left(\boldsymbol{x}\right)-\nabla F\left(\boldsymbol{x}\right)\right\|^2 \leq \sigma_g^2
	\end{equation}
	}
\end{Assumption}
\begin{Assumption}
(Bounded variance of stochastic gradient).\label{asp:sgd_var}  The stochastic gradient $\nabla F_i\left(\boldsymbol{x}, \xi_i\right)$, computed by the $i$-th client of model parameter $\boldsymbol{x}$ using mini-batch $\xi_i$, is an unbiased estimator of $\nabla F_i(\boldsymbol{x})$ with variance bounded by $\sigma_l^2$, i.e.,
\begin{equation}
	\mathbb{E}_{\xi_i}\left\|\nabla F_i\left(\boldsymbol{x}, \xi_i\right) - \nabla F_i(\boldsymbol{x})\right\|^2 \leq \sigma_l^2
\end{equation}
for all $i \in [N]$, where the expectation is over all local datasets.
\end{Assumption}
\begin{Assumption}
\label{asp:clip}  There exists a clipping constant $C$ independent of $i, r$ such that $\left\|\bar{\Delta}_i^r\right\| \leq C$.
\end{Assumption}

\section{ Main Lemmas} 
\begin{lemma} \label{lem:bias-var} Suppose $\left\{X_1, \cdots, X_\tau\right\} \subset \mathbb{R}^d$ be random variables that are potentially dependent. If their marginal means and variances satisfy $\mathbb{E}\left[X_i\right]=\mu_i$ and $\mathbb{E}\left[\| X_i-\right.$ $\left.\mu_i \|^2\right] \leq \sigma^2$, then it holds that
	$$
	\mathbb{E}\left[\left\|\sum_{i=1}^\tau X_i\right\|^2\right] \leq\left\|\sum_{i=1}^\tau \mu_i\right\|^2+\tau^2 \sigma^2
	$$
	If they are correlated in the Markov way such that $\mathbb{E}\left[X_i \mid X_{i-1}, \cdots X_1\right]=\mu_i$ and $\mathbb{E}\left[\left\|X_i-\mu_i\right\|^2 \mid\right.$ $\left.\mu_i\right] \leq \sigma^2$, i.e., the variables $\left\{X_i-\mu_i\right\}$ form a martingale. Then the following tighter bound holds:
	$$
	\mathbb{E}\left[\left\|\sum_{i=1}^\tau X_i\right\|^2\right] \leq 2 \mathbb{E}\left[\left\|\sum_{i=1}^\tau \mu_i\right\|^2\right]+2 \tau \sigma^2
	$$
\end{lemma}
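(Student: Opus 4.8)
The plan is to treat the two bounds separately, reducing each to a bias--variance decomposition of $\sum_{i=1}^\tau X_i$ about its mean and invoking the martingale structure only in the second (tighter) bound. In both cases I would introduce the centered variables $Y_i := X_i - \mu_i$, so that $\sum_{i=1}^\tau X_i = \sum_{i=1}^\tau \mu_i + \sum_{i=1}^\tau Y_i$, and then focus on controlling the ``variance'' term $\mathbb{E}[\|\sum_{i=1}^\tau Y_i\|^2]$ under the two different hypotheses.

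For the first bound the means $\mu_i$ are deterministic constants, so expanding $\|\sum_i \mu_i + \sum_i Y_i\|^2$ and taking expectations kills the cross term $2\langle \sum_i \mu_i,\ \mathbb{E}[\sum_i Y_i]\rangle$ because $\mathbb{E}[Y_i]=0$. It then remains to bound $\mathbb{E}[\|\sum_i Y_i\|^2]$. Since the $X_i$ may be arbitrarily dependent, I cannot split the off-diagonal terms $\mathbb{E}[\langle Y_i, Y_j\rangle]$, so I would instead apply the Cauchy--Schwarz (power-mean) inequality $\|\sum_{i=1}^\tau Y_i\|^2 \leq \tau \sum_{i=1}^\tau \|Y_i\|^2$, take expectations, and use $\mathbb{E}[\|Y_i\|^2]\leq \sigma^2$ to reach $\tau^2 \sigma^2$. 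Adding back $\|\sum_i \mu_i\|^2$ yields the first claim.

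For the second bound the decisive difference is that $\mu_i$ is now $\mathcal{F}_{i-1}$-measurable (predictable), where $\mathcal{F}_{i-1}=\sigma(X_1,\dots,X_{i-1})$, so $\{Y_i\}$ is a martingale difference sequence. The central step is the orthogonality of the increments: for $i>j$ the variable $Y_j$ is $\mathcal{F}_{i-1}$-measurable, so conditioning on $\mathcal{F}_{i-1}$ and using $\mathbb{E}[Y_i\mid \mathcal{F}_{i-1}]=0$ gives $\mathbb{E}[\langle Y_i,Y_j\rangle]=0$. Hence $\mathbb{E}[\|\sum_i Y_i\|^2]=\sum_i \mathbb{E}[\|Y_i\|^2]\leq \tau \sigma^2$, which is a factor of $\tau$ sharper than in the dependent case. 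However, since $\mu_i$ is now random and correlated with the earlier increments, the cross term between $\sum_i \mu_i$ and $\sum_i Y_i$ no longer vanishes; so instead of an exact split I would invoke Young's inequality $\|a+b\|^2\leq 2\|a\|^2+2\|b\|^2$ with $a=\sum_i \mu_i$ and $b=\sum_i Y_i$, producing the stated $2\,\mathbb{E}[\|\sum_i \mu_i\|^2]+2\tau\sigma^2$.

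The main obstacle I anticipate is the bookkeeping in the martingale case: pinning down the correct filtration, verifying that $\mu_i$ is indeed predictable, and confirming that the martingale-difference property eliminates \emph{all} cross terms $\mathbb{E}[\langle Y_i,Y_j\rangle]$ for $i\neq j$. The factor of $2$ in the tighter bound is exactly the price paid for using Young's inequality to circumvent the non-vanishing cross term between the random predictable means and the increments; recognizing that this loss is unavoidable without further structural assumptions is the conceptual crux.
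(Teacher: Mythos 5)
Your proof is correct. Note that the paper itself states this lemma without any proof (it is a standard auxiliary result imported from the federated-optimization literature), so there is no in-paper argument to compare against; your two-part argument — exact bias--variance split plus Cauchy--Schwarz $\left\|\sum_{i=1}^\tau Y_i\right\|^2\le\tau\sum_{i=1}^\tau\|Y_i\|^2$ for the arbitrarily dependent case, and martingale-difference orthogonality plus Young's inequality (needed precisely because the conditional means $\mu_i$ are random and correlated with earlier increments, so the cross term no longer vanishes) for the tighter bound — is exactly the standard derivation and all steps check out.
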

\begin{lemma} \label{lem:par_sample} Given vectors $v_1, \cdots, v_N \in \mathbb{R}^d$ and $\bar{v}=\frac{1}{N} \sum_{i=1}^N v_i$, if we sample $\mathcal{S} \subset\{1, \cdots, N\}$ uniformly randomly such that $|\mathcal{S}|=S$, then it holds that
	
	$$
	\mathbb{E}\left[\left\|\frac{1}{S} \sum_{i \in \mathcal{S}} v_i\right\|^2\right]=\|\bar{v}\|^2+\frac{N-S}{S(N-1)} \frac{1}{N} \sum_{i=1}^N\left\|v_i-\bar{v}\right\|^2 .
	$$
\end{lemma}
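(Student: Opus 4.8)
The plan is to prove this via a bias--variance decomposition combined with the method of indicator random variables, which is the standard route for establishing the variance of sampling without replacement. First I would write $\hat{v} := \frac{1}{S}\sum_{i\in\mathcal{S}} v_i = \frac{1}{S}\sum_{i=1}^N \mathbf{1}_i v_i$, where $\mathbf{1}_i := \mathbf{1}[i \in \mathcal{S}]$ indicates that index $i$ lands in the sample. Since $\mathcal{S}$ is a uniformly random subset of size $S$, each index is included with probability $\mathbb{E}[\mathbf{1}_i] = S/N$, so $\mathbb{E}[\hat{v}] = \frac{1}{S}\sum_i \frac{S}{N} v_i = \bar{v}$; that is, $\hat{v}$ is unbiased for $\bar{v}$. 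Invoking the elementary identity $\mathbb{E}[\|\hat{v}\|^2] = \|\mathbb{E}[\hat{v}]\|^2 + \mathbb{E}[\|\hat{v} - \mathbb{E}[\hat{v}]\|^2] = \|\bar{v}\|^2 + \mathrm{Var}(\hat{v})$ then isolates the $\|\bar{v}\|^2$ term, and it remains only to show $\mathrm{Var}(\hat{v}) = \frac{N-S}{S(N-1)}\cdot\frac{1}{N}\sum_i \|v_i - \bar{v}\|^2$.

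For the variance, I would center the vectors by setting $w_i := v_i - \bar{v}$, which satisfies the key relation $\sum_{i=1}^N w_i = 0$. Because $\sum_i (\mathbf{1}_i - S/N) = 0$, one may freely subtract $\bar{v}$ to write $\hat{v} - \bar{v} = \frac{1}{S}\sum_i (\mathbf{1}_i - S/N) w_i$, and expanding the squared norm gives $\mathrm{Var}(\hat{v}) = \frac{1}{S^2}\sum_{i,j}\mathrm{Cov}(\mathbf{1}_i,\mathbf{1}_j)\langle w_i, w_j\rangle$. The required covariances follow from direct counting: on the diagonal $\mathrm{Var}(\mathbf{1}_i) = \frac{S}{N}(1 - \frac{S}{N}) = \frac{S(N-S)}{N^2}$, and off the diagonal $\mathbb{E}[\mathbf{1}_i \mathbf{1}_j] = \Pr[i,j\in\mathcal{S}] = \frac{S(S-1)}{N(N-1)}$ for $i\neq j$, which after subtracting $(S/N)^2$ simplifies to $\mathrm{Cov}(\mathbf{1}_i,\mathbf{1}_j) = -\frac{S(N-S)}{N^2(N-1)}$.

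Substituting these two values and factoring out the common prefactor $\frac{S(N-S)}{N^2}$ reduces the variance to $\frac{N-S}{SN^2}\big[\sum_i \|w_i\|^2 - \frac{1}{N-1}\sum_{i\neq j}\langle w_i, w_j\rangle\big]$. Here the centering pays off: since $\sum_{i\neq j}\langle w_i,w_j\rangle = \|\sum_i w_i\|^2 - \sum_i \|w_i\|^2 = -\sum_i \|w_i\|^2$, the bracket collapses to $(1 + \frac{1}{N-1})\sum_i \|w_i\|^2 = \frac{N}{N-1}\sum_i\|w_i\|^2$, and a final cancellation yields exactly $\frac{N-S}{S(N-1)}\cdot\frac{1}{N}\sum_i\|v_i-\bar{v}\|^2$. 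I expect the main obstacle to be the covariance bookkeeping --- correctly deriving the negative off-diagonal covariance and the finite-population correction factor $\frac{N-S}{N-1}$ --- since the algebra involving the $N-1$ denominators is where sign and normalization errors most easily creep in; the centering device $\sum_i w_i = 0$ is precisely what makes the cross terms cancel cleanly.
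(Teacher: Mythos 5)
Your proof is correct and rests on exactly the same two combinatorial facts the paper uses, namely $\Pr[i\in\mathcal{S}]=S/N$ and $\Pr[i,j\in\mathcal{S}]=\frac{S(S-1)}{N(N-1)}$; the only difference is organizational, in that you center the vectors first and compute the covariance sum $\frac{1}{S^2}\sum_{i,j}\mathrm{Cov}(\mathbf{1}_i,\mathbf{1}_j)\langle w_i,w_j\rangle$, whereas the paper expands $\bigl\|\frac{1}{S}\sum_{i\in\mathcal{S}}v_i\bigr\|^2$ directly on the raw vectors and only recovers the centered form in the last line. Both routes yield the identical identity, so this is essentially the paper's argument.
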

\begin{proof}
	Letting $\mathbb{I}\{i \in \mathcal{S}\}$ be the indicator for the event $i \in \mathcal{S}_r$, we prove this lemma by direct calculation as follows:
	$$
	\begin{aligned}
		\mathbb{E}\left[\left\|\frac{1}{S} \sum_{i \in \mathcal{S}} v_i\right\|^2\right] & =\mathbb{E}\left[\left\|\frac{1}{S} \sum_{i=1}^N v_i \mathbb{I}\{i \in \mathcal{S}\}\right\|^2\right] \\
		& =\frac{1}{S^2} \mathbb{E}\left[\left(\sum_i\left\|v_i\right\|^2 \mathbb{I}\{i \in \mathcal{S}\}+2 \sum_{i<j} v_i^{\top} v_j \mathbb{I}\{i, j \in \mathcal{S}\}\right)\right] \\
		& =\frac{1}{S N} \sum_{i=1}^N\left\|v_i\right\|^2+\frac{1}{S^2} \frac{S(S-1)}{N(N-1)} 2 \sum_{i<j} v_i^{\top} v_j \\
		& =\frac{1}{S N} \sum_{i=1}^N\left\|v_i\right\|^2+\frac{1}{S^2} \frac{S(S-1)}{N(N-1)}\left(\left\|\sum_{i=1}^N v_i\right\|^2-\sum_{i=1}^N\left\|v_i\right\|^2\right) \\
		& =\frac{N-S}{S(N-1)} \frac{1}{N} \sum_{i=1}^N\left\|v_i\right\|^2+\frac{N(S-1)}{S(N-1)}\|\bar{v}\|^2 \\
		& =\frac{N-S}{S(N-1)} \frac{1}{N} \sum_{i=1}^N\left\|v_i-\bar{v}\right\|^2+\|\bar{v}\|^2 .
	\end{aligned}
	$$
\end{proof}

\begin{lemma}\label{lem:dp-noise} (Regularity of DP-noised stochastic gradients). Under Assumption \ref{asp:clip}, for any iteration $r \in[R]$, we define the following two pseudo-gradients, which are the gradient $\tilde{g}^{r+1}$ with gradient cropping and DP noise addition  and the gradient $g^{r+1}$ without cropping and noise addition,
\end{lemma}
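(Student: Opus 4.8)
The plan is to establish the two regularity properties that a lemma of this type requires for the convergence analysis: that the DP-noised pseudo-gradient $\tilde{g}^{r+1}$ is conditionally unbiased for its noiseless, unclipped counterpart $g^{r+1}$, and that its conditional second moment about $g^{r+1}$ is controlled by the injected noise level. First I would trace the algorithm's update chain to write the difference $\tilde{g}^{r+1}-g^{r+1}$ explicitly. Both pseudo-gradients are built by aggregating per-client quantities and scaling by $-1/(\eta SK)$: the noised one uses $\Delta_i^t=\tilde{\Delta}_i^t-(1-\beta)K\eta\tilde{g}^r$, while the clean one uses the analogue formed from $\bar{\Delta}_i^t$. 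Since the momentum-correction term $(1-\beta)K\eta\tilde{g}^r$ is deterministic given the round history $\mathcal{F}_r$, it cancels in the difference, leaving only the discrepancy between $\tilde{\Delta}_i^t$ and $\bar{\Delta}_i^t$ summed over the sampled clients.

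The key step, and the reason Assumption~\ref{asp:clip} is invoked, is that under $\|\bar{\Delta}_i^r\|\le C$ the clipping factor $\min\!\left(1,\,C/\|\bar{\Delta}_i^r\|_2\right)$ equals $1$, so clipping is inactive and $\hat{\Delta}_i^t=\bar{\Delta}_i^t$ exactly. This collapses the entire difference to the injected Gaussian perturbation: $\tilde{\Delta}_i^t-\bar{\Delta}_i^t=n_i$ with $n_i\sim\mathcal{N}(0,\sigma^2 C^2\mathbf{I}_d/S)$, drawn independently across the sampled clients. Hence $\tilde{g}^{r+1}-g^{r+1}=-\tfrac{1}{\eta SK}\sum_{i=1}^S n_i$, a centered Gaussian sum.

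Unbiasedness then follows immediately by taking the conditional expectation over the noise, which annihilates each $n_i$ and yields $\mathbb{E}[\tilde{g}^{r+1}\mid\mathcal{F}_r]=g^{r+1}$. For the second moment I would use independence of the $n_i$ to compute $\mathbb{E}\big[\|\sum_i n_i\|^2\big]=\sum_i\mathbb{E}\|n_i\|^2=S\cdot d\sigma^2 C^2/S=d\sigma^2 C^2$, so that $\mathbb{E}\big[\|\tilde{g}^{r+1}-g^{r+1}\|^2\mid\mathcal{F}_r\big]\le d\sigma^2 C^2/(\eta SK)^2$. This is precisely the origin of the $\sigma^2 C^2/S^2$ term appearing in Theorem~1, and the $1/S^2$ scaling is what drives the variance down as more clients participate.

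The main obstacle I anticipate is not the Gaussian computation but justifying that clipping may be treated as inactive: Assumption~\ref{asp:clip} is exactly what licenses replacing $\hat{\Delta}_i^t$ by $\bar{\Delta}_i^t$, and the argument must invoke it cleanly so that no residual clipping-bias term survives into either the mean or the variance. A secondary bookkeeping point is confirming that neither the momentum correction nor the random sampling of $S$ clients introduces extra bias—the former because it is $\mathcal{F}_r$-measurable and cancels, the latter because the noise is added independently of which clients are selected, so the conditioning argument separates the sampling randomness from the DP noise.
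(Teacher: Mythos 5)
Your proposal is sound, and it is worth noting that the paper itself supplies no proof of this lemma at all: it merely asserts the two conclusions ``with Assumption 4,'' so your argument is filling a genuine gap rather than paralleling an existing derivation. Your two key moves are exactly the right ones. First, Assumption~4 ($\|\bar{\Delta}_i^r\|\le C$) makes the clipping factor $\min\{1,C/\|\bar{\Delta}_i^r\|_2\}$ identically $1$, so $\hat{\Delta}_i^t=\bar{\Delta}_i^t$ and the only discrepancy between $\tilde{\Delta}_i^t$ and $\bar{\Delta}_i^t$ is the injected Gaussian $n_i\sim\mathcal{N}(0,\sigma^2C^2\mathbf{I}_d/S)$; without this observation a clipping-bias term would contaminate the mean and the lemma's first claim would fail. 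Second, the $\mathcal{F}^r$-measurable momentum correction cancels in the difference --- here you are implicitly adopting the lemma's own convention that both pseudo-gradients carry the same history term $(1-\beta)g^r$ (the algorithm's pseudocode writes $\tilde{g}^r$ in one place and the lemma writes $g^r$; if these were taken to differ, a residual $(1-\beta)(\tilde{g}^r-g^r)$ would survive and the bound would require an additional recursion over rounds, which the paper clearly does not intend).

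The one point of friction is quantitative. Your careful computation gives
\begin{equation}
\mathbb{E}\left[\left\|\tilde{g}^{r+1}-g^{r+1}\right\|^2\right]=\frac{1}{(\eta SK)^2}\sum_{i=1}^{S}\mathbb{E}\left\|n_i\right\|^2=\frac{d\,\sigma^2C^2}{(\eta SK)^2},
\end{equation}
whereas the lemma asserts the bound $\sigma^2C^2/S^2$ (and writes $\mathbb{E}[\|\cdot\|]$ rather than $\mathbb{E}[\|\cdot\|^2]$, almost certainly a typo). The discrepancy is a factor of $d/(\eta K)^2$: the paper silently absorbs the ambient dimension $d$ and the $1/(\eta K)^2$ rescaling that comes from dividing the aggregated update by $\eta SK$. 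Your version is the correct one given the stated noise mechanism; the paper's constant is only recoverable by suppressing $d$ into the $\mathcal{O}(\cdot)$ notation and treating $\eta K$ as order one. Since this term propagates into the $\sigma^2C^2/S^2$ contribution in Theorem~1, you should state explicitly which normalization you are using so the reader can reconcile your bound with the one quoted in the convergence theorem.
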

\begin{equation}
	\tilde{g}^{r+1}=\frac{1}{\eta N K} \sum_{i=1}^N-\Delta_i^t=\frac{1}{\eta N K} \sum_{i=1}^N-\tilde{\Delta}_i^t+(1-\beta) g^r,
\end{equation}

\begin{equation}
	g^{r+1}=\beta \frac{1}{S} \sum_i \sum_k \nabla F_i\left(x_i^{r, k}+\delta_i^{r, k} ; \xi_i^{r, k}\right)+(1-\beta) g^r.
\end{equation}
We will come to the following conclusions with Assumption \ref{asp:clip},\\
1. $\mathbb{E}\left[\tilde{g}^{r+1}\right]=g^{r+1}$,\\
2. $\mathbb{E}\left[\|\tilde{g}^{r+1}-g^{r+1}\|\right] \leq \frac{\sigma^2 C^2}{S^2}$.

\section{Appendix A: Basic Assumptions and Notations}
Let $\mathcal{F}^0=\emptyset$ and $\mathcal{F}_i^{r, k}:=\sigma\left(\left\{x_i^{r, j}\right\}_{0 \leq j \leq k} \cup \mathcal{F}^r\right)$ and $\mathcal{F}^{r+1}:=\sigma\left(\cup_i \mathcal{F}_i^{r, K}\right)$ for all $r \geq 0$ where $\sigma(\cdot)$ indicates the $\sigma$-algebra. Let $\mathbb{E}_r[\cdot]:=\overline{\mathbb{E}}\left[\cdot \mid \mathcal{F}^r\right]$ be the expectation, conditioned on the filtration $\mathcal{F}^r$, with respect to the random variables $\left\{\mathcal{S}^r,\left\{\xi_i^{r, k}\right\}_{1 \leq i \leq N, 0 \leq k<K}\right\}$ in the $r$-th iteration. We also use $\mathbb{E}[\cdot]$ to denote the global expectation over all randomness in algorithms. Through out the proofs, we use $\sum_i$ to represent the sum over $i \in\{1, \ldots, N\}$, while $\sum_{i \in \mathcal{S}^r}$ denotes the sum over $i \in \mathcal{S}^r$. Similarly, we use $\sum_k$ to represent the sum of $k \in\{0, \ldots, K-1\}$. For all $r \geq 0$, we define the following auxiliary variables to facilitate proofs:

$$
\begin{aligned}
	\mathcal{E}_r & :=\mathbb{E}\left[\left\|\nabla f\left(\tilde{x}^{r}\right)-g^{r+1}\right\|^2\right] \\
	U_r & :=\frac{1}{N K} \sum_i \sum_k \mathbb{E}\left[\left\|x_i^{r, k}-x^{r}\right\|\right]^2 \\
	\zeta_i^{r, k} & :=\mathbb{E}\left[x_i^{r, k+1}-x_i^{r, k} \mid \mathcal{F}_i^{r, k}\right] \\
	\Xi_r & :=\frac{1}{N} \sum_{i=1}^N \mathbb{E}\left[\left\|\zeta_i^{r, 0}\right\|^2\right] \\
\end{aligned}
$$
$$
\begin{aligned}
\tilde{x}_i^{r, k}=x_i^{r, k+1 / 2}=x_i^{r, k}+\delta_i^{r, k}=x_i^{r, k}+\rho \frac{\tilde{g}^r}{\left\|\tilde{g}^r\right\|}
\end{aligned}
$$
$$
\begin{aligned}
	\tilde{x}^{r+1}=x^{r}+\delta^{r}=x^{r}+\rho \frac{\tilde{g}^r}{\left\|\tilde{g}^r\right\|}
\end{aligned}
$$
 Throughout the appendix, we let $\Delta:=f\left(x^0\right)-f^{\star}, G_0:=\frac{1}{N} \sum_i\left\|\nabla f_i\left(x^0\right)\right\|^2, x^{-1}:=x^0$ and $\mathcal{E}_{-1}:=$ $\mathbb{E}\left[\left\|\nabla f\left(x^0\right)-g^0\right\|^2\right]$. We will use the following foundational lemma for all our algorithms.

\section{DP-FedPGN Algorithm Analyze}
\begin{lemma} \label{lem:Fedavg_grad_err}
	Under Assumption \ref{asp:smooth} , if $\gamma L \leq \frac{1}{24}$, the following holds all $r \geq 0$ :
	
	$$
	\mathbb{E}\left[f\left(\tilde{x}^{r+1}\right)\right] \leq \mathbb{E}\left[f\left(\tilde{x}^{r}\right)\right]-\frac{11 \gamma}{24} \mathbb{E}\left[\left\|\nabla f\left(\tilde{x}^{r}\right)\right\|^2\right]+\frac{13 \gamma}{24} \mathcal{E}_r+\frac{13 \gamma}{24}\frac{\sigma^2 C^2}{S^2}
	$$
\end{lemma}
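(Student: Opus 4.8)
The plan is to establish this one-step descent bound directly from $L$-smoothness of the global objective $f=F=\frac{1}{N}\sum_i F_i$, which is itself $L$-smooth by Assumption~\ref{asp:smooth} (an average of $L$-smooth functions). Since the server update is $\tilde{x}^{r+1}=\tilde{x}^{r}-\gamma\tilde{g}^{r+1}$, the smoothness inequality gives
\begin{equation*}
f(\tilde{x}^{r+1}) \leq f(\tilde{x}^{r}) - \gamma\langle\nabla f(\tilde{x}^{r}),\tilde{g}^{r+1}\rangle + \tfrac{L\gamma^2}{2}\|\tilde{g}^{r+1}\|^2 .
\end{equation*}
First I would rewrite the cross term using the polarization identity $-\langle a,b\rangle=\tfrac{1}{2}\|a-b\|^2-\tfrac{1}{2}\|a\|^2-\tfrac{1}{2}\|b\|^2$ with $a=\nabla f(\tilde{x}^{r})$ and $b=\tilde{g}^{r+1}$, so that the error is measured against the \emph{noised} pseudo-gradient $\tilde{g}^{r+1}$ rather than against $g^{r+1}$; this is the choice that lets the DP noise and the gradient-approximation error end up with the same coefficient at the end.

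The key algebraic step is to handle the two $\|\tilde{g}^{r+1}\|^2$ contributions asymmetrically. The polarization produces a term $-\tfrac{\gamma}{2}\|\tilde{g}^{r+1}\|^2$, which I simply discard (it is nonpositive), while for the quadratic smoothness term $\tfrac{L\gamma^2}{2}\|\tilde{g}^{r+1}\|^2$ I apply $\|\tilde{g}^{r+1}\|^2\leq 2\|\nabla f(\tilde{x}^{r})\|^2+2\|\nabla f(\tilde{x}^{r})-\tilde{g}^{r+1}\|^2$. Collecting coefficients then yields $f(\tilde{x}^{r+1})\leq f(\tilde{x}^{r})-(\tfrac{\gamma}{2}-L\gamma^2)\|\nabla f(\tilde{x}^{r})\|^2+(\tfrac{\gamma}{2}+L\gamma^2)\|\nabla f(\tilde{x}^{r})-\tilde{g}^{r+1}\|^2$. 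Using the step-size condition $\gamma L\leq\tfrac{1}{24}$, hence $L\gamma^2\leq\tfrac{\gamma}{24}$, I would bound $\tfrac{\gamma}{2}-L\gamma^2\geq\tfrac{11\gamma}{24}$ and $\tfrac{\gamma}{2}+L\gamma^2\leq\tfrac{13\gamma}{24}$, which is exactly where the constants $\tfrac{11}{24}$ and $\tfrac{13}{24}$ come from.

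Finally I would take full expectation and expand $\mathbb{E}\|\nabla f(\tilde{x}^{r})-\tilde{g}^{r+1}\|^2$. Writing $\tilde{g}^{r+1}-g^{r+1}$ for the aggregated DP perturbation, Lemma~\ref{lem:dp-noise} gives unbiasedness $\mathbb{E}[\tilde{g}^{r+1}]=g^{r+1}$ together with the variance bound $\mathbb{E}\|\tilde{g}^{r+1}-g^{r+1}\|^2\leq\sigma^2C^2/S^2$; since the current-round Gaussian noise is zero-mean and independent of $\nabla f(\tilde{x}^{r})-g^{r+1}$, the cross term vanishes and $\mathbb{E}\|\nabla f(\tilde{x}^{r})-\tilde{g}^{r+1}\|^2=\mathcal{E}_r+\sigma^2C^2/S^2$. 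Substituting gives precisely the stated bound, with the common coefficient $\tfrac{13\gamma}{24}$ appearing on both $\mathcal{E}_r$ and $\sigma^2C^2/S^2$. The main obstacle, and the step I would justify most carefully, is this last decomposition: it relies on Assumption~\ref{asp:clip} making the clipping inactive (so $\tilde{g}^{r+1}$ is genuinely unbiased for $g^{r+1}$ with no clipping bias) and on the measurability of $\nabla f(\tilde{x}^{r})-g^{r+1}$ with respect to the filtration \emph{before} the round-$r$ noise is drawn, which is what kills the cross term. Everything else is routine bookkeeping.
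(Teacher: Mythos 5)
Your proposal is correct and follows essentially the same route as the paper: the $L$-smoothness descent inequality, splitting the inner product so that $-\tfrac{\gamma}{2}\|\nabla f(\tilde{x}^{r})\|^2$ and $+\tfrac{\gamma}{2}\|\nabla f(\tilde{x}^{r})-\tilde{g}^{r+1}\|^2$ emerge (your polarization identity is algebraically equivalent to the paper's Young's-inequality step), bounding $\tfrac{L\gamma^2}{2}\|\tilde{g}^{r+1}\|^2$ by $L\gamma^2\bigl(\|\nabla f(\tilde{x}^{r})\|^2+\|\nabla f(\tilde{x}^{r})-\tilde{g}^{r+1}\|^2\bigr)$, invoking $\gamma L\leq\tfrac{1}{24}$ to get the constants $\tfrac{11}{24}$ and $\tfrac{13}{24}$, and finally decomposing the error against $\tilde{g}^{r+1}$ into $\mathcal{E}_r+\sigma^2C^2/S^2$ via the unbiasedness and variance bound of Lemma~\ref{lem:dp-noise}. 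Your explicit justification of why the cross term vanishes (measurability of $\nabla f(\tilde{x}^{r})-g^{r+1}$ before the round-$r$ noise is drawn, and Assumption~\ref{asp:clip} rendering clipping inactive) is in fact more careful than the paper's, which cites Lemma~\ref{lem:dp-noise} without elaboration.
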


\begin{proof}
 Since $f$ is $L$-smooth, we have

$$
\begin{aligned}
	f\left(\tilde{x}^{r+1}\right) & \leq f\left(\tilde{x}^{r}\right)+\left\langle\nabla f\left(\tilde{x}^{r}\right), \tilde{x}^{r+1}-\tilde{x}^{r}\right\rangle+\frac{L}{2}\left\|\tilde{x}^{r+1}-\tilde{x}^{r}\right\|^2 \\
	& =f\left(\tilde{x}^{r}\right)-\gamma\left\|\nabla f\left(\tilde{x}^{r}\right)\right\|^2+\gamma\left\langle\nabla f\left(\tilde{x}^{r}\right), \nabla f\left(\tilde{x}^{r}\right)-\tilde{g}^{r+1}\right\rangle+\frac{L \gamma^2}{2}\left\|\tilde{g}^{r+1}\right\|^2 .
\end{aligned}
$$
Since $x^{r+1}=\tilde{x}^{r}-\gamma \tilde{g}^{r+1}$, using Young's inequality, we further have
$$
\begin{aligned}
	&\mathbb{E} f\left(x^{r+1}\right) \\
	& \leq f\left(\tilde{x}^{r}\right)-\frac{\gamma}{2}\left\|\nabla f\left(\tilde{x}^{r}\right)\right\|^2+\frac{\gamma}{2}\left\|\nabla f\left(\tilde{x}^{r}\right)-\tilde{g}^{r+1}\right\|^2+L \gamma^2\left(\left\|\nabla f\left(\tilde{x}^{r}\right)\right\|^2+\left\|\nabla f\left(\tilde{x}^{r}\right)-\tilde{g}^{r+1}\right\|^2\right) \\
	& \leq f\left(\tilde{x}^{r}\right)-\frac{11 \gamma}{24}\left\|\nabla f\left(\tilde{x}^{r}\right)\right\|^2+\frac{13 \gamma}{24}\left\|\nabla f\left(\tilde{x}^{r}\right)-\tilde{g}^{r+1}\right\|^2\\
	& \leq f\left(\tilde{x}^{r}\right)-\frac{11 \gamma}{24}\left\|\nabla f\left(\tilde{x}^{r}\right)\right\|^2+\frac{13 \gamma}{24}\left\|\nabla f\left(\tilde{x}^{r}\right)-g^{r+1}\right\|^2+\frac{13 \gamma}{24}\frac{\sigma^2 C^2}{S^2}
\end{aligned}
$$
where the last inequality is due to $\gamma L \leq \frac{1}{24}$ and Lemma \ref{lem:dp-noise}, Taking the global expectation completes the proof.
\end{proof}

\begin{lemma}\label{lem:Fedavg_client_drift}
	If $\gamma L \leq \frac{\beta}{6}$, the following holds for $r \geq 1$ :
	
	$$
	\mathcal{E}_r \leq\left(1-\frac{8 \beta}{9}\right) \mathcal{E}_{r-1}+\frac{4 \gamma^2 L^2}{\beta} \mathbb{E}\left[\left\|\nabla f\left(\tilde{x}^{r-1}\right)\right\|^2\right]+\frac{2 \beta^2 \sigma_l^2}{S K}+4 \beta L^2 U_r
	$$
	Additionally, it holds for $r=0$ that
	$$
	\mathcal{E}_0 \leq(1-\beta) \mathcal{E}_{-1}+\frac{2 \beta^2 \sigma_l^2}{S K}+4 \beta L^2 U_0
	$$
\end{lemma}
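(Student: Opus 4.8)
The plan is to establish the stated recursion through the standard momentum error-propagation argument, tracking how the discrepancy $\nabla f(\tilde x^r)-g^{r+1}$ evolves from round to round. Since $g^{r+1}=(1-\beta)g^r+\beta h^r$ with $h^r:=\frac{1}{SK}\sum_{i\in\mathcal S^r}\sum_k\nabla F_i(x_i^{r,k}+\delta_i^{r,k};\xi_i^{r,k})$, I would first subtract and add $\nabla f(\tilde x^{r-1})$ together with the conditional mean $\bar h^r:=\mathbb{E}[h^r\mid\mathcal F^r]=\frac{1}{NK}\sum_i\sum_k\nabla F_i(x_i^{r,k}+\delta_i^{r,k})$, writing $\nabla f(\tilde x^r)-g^{r+1}$ as the sum of four pieces: the carried-over error $(1-\beta)(\nabla f(\tilde x^{r-1})-g^r)$, the smoothness displacement $(1-\beta)(\nabla f(\tilde x^r)-\nabla f(\tilde x^{r-1}))$, the client-drift bias $\beta(\nabla f(\tilde x^r)-\bar h^r)$, and the mean-zero fluctuation $\beta(\bar h^r-h^r)$.

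Next I would take expectations. Because the fluctuation piece is conditionally mean-zero while the other three are $\mathcal F^r$-measurable, all cross terms with the fluctuation vanish, leaving $\mathbb{E}\|(1-\beta)(\nabla f(\tilde x^{r-1})-g^r)+(1-\beta)(\nabla f(\tilde x^r)-\nabla f(\tilde x^{r-1}))+\beta(\nabla f(\tilde x^r)-\bar h^r)\|^2+\beta^2\mathbb{E}\|\bar h^r-h^r\|^2$. The last variance term combines independence across the $S$ sampled clients (Lemma~\ref{lem:par_sample}) with the martingale structure of the per-step stochastic gradients (the tighter bound of Lemma~\ref{lem:bias-var}) and Assumption~\ref{asp:sgd_var}, producing the $\frac{2\beta^2\sigma_l^2}{SK}$ term; notably Assumption~\ref{asp:data_var} is never invoked, which is why no $\sigma_g^2$ appears in the bound.

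Then I would bound the deterministic part by Young's inequality $\|a+b+c\|^2\le(1+\frac{\beta}{2})\|a\|^2+(1+\frac{2}{\beta})\|b+c\|^2$ with the weight tuned to $\beta$: the carried-over term contributes a factor $(1-\beta)^2(1+\frac{\beta}{2})$, which the step-size condition bounds by $1-\frac{8\beta}{9}$. For the displacement term I would use $L$-smoothness, $\|\nabla f(\tilde x^r)-\nabla f(\tilde x^{r-1})\|\le L\|\tilde x^r-\tilde x^{r-1}\|=\gamma L\|\tilde g^r\|$, and relate $\|\tilde g^r\|$ to $\|\nabla f(\tilde x^{r-1})\|$, the $\frac{2}{\beta}$ Young weight converting $\gamma^2L^2$ into the coefficient $\frac{4\gamma^2L^2}{\beta}$. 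For the drift bias, the key observation is that the perturbation $\delta_i^{r,k}=\rho\tilde g^r/\|\tilde g^r\|$ is identical for all clients and steps within a round, so $\nabla F_i(\tilde x^r)-\nabla F_i(x_i^{r,k}+\delta_i^{r,k})$ reduces under $L$-smoothness to $L\|x^r-x_i^{r,k}\|$; averaging and squaring then yields exactly $4\beta L^2U_r$. Collecting the three bounds and invoking $\gamma L\le\frac{\beta}{6}$ to control the residual constants gives the claimed recursion for $r\ge1$.

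The main obstacle is the simultaneous calibration of the Young's-inequality parameters so that all three coefficients---the contraction $1-\frac{8\beta}{9}$, the gradient coefficient $\frac{4\gamma^2L^2}{\beta}$, and the drift coefficient $4\beta L^2$---emerge together under the single step-size condition $\gamma L\le\frac{\beta}{6}$ rather than only pairwise; this is where the bookkeeping is delicate, and handling the normalized perturbation correctly (so that its cross-client identity collapses the bias into a pure $U_r$ term free of heterogeneity) is essential. For the base case $r=0$ I would use $g^0=0$ and the convention $\tilde x^{-1}:=\tilde x^0$, which kills the displacement term entirely; the carried-over term then contracts as the cleaner $(1-\beta)\mathcal E_{-1}$, and the variance and drift terms are bounded exactly as above, giving $\mathcal E_0\le(1-\beta)\mathcal E_{-1}+\frac{2\beta^2\sigma_l^2}{SK}+4\beta L^2U_0$.
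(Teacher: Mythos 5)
Your proposal is correct and follows essentially the same route as the paper: the momentum error decomposition, the martingale variance bound from Lemma~\ref{lem:bias-var} yielding $\tfrac{2\beta^2\sigma_l^2}{SK}$, $\beta$-tuned Young's inequalities producing the $\bigl(1-\tfrac{8\beta}{9}\bigr)$ contraction and the $\tfrac{4\gamma^2L^2}{\beta}$ coefficient, and $L$-smoothness collapsing the drift bias into $4\beta L^2 U_r$. The only difference is cosmetic --- you perform the four-term split in one shot where the paper splits $(1-\beta)\bigl(\nabla f(\tilde{x}^{r})-g^{r}\bigr)$ in a second stage --- and your handling of the base case via $g^0=0$ and $\tilde{x}^{-1}:=\tilde{x}^0$ matches the paper's.
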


\begin{proof}
For $r>1$,
$$
\begin{aligned}
	\mathcal{E}_r= & \mathbb{E}\left[\left\|\nabla f\left(\tilde{x}^{r}\right)-g^{r+1}\right\|^2\right] \\
	= & \mathbb{E}\left[\left\|(1-\beta)\left(\nabla f\left(\tilde{x}^{r}\right)-g^r\right)+\beta\left(\nabla f\left(\tilde{x}^{r}\right)-\frac{1}{SK} \sum_i \sum_k \nabla F\left(\tilde{x}_i^{r, k} ; \xi_i^{r, k}\right)\right)\right\|^2\right] \\
	= & \mathbb{E}\left[\left\|(1-\beta)\left(\nabla f\left(\tilde{x}^{r}\right)-g^r\right)\right\|^2\right]+\beta^2 \mathbb{E}\left[\left\|\nabla f\left(\tilde{x}^{r}\right)-\frac{1}{S K} \sum_{i, k} \nabla F\left(\tilde{x}_i^{r, k} ; \xi_i^{r, k}\right)\right\|^2\right] \\
	& +2 \beta \mathbb{E}\left[\left\langle(1-\beta)\left(\nabla f\left(\tilde{x}^{r}\right)-g^r\right), \nabla f\left(\tilde{x}^{r}\right)-\frac{1}{N K} \sum_{i, k} \nabla f\left(\tilde{x}_i^{r, k}\right)\right\rangle\right] .
\end{aligned}
$$
Note that $\left\{\nabla F\left(\tilde{x}_i^{r, k} ; \xi_i^{r, k}\right)\right\}_{0 \leq k<K}$ are sequentially correlated. Applying the AM-GM inequality and Lemma  \ref{lem:bias-var}, we have
$$
\mathcal{E}_r \leq\left(1+\frac{\beta}{2}\right) \mathbb{E}\left[\left\|(1-\beta)\left(\nabla f\left(\tilde{x}^{r}\right)-g^r\right)\right\|^2\right]+2 \beta L^2 U_r+2 \beta^2\left(\frac{\sigma_l^2}{S K}+L^2 U_r\right)
$$
Using the AM-GM inequality again and Assumption \ref{asp:smooth}, we have
$$
\begin{aligned}
	\mathcal{E}_r & \leq(1-\beta)^2\left(1+\frac{\beta}{2}\right)\left[\left(1+\frac{\beta}{2}\right) \mathcal{E}_{r-1}+\left(1+\frac{2}{\beta}\right) L^2 \mathbb{E}\left[\left\|\tilde{x}^{r}-\tilde{x}^{r-1}\right\|^2\right]\right]+\frac{2 \beta^2 \sigma_l^2}{S K}+4 \beta L^2 U_r \\
	& \leq(1-\beta) \mathcal{E}_{r-1}+\frac{2}{\beta} L^2 \mathbb{E}\left[\left\|\tilde{x}^{r}-\tilde{x}^{r-1}\right\|^2\right]+\frac{2 \beta^2 \sigma_l^2}{S K}+4 \beta L^2 U_r \\
	& \leq\left(1-\frac{8 \beta}{9}\right) \mathcal{E}_{r-1}+4 \frac{\gamma^2 L^2}{\beta} \mathbb{E}\left[\left\|\nabla f\left(\tilde{x}^{r-1}\right)\right\|^2\right]+\frac{2 \beta^2 \sigma_l^2}{S K}+4 \beta L^2 U_r
\end{aligned}
$$

where we plug in $\left\|\tilde{x}^{r}-\tilde{x}^{r-1}\right\|^2 \leq 2 \gamma^2\left(\left\|\nabla f\left(\tilde{x}^{r-1}\right)\right\|^2+\left\|g^r-\nabla f\left(\tilde{x}^{r-1}\right)\right\|^2\right)$ and use $\gamma L \leq \frac{\beta}{6}$ in the last inequality. Similarly for $r=0$,

$$
\begin{aligned}
	\mathcal{E}_0 & \leq\left(1+\frac{\beta}{2}\right) \mathbb{E}\left[\left\|(1-\beta)\left(\nabla f\left(\tilde{x}^0\right)-g^0\right)\right\|^2\right]+2 \beta L^2 U_0+2 \beta^2\left(\frac{\sigma_l^2}{S K}+L^2 U_0\right) \\
	& \leq(1-\beta) \mathcal{E}_{-1}+\frac{2 \beta^2 \sigma_l^2}{S K}+4 \beta L^2 U_0
\end{aligned}
$$
\end{proof}

\begin{lemma}
	If $\eta L K \leq \frac{1}{\beta}$, the following holds for $r \geq 0$ :
	
	$$
	U_r \leq 2 e K^2 \Xi_r+K \eta^2 \beta^2 \sigma_l^2\left(1+2 K^3 L^2 \eta^2 \beta^2\right)
	$$
\end{lemma}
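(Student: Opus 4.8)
The plan is to bound the client drift $U_r$ by unrolling the $K$ local steps and separating the systematic (conditional-mean) part of each update from its stochastic noise. First I would write the local recursion $x_i^{r,k+1}-x^{r}=(x_i^{r,k}-x^{r})+\zeta_i^{r,k}-\eta\, n_i^{r,k}$, where $\zeta_i^{r,k}=\mathbb{E}[x_i^{r,k+1}-x_i^{r,k}\mid\mathcal{F}_i^{r,k}]=-\eta[\beta\nabla F_i(x_i^{r,k}+\delta^{r})+(1-\beta)\tilde g^{r}]$ is the conditional-mean step and $n_i^{r,k}:=g_i^{r,k}-\mathbb{E}[g_i^{r,k}\mid\mathcal{F}_i^{r,k}]$ is the zero-mean stochastic-gradient noise, whose conditional second moment is bounded by $\beta^2\sigma_l^2$ under Assumption \ref{asp:sgd_var}. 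Note that the perturbation $\delta^{r}=\rho\tilde g^{r}/\|\tilde g^{r}\|$ is constant across $k$, which simplifies all expressions. Since the local iterates are initialized at the round's global model, $x_i^{r,0}=x^{r}$, telescoping gives $x_i^{r,k}-x^{r}=\sum_{j=0}^{k-1}\zeta_i^{r,j}-\eta\sum_{j=0}^{k-1}n_i^{r,j}$, and the noise terms form a martingale-difference sequence with respect to $\{\mathcal{F}_i^{r,j}\}$.

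The second step is to apply the martingale form of Lemma \ref{lem:bias-var} to this decomposition, yielding $\mathbb{E}[\|x_i^{r,k}-x^{r}\|^2]\le 2\,\mathbb{E}[\|\sum_{j<k}\zeta_i^{r,j}\|^2]+2k\eta^2\beta^2\sigma_l^2$, and then Cauchy--Schwarz to get $\mathbb{E}[\|\sum_{j<k}\zeta_i^{r,j}\|^2]\le k\sum_{j<k}\mathbb{E}[\|\zeta_i^{r,j}\|^2]$. To relate each $\zeta_i^{r,j}$ back to the quantity $\zeta_i^{r,0}$ that appears in $\Xi_r$, I would write $\zeta_i^{r,j}-\zeta_i^{r,0}=-\eta\beta[\nabla F_i(x_i^{r,j}+\delta^{r})-\nabla F_i(x^{r}+\delta^{r})]$ and invoke $L$-smoothness (Assumption \ref{asp:smooth}), so that $\|\zeta_i^{r,j}-\zeta_i^{r,0}\|\le\eta\beta L\|x_i^{r,j}-x^{r}\|$; an AM--GM split then gives $\mathbb{E}[\|\zeta_i^{r,j}\|^2]\le(1+a)\,\mathbb{E}[\|\zeta_i^{r,0}\|^2]+(1+1/a)\eta^2\beta^2L^2\,\mathbb{E}[\|x_i^{r,j}-x^{r}\|^2]$ for any $a>0$.

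Substituting this back produces a self-referential inequality in which the drift $\mathbb{E}[\|x_i^{r,k}-x^{r}\|^2]$ reappears on the right-hand side through the sum over $j<k$. I would resolve it with a discrete Gr\"onwall / geometric-series argument: choosing the AM--GM parameter $a$ on the order of $K$ and using the hypothesis $\eta LK\le 1/\beta$ (equivalently $\eta^2\beta^2L^2K^2\le 1$), the per-step amplification factor takes the form $1+O(1/K)$, so unrolling over at most $K$ steps contributes the factor $(1+O(1/K))^{K}\le e$. This is the origin of the $2eK^2\Xi_r$ term once I average over $i$ and over $k$ through the $\tfrac{1}{NK}\sum_i\sum_k$ in the definition of $U_r$; the noise term $2k\eta^2\beta^2\sigma_l^2$ accumulates to $K\eta^2\beta^2\sigma_l^2$, and the residual coupling between the noise contribution and the smoothness-induced drift term yields the higher-order correction $2K^3L^2\eta^2\beta^2$.

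I expect the main obstacle to be the bookkeeping in this last step: the drift bound is genuinely recursive, so I must choose the auxiliary constants (the parameter $a$ above and the splitting constant in the AM--GM applied to $\|(x_i^{r,k}-x^{r})+\zeta_i^{r,k}\|^2$) carefully enough that the geometric factor closes at exactly $e$ rather than a larger constant, and that the noise contribution separates cleanly into the stated $K\eta^2\beta^2\sigma_l^2(1+2K^3L^2\eta^2\beta^2)$ form. The condition $\eta LK\le 1/\beta$ is precisely what keeps the amplification factor below $e$ and renders the $2K^3L^2\eta^2\beta^2$ piece a genuine lower-order correction rather than a dominant term.
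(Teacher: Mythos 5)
Your opening and closing steps coincide with the paper's: the decomposition $x_i^{r,k}-\tilde{x}^{r}=\sum_{j<k}\zeta_i^{r,j}-\eta\sum_{j<k}n_i^{r,j}$, the martingale case of Lemma \ref{lem:bias-var} giving $\mathbb{E}[\|x_i^{r,k}-\tilde{x}^{r}\|^2]\le 2\,\mathbb{E}[\|\sum_{j<k}\zeta_i^{r,j}\|^2]+2k\eta^2\beta^2\sigma_l^2$, and Cauchy--Schwarz on the sum of the $\zeta_i^{r,j}$. Where you diverge is the key middle step. You compare $\zeta_i^{r,j}$ directly to $\zeta_i^{r,0}$ via smoothness, which reintroduces the cumulative drift $\|x_i^{r,j}-\tilde{x}^{r}\|$ and forces a discrete Gr\"onwall argument. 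The paper instead compares \emph{consecutive} conditional-mean steps: $\mathbb{E}[\|\zeta_i^{r,j}-\zeta_i^{r,j-1}\|^2]\le\eta^2L^2\beta^2\,\mathbb{E}[\|x_i^{r,j}-x_i^{r,j-1}\|^2]$, and the single-step displacement satisfies $\mathbb{E}[\|x_i^{r,j}-x_i^{r,j-1}\|^2]\le\mathbb{E}[\|\zeta_i^{r,j-1}\|^2]+\eta^2\beta^2\sigma_l^2$, so one obtains a closed one-step linear recursion $\mathbb{E}[\|\zeta_i^{r,j}\|^2]\le(1+\tfrac{2}{k})\mathbb{E}[\|\zeta_i^{r,j-1}\|^2]+(k+1)L^2\eta^4\beta^4\sigma_l^2$ in which the cumulative drift never appears; unrolling with $(1+\tfrac{2}{k})^k\le e^2$ finishes it with no self-referential inequality at all. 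That trick is what your route is missing, and it is what each approach buys: the paper's version is shorter and gives a single geometric factor, while yours is the more generic client-drift template that works even when the recursion cannot be decoupled.

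One concrete caveat on your plan: the constant will not close at ``exactly $e$'' the way you hope. In your Gr\"onwall step the coefficient on $\sum_{j<k}D_j$ is of order $2k(1+1/a)\eta^2\beta^2L^2\le 2(1+1/a)/K$, so the amplification over $K$ steps is $e^{2(1+1/a)}$, and this multiplies the $(1+a)$ inflation of the $\Xi_r$ term; optimizing over $a$ leaves you with an absolute constant noticeably larger than $2e$ (indeed larger than the $2e^2$ the paper's own proof actually derives --- note the paper's displayed proof produces $2e^2k^2\,\mathbb{E}[\|\zeta_i^{r,0}\|^2]$, which already does not literally match the stated $2eK^2\Xi_r$). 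Since the downstream theorem only uses this lemma up to absolute constants under $\lesssim$, your argument still proves a bound of the required form and is acceptable, but if you want the stated constants you should switch to the consecutive-differences recursion.
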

\begin{proof}
Recall that $\zeta_i^{r, k}:=\mathbb{E}\left[x_i^{r, k+1}-x_i^{r, k} \mid \mathcal{F}_i^{r, k}\right]=-\eta\left((1-\beta) g^r+\beta \nabla f_i\left(\tilde{x}_i^{r, k}\right)\right)$. Then we have
$$
\begin{aligned}
	\mathbb{E}\left[\left\|\zeta_i^{r, j}-\zeta_i^{r, j-1}\right\|^2\right] & \leq \eta^2 L^2 \beta^2 \mathbb{E}\left[\left\|x_i^{r, j}-x_i^{r, j-1}\right\|^2\right] \\
	& \leq \eta^2 L^2 \beta^2\left(\eta^2 \beta^2 \sigma_l^2+\mathbb{E}\left[\left\|\zeta_i^{r, j-1}\right\|^2\right)\right.
\end{aligned}
$$
For any $1 \leq j \leq k-1 \leq K-2$, using $\eta L \leq \frac{1}{\beta K} \leq \frac{1}{\beta(k+1)}$, we have
$$
\begin{aligned}
	\mathbb{E}\left[\left\|\zeta_i^{r, j}\right\|^2\right] & \leq\left(1+\frac{1}{k}\right) \mathbb{E}\left[\left\|\zeta_i^{r, j-1}\right\|^2\right]+(1+k) \mathbb{E}\left[\left\|\zeta_i^{r, j}-\zeta_i^{r, j-1}\right\|^2\right] \\
	& \leq\left(1+\frac{2}{k}\right) \mathbb{E}\left[\left\|\zeta_i^{r, j-1}\right\|^2\right]+(k+1) L^2 \eta^4 \beta^4 \sigma_l^2 \\
	& \leq e^2 \mathbb{E}\left[\left\|\zeta_i^{r, 0}\right\|^2\right]+4 k^2 L^2 \eta^4 \beta^4 \sigma_l^2
\end{aligned}
$$
where the last inequality is by unrolling the recursive bound and using $\left(1+\frac{2}{k}\right)^k \leq e^2$. By Lemma \ref{lem:bias-var} , it holds that for $k \geq 2$,
$$
\begin{aligned}
	\mathbb{E}\left[\left\|x_i^{r, k}-\tilde{x}^{r}\right\|^2\right] & \leq 2 \mathbb{E}\left[\left\|\sum_{j=0}^{k-1} \zeta_i^{r, j}\right\|^2\right]+2 k \eta^2 \beta^2 \sigma_l^2 \\
	& \leq 2 k \sum_{j=0}^{k-1} \mathbb{E}\left[\left\|\zeta_i^{r, k}\right\|^2\right]+2 k \eta^2 \beta^2 \sigma_l^2 \\
	& \leq 2 e^2 k^2 \mathbb{E}\left[\left\|\zeta_i^{r, 0}\right\|^2\right]+2 k \eta^2 \beta^2 \sigma_l^2\left(1+4 k^3 L^2 \eta^2 \beta^2\right)
\end{aligned}
$$
This is also valid for $k=0,1$. Summing up over $i$ and $k$ finishes the proof.
\end{proof}

\begin{lemma}\label{lem:Fedavg_grad_norm}
	If $288 e(\eta K L)^2\left((1-\beta)^2+e(\beta \gamma L R)^2\right) \leq 1$, then it holds for $r \geq 0$ that
	
	$$
	\sum_{r=0}^{R-1} \Xi_r \leq \frac{1}{72 e K^2 L^2} \sum_{r=-1}^{R-2}\left(\mathcal{E}_r+\mathbb{E}\left[\left\|\nabla f\left(\tilde{x}^{r}\right)\right\|^2\right]\right)+2 \eta^2 \beta^2 e R G_0
	$$
\end{lemma}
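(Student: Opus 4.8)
The quantity to bound is the accumulated first-step drift $\sum_{r=0}^{R-1}\Xi_r$, where $\Xi_r=\frac1N\sum_i\mathbb E[\|\zeta_i^{r,0}\|^2]$ and, as recorded in the proof of the preceding lemma, $\zeta_i^{r,0}=-\eta\big((1-\beta)g^r+\beta\nabla f_i(\tilde x_i^{r,0})\big)$. The plan is to split $\zeta_i^{r,0}$ into a momentum part and a gradient part, bound each by quantities of round $r-1$, assemble a cross-round recursion, and sum over $r$, using the smallness hypothesis to collapse the constants.

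First I would apply $\|a+b\|^2\le 2\|a\|^2+2\|b\|^2$ to obtain
\[
\Xi_r\le 2\eta^2(1-\beta)^2\,\mathbb E[\|g^r\|^2]+2\eta^2\beta^2\,a_r,\qquad a_r:=\tfrac1N\textstyle\sum_i\mathbb E[\|\nabla f_i(\tilde x_i^{r,0})\|^2].
\]
For the momentum part I would write $g^r=\nabla f(\tilde x^{r-1})-\big(\nabla f(\tilde x^{r-1})-g^r\big)$ and use the definition $\mathcal E_{r-1}=\mathbb E[\|\nabla f(\tilde x^{r-1})-g^r\|^2]$ to get $\mathbb E[\|g^r\|^2]\le 2\mathcal E_{r-1}+2\mathbb E[\|\nabla f(\tilde x^{r-1})\|^2]$; this yields a clean term $4\eta^2(1-\beta)^2\big(\mathcal E_{r-1}+\mathbb E[\|\nabla f(\tilde x^{r-1})\|^2]\big)$, whose coefficient is precisely what the $(1-\beta)^2$ piece of the hypothesis will control. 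Note that the DP noise does not enter $\Xi_r$: the perturbation $\delta_i^{r,0}=\rho\,\tilde g^r/\|\tilde g^r\|$ has fixed norm $\rho$, so by $L$-smoothness (Assumption~\ref{asp:smooth}) $\|\nabla f_i(\tilde x_i^{r,0})-\nabla f_i(\tilde x^r)\|\le L\rho$ holds deterministically and the noisy direction contributes only a bounded correction absorbed into the gradient-norm terms.

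The main work is the gradient part $a_r$, for which I would set up a recursion across rounds. Comparing $\nabla f_i(\tilde x_i^{r,0})$ with $\nabla f_i(\tilde x_i^{r-1,0})$ by $L$-smoothness, the displacement is governed by the global step $\tilde x^r-\tilde x^{r-1}=-\gamma\tilde g^r$ (plus the bounded change of the unit perturbation), and $\|\tilde g^r\|^2$ is in turn controlled by the aggregated local updates of round $r-1$, i.e.\ by $a_{r-1}$ together with the error and full-gradient norms. This produces a recursion of the form $a_r\le(1+c\,\beta^2\gamma^2L^2)\,a_{r-1}+(\text{small})\big(\mathcal E_{r-1}+\mathbb E[\|\nabla f(\tilde x^{r-1})\|^2]\big)$, anchored at $a_0=G_0$ (since $g^0=0$ forces a zero perturbation in the first round). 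Unrolling, the multiplicative factor $(1+c\,\beta^2\gamma^2L^2)^r$ is bounded by $e$ exactly when the $e(\beta\gamma LR)^2$ piece of the hypothesis holds; the homogeneous part then sums to the $2\eta^2\beta^2 e\,R\,G_0$ term, while the accumulated inhomogeneous part feeds the remaining share of $\tfrac1{72eK^2L^2}\sum(\mathcal E_r+\|\nabla f\|^2)$.

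Finally I would sum the per-round estimate over $r=0,\dots,R-1$, shift the index on the $(\mathcal E_{r-1}+\mathbb E[\|\nabla f(\tilde x^{r-1})\|^2])$ terms to recover the stated range $r=-1,\dots,R-2$ (using $x^{-1}:=x^0$ and the defined $\mathcal E_{-1}$), and collapse the prefactors: the momentum coefficient $4\eta^2(1-\beta)^2$ is charged to the $(1-\beta)^2$ term and the accumulated gradient coefficient to the $e(\beta\gamma LR)^2$ term, so that the hypothesis $288e(\eta KL)^2\big((1-\beta)^2+e(\beta\gamma LR)^2\big)\le1$ caps their sum at exactly $\tfrac1{72eK^2L^2}$. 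The hard part is closing the recursion for $a_r$ without circularity---bounding the per-round global displacement $\gamma\tilde g^r$ by first-step-drift and gradient quantities of the previous round and controlling the change of the normalized perturbation---and then tracking the constants tightly enough that the exponential-growth factor lands at $e$ and the two pieces of the hypothesis align with the momentum and gradient contributions.
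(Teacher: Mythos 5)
Your outline follows the paper's proof almost step for step: the same splitting $\Xi_r\le 2\eta^2(1-\beta)^2\mathbb E[\|g^r\|^2]+2\eta^2\beta^2 a_r$, the same bound $\mathbb E[\|g^r\|^2]\le 2\mathcal E_{r-1}+2\mathbb E[\|\nabla f(\tilde x^{r-1})\|^2]$, a cross-round recursion for $a_r$ anchored at $G_0$, unrolling to an $e$ factor, and collapsing the two coefficients $4\eta^2(1-\beta)^2$ and $4e\eta^2(\beta\gamma LR)^2$ against $\tfrac{1}{72eK^2L^2}$ via the hypothesis. Where you diverge is in the recursion itself, and the paper's version is both simpler and free of the circularity you flag as ``the hard part.'' The displacement does not need to be controlled by $a_{r-1}$: since $\tilde x^{r}-\tilde x^{r-1}=-\gamma \tilde g^{r}$, one has directly $\mathbb E[\|\tilde x^{r}-\tilde x^{r-1}\|^2]\le 2\gamma^2\bigl(\mathcal E_{r-1}+\mathbb E[\|\nabla f(\tilde x^{r-1})\|^2]\bigr)$, so the recursion is purely
$$
\mathbb E[\|\nabla f_i(\tilde x^{r})\|^2]\le (1+q)\,\mathbb E[\|\nabla f_i(\tilde x^{r-1})\|^2]+2(1+q^{-1})\gamma^2L^2\bigl(\mathcal E_{r-1}+\mathbb E[\|\nabla f(\tilde x^{r-1})\|^2]\bigr),
$$
with a \emph{free} Young parameter $q$; choosing $q=1/r$ gives $(1+q)^r\le e$ unconditionally, and the $(\gamma LR)^2$ dependence then appears in the accumulated inhomogeneous term $2e(r+1)\gamma^2L^2\sum_{j<r}(\cdots)$ after summing over $r$ --- it is charged to the $e(\beta\gamma LR)^2$ piece of the hypothesis only at the very last step, not used to keep the growth factor at $e$. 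Your proposed multiplier $(1+c\beta^2\gamma^2L^2)$ would force $(1+1/q^{-1})\approx 1/(c\beta^2\gamma^2L^2)$ on the inhomogeneous term, inflating it by $1/\beta^2$ and making the final constant accounting considerably more delicate than you suggest; with the paper's parameterization the constants fall out mechanically, since $4\eta^2\bigl((1-\beta)^2+e(\beta\gamma LR)^2\bigr)\le \tfrac{1}{72eK^2L^2}$ is exactly the stated hypothesis divided by $72eK^2L^2$.
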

\begin{proof}
 Note that $\zeta_i^{r, 0}=-\eta\left((1-\beta) g^r+\beta \nabla f_i\left(\tilde{x}^{r}\right)\right)$,
$$
\frac{1}{N} \sum_{i=1}^N\left\|\zeta_i^{r, 0}\right\|^2 \leq 2 \eta^2\left((1-\beta)^2\left\|g^r\right\|^2+\beta^2 \frac{1}{N} \sum_{i=1}^N\left\|\nabla f_i\left(x^{r}+\delta_{i}^{r,k}\right)\right\|^2\right)
$$
Using Young's inequality, we have for any $q>0$ that
$$
\begin{aligned}
	\mathbb{E}\left[\left\|\nabla f_i\left(\tilde{x}^{r}\right)\right\|^2\right] & \leq(1+q) \mathbb{E}\left[\left\|\nabla f_i\left(\tilde{x}^{r-1}\right)\right\|^2\right]+\left(1+q^{-1}\right) L^2 \mathbb{E}\left[\left\|\tilde{x}^{r}-\tilde{x}^{r-1}\right\|^2\right] \\
	& \leq(1+q) \mathbb{E}\left[\left\|\nabla f_i\left(\tilde{x}^{r-1}\right)\right\|^2\right]+2\left(1+q^{-1}\right) \gamma^2 L^2\left(\mathcal{E}_{r-1}+\mathbb{E}\left[\left\|\nabla f\left(\tilde{x}^{r-1}\right)\right\|^2\right]\right) \\
	& \leq(1+q)^r \mathbb{E}\left[\left\|\nabla f_i\left(\tilde{x}^0\right)\right\|^2\right]+\frac{2}{q} \gamma^2 L^2 \sum_{j=0}^{r-1}\left(\mathcal{E}_j+\mathbb{E}\left[\left\|\nabla f\left(\tilde{x}^j\right)\right\|^2\right)(1+q)^{r-j}\right.
\end{aligned}
$$
Take $q=\frac{1}{r}$ and we have

\begin{equation}\label{eqn:vninbvsdvds}
\mathbb{E}\left[\left\|\nabla f_i\left(\tilde{x}^{r}\right)\right\|^2\right] \leq e \mathbb{E}\left[\left\|\nabla f_i\left(\tilde{x}^0\right)\right\|^2\right]+2 e(r+1) \gamma^2 L^2 \sum_{j=0}^{r-1}\left(\mathcal{E}_j+\mathbb{E}\left[\left\|\nabla f\left(\tilde{x}^j\right)\right\|^2\right)\right.
\end{equation}
Note that this inequality is valid for $r=0$. Therefore, using \eqref{eqn:vninbvsdvds}, we have
$$
\begin{aligned}
	\sum_{r=0}^{R-1} \Xi_r \leq & \sum_{r=0}^{R-1} 2 \eta^2 \mathbb{E}\left[(1-\beta)^2\left\|g^r\right\|^2+\beta^2 \frac{1}{N} \sum_{i=1}^N\left\|\nabla f_i\left(\tilde{x}^{r}\right)\right\|^2\right] \\
	\leq & \sum_{r=0}^{R-1} 2 \eta^2\left(2(1-\beta)^2\left(\mathcal{E}_{r-1}+\mathbb{E}\left[\left\|\nabla f\left(\tilde{x}^{r-1}\right)\right\|^2\right]\right)+\beta^2 \frac{1}{N} \sum_{i=1}^N \mathbb{E}\left[\left\|\nabla f_i\left(\tilde{x}^{r}\right)\right\|^2\right]\right) \\
	\leq & \sum_{r=0}^{R-1} 4 \eta^2(1-\beta)^2\left(\mathcal{E}_{r-1}+\mathbb{E}\left[\left\|\nabla f\left(\tilde{x}^{r-1}\right)\right\|^2\right]\right) \\
	& +2 \eta^2 \beta^2 \sum_{r=0}^{R-1}\left(\frac{e}{N} \sum_{i=1}^N \mathbb{E}\left[\left\|\nabla f_i\left(\tilde{x}^0\right)\right\|^2\right]+2 e(r+1)(\gamma L)^2 \sum_{j=0}^{r-1}\left(\mathcal{E}_j+\mathbb{E}\left[\left\|\nabla f\left(\tilde{x}^j\right)\right\|^2\right]\right)\right) \\
	\leq & 4 \eta^2(1-\beta)^2 \sum_{r=0}^{R-1}\left(\mathcal{E}_{r-1}+\mathbb{E}\left[\left\|\nabla f\left(\tilde{x}^{r-1}\right)\right\|^2\right]\right) \\
	& +2 \eta^2 \beta^2\left(e R G_0+2 e(\gamma L R)^2 \sum_{r=0}^{R-2}\left(\mathcal{E}_r+\mathbb{E}\left[\left\|\nabla f\left(\tilde{x}^{r}\right)\right\|^2\right]\right)\right)
\end{aligned}
$$
Rearranging the equation and applying the upper bound of $\eta$ completes the proof.
\end{proof}
\begin{theorem}[Convergence for non-convex functions]	
	Under Assumption \ref{asp:smooth} and \ref{asp:sgd_var} and \ref{asp:clip} , if we take $g^0=0$,
	$$
	\begin{aligned}
		& \beta=\min \left\{, \sqrt{\frac{S K L \Delta}{\sigma_l^2 R}}\right\} \text { for any constant } c \in(0,1], \quad \gamma=\min \left\{\frac{1}{24 L}, \frac{\beta}{6 L}\right\}, \\
		& \eta K L \lesssim \min \left\{1, \frac{1}{\beta \gamma L R},\left(\frac{L \Delta}{G_0 \beta^3 R}\right)^{1 / 2}, \frac{1}{(\beta N)^{1 / 2}}, \frac{1}{\left(\beta^3 N K\right)^{1 / 4}}\right\}
	\end{aligned}
	$$
	then DP-FedPGN converges as
	
	$$
	\frac{1}{R} \sum_{r=0}^{R-1} \mathbb{E}\left[\left\|\nabla f\left(\tilde{x}^{r}\right)\right\|^2\right] \lesssim \sqrt{\frac{L \Delta \sigma_l^2}{S K R}}+\frac{L \Delta}{R} .
	$$

	Here $G_0:=\frac{1}{N} \sum_{i=1}^N\left\|\nabla f_i\left(\tilde{x}^0\right)\right\|^2$.
\end{theorem}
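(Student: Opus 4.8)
The plan is to fuse the five technical lemmas of the appendix into a single Lyapunov recursion and then absorb all self-referential terms using the prescribed step-size ceilings. First I would start from the descent estimate (\cref{lem:Fedavg_grad_err}), which under $\gamma L \le 1/24$ gives $\mathbb{E}[f(\tilde{x}^{r+1})] \le \mathbb{E}[f(\tilde{x}^{r})] - \tfrac{11\gamma}{24}\mathbb{E}[\|\nabla f(\tilde{x}^{r})\|^2] + \tfrac{13\gamma}{24}\mathcal{E}_r + \tfrac{13\gamma}{24}\tfrac{\sigma^2 C^2}{S^2}$. Telescoping over $r=0,\dots,R-1$ and using $f(\tilde{x}^0)-f^\star=\Delta$ yields $\tfrac{11}{24}\sum_r \mathbb{E}[\|\nabla f(\tilde{x}^{r})\|^2] \le \tfrac{\Delta}{\gamma} + \tfrac{13}{24}\sum_r \mathcal{E}_r + \tfrac{13}{24}R\tfrac{\sigma^2 C^2}{S^2}$, so the whole task reduces to controlling the accumulated gradient-tracking error $\sum_r \mathcal{E}_r$ and the privacy noise term.

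Next I would bound $\sum_r \mathcal{E}_r$ through the contraction recursion of \cref{lem:Fedavg_client_drift}: since $\mathcal{E}_r$ contracts by the factor $(1-\tfrac{8\beta}{9})$ with source terms $\tfrac{4\gamma^2 L^2}{\beta}\mathbb{E}[\|\nabla f(\tilde{x}^{r-1})\|^2]$, $\tfrac{2\beta^2\sigma_l^2}{SK}$, and $4\beta L^2 U_r$, summing the geometric recursion gives $\sum_r \mathcal{E}_r \lesssim \tfrac{1}{\beta}\mathcal{E}_{-1} + \tfrac{\gamma^2 L^2}{\beta^2}\sum_r \mathbb{E}[\|\nabla f(\tilde{x}^{r})\|^2] + \tfrac{\beta\sigma_l^2 R}{SK} + L^2\sum_r U_r$. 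I would then insert the client-drift bound $U_r \le 2eK^2\Xi_r + K\eta^2\beta^2\sigma_l^2(1+2K^3L^2\eta^2\beta^2)$ and, in turn, the consensus bound of \cref{lem:Fedavg_grad_norm}, which re-expresses $\sum_r \Xi_r$ in terms of $\sum_r(\mathcal{E}_r + \mathbb{E}[\|\nabla f(\tilde{x}^{r})\|^2])$ plus the initialization term $2\eta^2\beta^2 e R G_0$.

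The main obstacle is the circular dependence this manufactures: $\mathcal{E}_r$ is bounded via $U_r$, which is bounded via $\Xi_r$, which is in turn bounded back by $\mathcal{E}_r$ and $\|\nabla f\|^2$. The heart of the argument is to verify that, under the stated ceilings on $\eta K L$ (in particular the condition $288e(\eta KL)^2((1-\beta)^2 + e(\beta\gamma LR)^2)\le 1$ together with the extra $(\beta N)^{-1/2}$ and $(\beta^3 NK)^{-1/4}$ bounds), the coefficients multiplying $\sum_r \mathbb{E}[\|\nabla f\|^2]$ and $\sum_r \mathcal{E}_r$ on the right are strictly smaller than their left-hand counterparts, so that both accumulated quantities can be absorbed into the left side. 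This constant bookkeeping is the delicate step; it is precisely what forces the numerical factors $11/24$, $13/24$, $8\beta/9$, and the $72eK^2L^2$ denominator to line up.

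After absorption I expect to be left with $\tfrac{1}{R}\sum_r \mathbb{E}[\|\nabla f(\tilde{x}^{r})\|^2] \lesssim \tfrac{\Delta}{\gamma R} + \tfrac{\beta\sigma_l^2}{SK} + \tfrac{\sigma^2 C^2}{S^2}$, the residual $G_0$ and $\mathcal{E}_{-1}$ contributions being held to lower order by the $(L\Delta/(G_0\beta^3 R))^{1/2}$ and $\tfrac{1}{\beta}\mathcal{E}_{-1}$ terms together with $g^0=0$. Finally I would substitute $\gamma=\min\{\tfrac{1}{24L},\tfrac{\beta}{6L}\}$ and $\beta=\min\{1,\sqrt{SKL\Delta/(\sigma_l^2 R)}\}$: when the minimum defining $\beta$ is interior, both $\tfrac{\Delta}{\gamma R}\asymp\tfrac{L\Delta}{\beta R}$ and $\tfrac{\beta\sigma_l^2}{SK}$ collapse to $\sqrt{L\Delta\sigma_l^2/(SKR)}$, whereas when $\beta=1$ the leading term becomes $L\Delta/R$, which delivers the claimed bound $\mathcal{O}(\sqrt{L\Delta\sigma_l^2/(SKR)} + L\Delta/R + \sigma^2C^2/S^2)$.
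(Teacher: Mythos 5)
Your proposal is correct and follows essentially the same route as the paper's own proof: telescoping the descent lemma, summing the contraction recursion for $\mathcal{E}_r$, closing the circular dependence among $\mathcal{E}_r$, $U_r$, and $\Xi_r$ via the stated ceilings on $\eta K L$ and $\gamma L$, and finally substituting the choices of $\beta$ and $\gamma$ together with $\mathcal{E}_{-1}\leq 2L\Delta$ from $g^0=0$. The only difference is cosmetic ordering (you telescope the descent inequality before bounding $\sum_r\mathcal{E}_r$, whereas the paper bounds $\sum_r\mathcal{E}_r$ first), which does not change the argument.
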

\begin{proof}
 Combining Lemma \ref{lem:Fedavg_grad_err} and \ref{lem:Fedavg_client_drift}, we have

$$
\begin{aligned}
	\mathcal{E}_r \leq & \left(1-\frac{8 \beta}{9}\right) \mathcal{E}_{r-1}+4 \frac{(\gamma L)^2}{\beta} \mathbb{E}\left[\left\|\nabla f\left(\tilde{x}^{r-1}\right)\right\|^2\right]+\frac{2 \beta^2 \sigma_l^2}{S K} \\
	& +4 \beta L^2\left(2 e K^2 \Xi_r+K \eta^2 \beta^2 \sigma_l^2\left(1+2 K^3 L^2 \eta^2 \beta^2\right)\right.
\end{aligned}
$$
and
$$
\mathcal{E}_0 \leq(1-\beta) \mathcal{E}_{-1}+\frac{2 \beta^2 \sigma_l^2}{S K}+4 \beta L^2\left(2 e K^2 \Xi_0+K \eta^2 \beta^2 \sigma_l^2\left(1+2 K^3 L^2 \eta^2 \beta^2\right)\right) .
$$
Summing over $r$ from 0 to $R-1$ and applying Lemma \ref{lem:Fedavg_grad_norm},
$$
\begin{aligned}
	\sum_{r=0}^{R-1} \mathcal{E}_r \leq & \left(1-\frac{8 \beta}{9}\right) \sum_{r=-1}^{R-2} \mathcal{E}_r+4 \frac{(\gamma L)^2}{\beta} \sum_{r=0}^{R-2} \mathbb{E}\left[\left\|\nabla f\left(\tilde{x}^{r}\right)\right\|^2\right]+2 \frac{\beta^2 \sigma_l^2}{S K} R \\
	& +4 \beta L^2\left(2 e K^2 \sum_{r=0}^{R-1} \Xi_r+R K \eta^2 \beta^2 \sigma_l^2\left(1+2 K^3 L^2 \eta^2 \beta^2\right)\right) \\
	\leq & \left(1-\frac{7 \beta}{9}\right) \sum_{r=-1}^{R-2} \mathcal{E}_r+\left(4 \frac{(\gamma L)^2}{\beta}+\frac{\beta}{9}\right) \sum_{r=-1}^{R-2} \mathbb{E}\left[\left\|\nabla f\left(\tilde{x}^{r}\right)\right\|^2\right]+16 \beta^3(e \eta K L)^2 R G_0 \\
	& +\frac{2 \beta^2 \sigma_l^2}{S K} R+4 \beta^3(\eta K L)^2\left(\frac{1}{K}+2(\eta K L \beta)^2\right) \sigma_l^2 R \\
	\leq & \left(1-\frac{7 \beta}{9}\right) \sum_{r=-1}^{R-2} \mathcal{E}_r+\frac{2 \beta}{9} \sum_{r=-1}^{R-2} \mathbb{E}\left[\left\|\nabla f\left(\tilde{x}^{r}\right)\right\|^2\right]+16 \beta^3(e \eta K L)^2 R G_0+\frac{4 \beta^2 \sigma_l^2}{S K} R
\end{aligned}
$$
Here in the last inequality we apply
$$
4 \beta(\eta K L)^2\left(\frac{1}{K}+2(\eta K L \beta)^2\right) \leq \frac{2}{N K} \quad \text { and } \quad \gamma L \leq \frac{\beta}{6} .
$$
Therefore,
$$
\sum_{r=0}^{R-1} \mathcal{E}_r \leq \frac{9}{7 \beta} \mathcal{E}_{-1}+\frac{2}{7} \mathbb{E}\left[\sum_{r=-1}^{R-2}\left\|\nabla f\left(\tilde{x}^{r}\right)\right\|^2\right]+\frac{144}{7}(e \beta \eta K L)^2 G_0 R+\frac{36 \beta \sigma_l^2}{7 S K} R .
$$
Combine this inequality with Lemma \ref{lem:Fedavg_grad_err} and we get
$$
\frac{1}{\gamma} \mathbb{E}\left[f\left(\tilde{x}^{r}\right)-f\left(\tilde{x}^0\right)\right] \leq-\frac{1}{7} \sum_{r=0}^{R-1} \mathbb{E}\left[\left\|\nabla f\left(\tilde{x}^{r}\right)\right\|^2\right]+\frac{39}{56 \beta} \mathcal{E}_{-1}+\frac{78}{7}(e \beta \eta K L)^2 G_0 R+\frac{39 \beta \sigma_l^2}{14 S K} R +\frac{13}{24}\frac{\sigma^2 C^2}{S^2}R
$$
Finally, noticing that $g^0=0$ implies $\mathcal{E}_{-1} \leq 2 L\left(f\left(\tilde{x}^0\right)-f^*\right)=2 L \Delta$, we obtain
$$
\begin{aligned}
	\frac{1}{R} \sum_{r=0}^{R-1} \mathbb{E}\left[\left\|\nabla f\left(\tilde{x}^{r}\right)\right\|^2\right] & \lesssim \frac{L \Delta}{\gamma L R}+\frac{\mathcal{E}_{-1}}{\beta R}+(\beta \eta K L)^2 G_0+\frac{\beta \sigma_l^2}{S K}+\frac{\sigma^2 C^2}{S^2} \\
	& \lesssim \frac{L \Delta}{R}+\frac{L \Delta}{\beta R}+\frac{\beta \sigma_l^2}{S K}+(\beta \eta K L)^2 G_0 +\frac{\sigma^2 C^2}{S^2}\\
	& \lesssim \frac{L \Delta}{R}+\sqrt{\frac{L \Delta \sigma_l^2}{S K R}}+\frac{\sigma^2 C^2}{S^2}
\end{aligned}
$$
\end{proof}




\end{document}